\title{A Confidence Interval for the $\ell_2$ Expected Calibration Error}
\author{Yan Sun$^{1}$, Pratik Chaudhari$^{2}$, Ian J. Barnett$^{2}$, and Edgar Dobriban$^{2}$\footnote{Correspondence to YS (\texttt{yan.sun@njit.edu}) and ED (\texttt{dobriban@wharton.upenn.edu}).} \\
\small $^{1}$ New Jersey Institute of Technology, \, $^{2}$University of Pennsylvania
}
\begin{document}

\maketitle
\begin{abstract}
Recent advances in machine learning have significantly improved prediction accuracy in various applications. However, ensuring the calibration of probabilistic predictions remains a significant challenge. Despite efforts to enhance model calibration, the rigorous statistical evaluation of model calibration remains less explored. 
In this work, we develop confidence intervals {for} the $\ell_2$ Expected Calibration Error (ECE). We consider top-1-to-$k$ calibration, which includes both the popular notion of confidence calibration as well as full calibration.
For a debiased estimator of the ECE, we show asymptotic normality, but with different convergence rates and asymptotic variances for calibrated and miscalibrated models. 
We develop methods to construct asymptotically valid confidence intervals for the ECE, accounting for this behavior as well as non-negativity. 
Our theoretical findings are supported through extensive experiments, showing that our methods produce valid confidence intervals with shorter lengths compared to those obtained by resampling-based methods.
\end{abstract}

\renewcommand{\baselinestretch}{0.5}\normalsize
\tableofcontents
\bigskip
\renewcommand{\baselinestretch}{1.0}\normalsize

\section{Introduction}

The recent development of sophisticated machine learning methods, 
such as deep neural networks, 
has dramatically improved prediction accuracy on a range of problems. 
As a result, machine learning methods are increasingly used in safety-critical applications, such as self-driving cars  \citep{bojarski2016end} and medical diagnosis  \citep{esteva2017dermatologist}. In these applications, machine learning methods may provide probabilistic forecasts for classification tasks, and the probability outputs of a model can be important for decision-making. 
In these cases, machine learning methods should not only be accurate, but also have \emph{calibrated} probability predictions.

Calibration provides a formal guarantee to ensure the predicted probabilities are meaningful and reliable. Take, for example, a binary classification predictor 
$f$, which inputs a feature 
$X$ and outputs 
$f(X)$, the probability that the corresponding label $Y$ is one (``1"). The model $f$ is considered calibrated if, for any given probability $p \in [0, 1]$, $\EE{Y | f(X) = p} = p$. This could mean, for instance, 
{if it is predicted that a particular type of patient has 10\% chance of heart disease, then on average, 10\% of such patients will actually have heart disease.}

This form of calibration is recognized as an important aspect of probabilistic forecasts in many applications \citep{murphy1984probability, bjerregaabd1978measurement, guo2017calibration, minderer2021revisiting, gneiting2007probabilistic,gneiting2014probabilistic}. Unfortunately, many popular modern deep learning models are reported to be poorly calibrated \citep{guo2017calibration}, prompting a significant body of research focused on enhancing the calibration of machine learning methods \citep{guo2017calibration, kull2019beyond, kumar2018trainable, mukhoti2020calibrating}. Despite the considerable effort dedicated to improving and quantifying model calibration, the rigorous statistical evaluation of these model calibration metrics remains an under-explored area. 

The most widely used calibration measure is the $\ell_p$ Expected Calibration Error (ECE), for $p\ge 1$, defined as
\begin{equation*}
\label{ece}
\mathrm{ECE}_{p}:= \EE{\|\EE{Y-f(X)|f(X)}\|^p}^{\frac{1}{p}}.
\end{equation*}
To estimate $\mathrm{ECE}_{p}$ in practical settings, the support of 
$f(X)$ is typically divided into a predefined number of bins (e.g., 15 bins is widely used as in \cite{guo2017calibration}), and the mean of  $\|Y-f(X)\|^p$ within each bin is calculated, followed by an average across all bins. However, the estimator on a finite dataset can be both biased \citep{kumar2019verified} and highly variable \citep{tao2023benchmark}. 
Therefore, constructing a confidence interval for the calibration error would be an important step towards a thorough evaluation of model calibration.

A confidence interval for the calibration can be used in many ways.
Since a high accuracy does not necessarily imply calibration, a confidence interval can be a diagnostic tool that tells us---with confidence---when a model is miscalibrated; and thus its probabilistic predictions themselves are not trustworthy. 
On the flip side, if a model provider aims to argue that their model's probabilistic predictions are trustworthy, they can use calibration as a proxy. Since calibration can be viewed as a yes-no property (calibrated or not), computing the empirical expected calibration error may not be enough, as this number will likely be nonzero on any finite dataset. 
Our confidence interval can be used in this setting. If it touches zero, then the model provider can argue that their model is indistinguishable from being calibrated (based on the given test data).

In this work, we focus on the problem of evaluating top-1-to-$k$ calibration in $K$-class classification, i.e., the calibration of the largest $k$ ($k\le K$) predicted probabilities. 
This metric encompasses both the widely used confidence calibration when $k=1$ \citep{guo2017calibration} and the strongest form, multi-class calibration, when $k=K$.  
We make the following contributions: 

\begin{itemize}
\item We propose a debiased estimator for $\ell_2$ ECE and establish its asymptotic normality {under suitable smoothness and binning conditions.} We show that the convergence rate and asymptotic variance of the estimator differ for calibrated and mis-calibrated models. Using the limiting distribution of the estimator, we develop a confidence interval for $\ell_2$ ECE that accounts for different limiting distributions and the non-negativity of $\ell_2$ ECE. Our theoretical results show that these confidence intervals provide asymptotically correct coverage rates {for small $k$}.

\item We validate our theoretical results through extensive experiments. Simulations support our theoretical findings on the coverage rate of the confidence intervals. Compared to general methods for constructing confidence sets, such as subsampling \citep{politis1994large} and the HulC \citep{kuchibhotla2024hulc}, our method produces intervals with shorter lengths. We also show that the bootstrap undercovers in finite samples, especially for models with a small calibration error. While resampling methods are commonly used for constructing error bars in calibration studies \citep[e.g.,][]{harrell2015regression, vaicenavicius2019evaluating}, our results indicate they may not be reliable. Our method offers an easy-to-use approach for constructing valid confidence intervals.
We also conduct experiments on the benchmark CIFAR10 and CIFAR100 datasets with popular deep learning architectures (e.g. ResNet \citep{he2016deep}) and an experiment on Alzheimer's disease prediction. Our methods can provide a quantitative evaluation of whether the calibration of two models is significantly different.

\item Our proofs include the following innovations:
\begin{enumerate}
    \item To analyze our debiased estimator, we address the sum of dependent random variables across different bins. Deriving the limit law of the estimator requires a Poissonization argument. We specify the conditions needed to obtain the limit law of the conditional distribution 
    (see Lemma 2.3 in the supplementary material),
    filling gaps in existing analyses \citep[see e.g.,][]{beirlant1998asymptotic, gyorfi2015asymptotic}.
    \item We extend the existing Poissonization argument to handle non-zero correlations between the components of the Poissonized variables 
    (Lemma 2.4 in the supplementary material). 
    This novel technical tool may be useful beyond our paper.
\end{enumerate}
\end{itemize}

Our experiments can be reproduced with the code in the supplementary material.

\begin{figure}
    \centering
\begin{tikzpicture}[scale=5]

\tikzstyle{inputnode} = [circle, minimum width=0.6cm, minimum height=0.6cm, text centered, draw=black, fill=green!50]
\tikzstyle{hiddennode} = [circle, minimum width=0.6cm, minimum height=0.6cm, text centered, draw=black, fill=orange!50]
\tikzstyle{outputnode} = [rectangle, rounded corners, minimum width=0.6cm, minimum height=0.6cm,text centered, draw=black, fill=purple!50]
\tikzstyle{arrow} = [thick,->,>=stealth]

\draw[->, thick,] (1.3, 1.5) -- (1.3, 1.2);
\node[right] at (1.35, 1.35){Evaluate Calibration};
\node[right] at (0,1.8){Features: $X^{(1)},\dots, X^{(n)}$};
\node[right] at (1.7,1.8){$Z^{(1)} = f\bigl(X^{(1)}\bigr),\dots, Z^{(n)} = f\bigl(X^{(n)}\bigr)$};
\node[right] at (1.7,2.0){Labels: $Y^{(1)},\dots, Y^{(n)}$};
\node[right] at (1.2, 2.3){$f$};

\node(X1)[inputnode] at (1,2.0){};
\node(X2)[inputnode,below of=X1]{};
\node(X3)[inputnode,below of=X2]{};

\node(Y1)[hiddennode,right of=X1,yshift=0.5cm] {};
\node(Y2)[hiddennode,right of=X2,yshift=0.5cm]{};
\node(Y3)[hiddennode,right of=X3,yshift=0.5cm]{};
\node(Y4)[hiddennode,right of=X3,yshift=-0.3cm]{};

\node(Z1)[hiddennode,right of=Y1]{};
\node(Z2)[hiddennode,right of=Y2]{};
\node(Z3)[hiddennode,right of=Y3]{};
\node(Z4)[hiddennode,right of=Y4]{};

\node(U1)[outputnode,right of=Z1,yshift=-0.5cm]{};
\node(U2)[outputnode,right of=Z2,yshift=-0.5cm]{};
\node(U3)[outputnode,right of=Z3,yshift=-0.5cm]{};

\draw[->](X1)--(Y1);
\draw[->](X1)--(Y2);
\draw[->](X1)--(Y3);
\draw[->](X1)--(Y4);

\draw[->](X2)--(Y1);
\draw[->](X2)--(Y2);
\draw[->](X2)--(Y3);
\draw[->](X2)--(Y4);

\draw[->](X3)--(Y1);
\draw[->](X3)--(Y2);
\draw[->](X3)--(Y3);
\draw[->](X3)--(Y4);

\draw[->](Y1)--(Z1);
\draw[->](Y1)--(Z2);
\draw[->](Y1)--(Z3);
\draw[->](Y1)--(Z4);

\draw[->](Y2)--(Z1);
\draw[->](Y2)--(Z2);
\draw[->](Y2)--(Z3);
\draw[->](Y2)--(Z4);

\draw[->](Y3)--(Z1);
\draw[->](Y3)--(Z2);
\draw[->](Y3)--(Z3);
\draw[->](Y3)--(Z4);

\draw[->](Y4)--(Z1);
\draw[->](Y4)--(Z2);
\draw[->](Y4)--(Z3);
\draw[->](Y4)--(Z4);

 \draw[->](Z1)--(U1);
 \draw[->](Z1)--(U2);
 \draw[->](Z1)--(U3);
 
 \draw[->](Z2)--(U1);
 \draw[->](Z2)--(U2);
 \draw[->](Z2)--(U3);

 \draw[->](Z3)--(U1);
 \draw[->](Z3)--(U2);
 \draw[->](Z3)--(U3);

 \draw[->](Z4)--(U1);
 \draw[->](Z4)--(U2);
 \draw[->](Z4)--(U3);

\node[right] at (0.3, 1.0) {Estimate {\bf Top-1-to-$k$ Calibration Error}:};
\node[right] at (0.3,0.85) {$T_{m,n} = \frac1n\sum_{
\substack{1 \le  i \le  \ell_{m,n}\\ |\mathcal{I}_{m,n, i}| \ge  2}} 
\frac{1}{|\mathcal{I}_{m, n, i}| - 1} \sum_{a\neq b \in \mathcal{I}_{m, n, i}} \Bigl(Y^{(a)}_{r_{1:k}} - Z^{(a)}_{(1:k)}\Bigr)^{\top}\Bigl(Y^{(b)}_{r_{1:k}} - Z^{(b)}_{(1:k)}\Bigr)$};

\node[right, rectangle, rounded corners, minimum width=9cm, minimum height=2.5cm,text centered, draw=red] at (1.28,0.33) {};
\node[right] at (1.3, 0.5) {{\bf Asymptotic Normality of ${ T_{m,n}}$:}};
\node[right] at (1.3, 0.35) {$\mathrm{ECE}_{1:k} = 0: \, \, n\sqrt{w}T_{m,n}/\sigma_{0}  \xrightarrow{d} \N(0, 1)$};
\node[right] at (1.3, 0.2) {$\mathrm{ECE}_{1:k} \neq 0: \, \, \sqrt{n}(T_{m,n} - \EE{T_{m,n}}) / \sigma_{1} \xrightarrow{d} \N(0, 1)$};

\draw[->, thick](2, 0.05)--(2, -0.1);

\draw[->] (0,0) -- (1.2,0) node[right] {$z_1$};
\draw[->] (0,0) -- (0,1.2) node[above] {$z_2$};

\draw (0,1) -- (1,0);

\fill[gray!20] (0.4,0) -- (1,0) -- (0.5, 0.5) -- (0.2,0.2) -- (0.4,0) -- cycle;

\node[right] at (0.75,0.25) {$\Delta(K,k)$};
\node[below left] at (0,0) {$0$};
\node[below right] at (1,0) {$1$};
\node[above left] at (0,1) {$1$};

\node[right] at (0.43,0.58) {$\mathcal{I}_{m,n, i}: \{j: Z^{(j)}_{(1:k)} \in B_i\}$};
\draw[->] (0.63,0.53) -- (0.55, 0.35);

\draw[dashed] (0.3,0.1) -- (0.3, 0.3);
\draw[dashed] (0.4,0.0) -- (0.4, 0.4);
\draw[dashed] (0.5,0.0) -- (0.5, 0.5);
\draw[dashed] (0.6,0.0) -- (0.6, 0.4);
\draw[dashed] (0.7,0.0) -- (0.7, 0.3);
\draw[dashed] (0.8,0.0) -- (0.8, 0.2);
\draw[dashed] (0.9,0.0) -- (0.9, 0.1);

\draw[dashed] (0.4,0.4) -- (0.6, 0.4);
\draw[dashed] (0.3,0.3) -- (0.7, 0.3);
\draw[dashed] (0.2,0.2) -- (0.8, 0.2);
\draw[dashed] (0.3,0.1) -- (0.9, 0.1);

\draw[dashed] (0,0) -- (0.5, 0.5);

\draw[dashed] (0,0.4) node[left] {$\frac{k}{K}$} -- (0.4,0) node[below] {$\frac{k}{K}$};


\node[right, rectangle, rounded corners, minimum width=18cm, minimum height=4.5cm,text centered, draw=red] at (-0.4,-0.6) {};
\node[right] at (-0.4, -0.25){{\bf Adjusted Confidence Interval ${ C_{m,n}}$:}};
\node[right] at (-0.4, -0.5){
    $
    C_{m, n, 1} = 
    \begin{cases}
         [T_{m,n}^{+} - z_{\alpha/2} \hat\sigma_1/\sqrt{n} ,\quad T_{m,n}^{+} + z_{\alpha/2} \hat\sigma_1/\sqrt{n}], & \text{if} \quad T_{m,n}^{+} / 2 \le  T_{m,n}^{+} - z_{\alpha/2} \hat\sigma_1/\sqrt{n}, \\ 
        [\max\{0, T_{m,n}^{+} - z_{\alpha}\hat\sigma_1/\sqrt{n}\}, \quad T_{m,n}^{+} + z_{\alpha/2} \hat\sigma_1/\sqrt{n}] \setminus \{0\}, & \text{if} \quad T_{m,n}^{+} - z_{\alpha} \hat\sigma_1/\sqrt{n} < T_{m,n}^{+} / 2, \\
         [T_{m,n}^{+} / 2, \quad T_{m,n}^{+} + z_{\alpha/2} \hat\sigma_1/\sqrt{n}], & 
        \text{otherwise}.
    \end{cases}
    $
};
\node[right] at (-0.4,-0.75){$C_{m,n} = C_{m,n,1}\cup \{0\}, \quad \text{if} \quad T_{m,n}^{+} < {z_{\alpha} \sigma_0}/{(n\sqrt{\textnormal{Vol}(B_1)})}$};

\node[right] at (-0.4,-0.9){
{\bf Asymptotically Exact Coverage}: $P(\mathrm{ECE}_{1:k}^2 \in C_{m,n}) \rightarrow 1-\alpha$ 
};

\draw[->, thick](1.5, -1.05)--(1.5, -1.2);


\draw[thick, ->, color=black] (0.2, -1.5) -- (2.7, -1.5);
\node[right] at (1.4, -1.6) {Calibration Error};

\draw[thick, color=green] (0.65, -1.2) -- (1.6, -1.2);
\node[color=green, right] at (1.0, -1.1) {Model 1};
\node[circle, fill=green!50] at (1.125, -1.2){};
\draw[thick, color=green, -] (0.65, -1.15) -- (0.65, -1.25);
\draw[thick, color=green, -] (1.6, -1.15) -- (1.6, -1.25);
\node[color=orange] at ( 0.5,-1.3) {Model 2};
\draw[thick, color=orange] (0.3, -1.4) -- (0.8, -1.4);

\node[circle, fill=orange!50] at (0.4, -1.4){};
\draw[thick, color=orange, -] (0.3, -1.35) -- (0.3, -1.45);
\draw[thick, color=orange, -] (0.8, -1.35) -- (0.8, -1.45);

\draw[thick, color=purple] (1.3, -1.4) -- (2.3, -1.4);
\node[circle, fill=purple!50] at (1.8, -1.4){};
\draw[thick, color=purple, -] (1.3, -1.35) -- (1.3, -1.45);
\draw[thick, color=purple, -] (2.3, -1.35) -- (2.3, -1.45);

\draw[dashed] (0.3, -1.7) -- (0.3, -1.2);
\node at (0.25, -1.55){0};

\node[right, color=purple] at ( 1.6,-1.25) {Model 3 $C_{m,n}$};
\draw [decorate, decoration={brace, amplitude=0.5em}] (1.3, -1.33) -- (2.3, -1.33);
\draw [->] (2.3, -1.55)--(1.8, -1.4);
\node[right, color=purple] at ( 2.2,-1.6) {Model 3 $T_{m,n}$};

\end{tikzpicture}
    \caption{An overview of our method, with {\bf our contributions highlighted in red boxes}. For a probability predictor $f$, we compute the debiased estimator of the top-1-to-$k$ calibration error $T_{m,n}$ \eqref{debiased_estimator} and construct confidence intervals $C_{m,n}$ \eqref{aci}. Our theoretical results guarantee that the confidence interval provides asymptotically valid coverage for $\mathrm{ECE}_{1:k}^2$,and our simulations support these theoretical findings (see Figure \ref{simulation_coverage} for details).}

\end{figure}

\subsection{Related Work}

There is a substantial body of related work on evaluating the calibration of prediction methods, constructing confidence intervals, and deriving limit laws for nonparametric estimators. Here we review the most closely related studies. {Additional discussion can be found in Section 1.2 in the Supplementary Material}.

\paragraph{Calibration.} The origins of calibration for classification can be traced back {at least} to meteorology research in the early 1900s \citep{hallenbeck1920forecasting}, where forecasts were expressed as probabilities and assessed against empirical observations. Since then, calibration has been recognized as a crucial aspect of probabilistic predictions in various applications, including weather forecasting \citep{murphy1998early}, medical diagnosis \citep{van2015calibration,van2016calibration,van2019calibration}, image classification \citep{minderer2021revisiting}, and natural language processing \cite{jiang2021can,huang2024uncertainty}.

\paragraph{Calibration Metrics.} Various metrics have been used to evaluate model calibration. Proper scoring rules \cite{gneiting2007strictly}, such as the Brier score \citep{brier1950verification} and negative log-likelihood \citep{winkler1996scoring}, define loss functions between the predictive and true distributions. At the population level, these metrics are minimized when the predicted distribution matches the true distribution.
Recently, \cite{guo2017calibration} highlighted the mis-calibration issue of modern neural networks and the Expected Calibration Error (ECE) became the most widely used metric for calibration, hence our focus in this work. 

Variants of ECE include the confidence calibration error \citep{guo2017calibration}, class-wise calibration error, and multi-class calibration error \citep{kull2019beyond}. In practice, the ECE is typically estimated using a fixed binning scheme, but it can be biased and sensitive to the choice of bins \citep{kumar2019verified, nixon2019measuring, tao2023benchmark}. To address this, \cite{zhang2020mix} proposed a kernel density estimation of ECE without relying on binning, \cite{kumar2019verified} introduced a debiased ECE estimator for probability predictors with a finite number of outputs, and \cite{lee2023t} proposed a debiased estimator for multi-class calibration error, developing a hypothesis test framework to detect whether a model is significantly mis-calibrated. 
We extend the debiased estimator proposed in \cite{lee2023t} to top-1-to-$k$ calibration error and investigate its limiting distribution to construct a confidence interval.\footnote{{A detailed comparison of our method and the hypothesis testing approach proposed in \cite{lee2023t} is given in Section 1.2 in the supplementary material. The key difference is that we establish the limiting distribution of the ECE, which enables confidence intervals. This was not studied in \cite{lee2023t}.}}

Other than ECE, \cite{kumar2018trainable} introduced a kernel calibration error based on maximum mean discrepancy, \cite{blasiok2023unifying} proposed defining the distance from calibration as the $\ell_1$ distance to the closest calibrated predictor, and \cite{gupta2020calibration} developed calibration metrics based on the Kolmogorov-Smirnov test.

\paragraph{Limit Laws of Nonparametric Estimators.} 
Studying the limit law of the binned ECE estimator requires a Poissonization argument. 
Similar approaches have been used in \cite{beirlant1994asymptotic,beirlant1998asymptotic} to derive the asymptotic distribution of $\ell_2$ errors in histogram function estimation. The key step is a partial inversion argument for obtaining characteristic functions of conditional distributions, which goes back at least to \cite{bartlett1938characteristic}, see also \cite{holst1979two,esty1983normal,holst1979asymptotic}, among others.

\paragraph{Constructing Confidence Intervals.} In this work, we construct valid confidence intervals for the $\ell_2$ ECE by studying the estimator's asymptotic distribution. There are generic techniques for constructing confidence intervals, such as resampling-based methods like the bootstrap \citep{efron1979bootstrap} and subsampling \citep{politis1994large}. The bootstrap requires that the estimator be Hadamard differentiable \citep{dumbgen1993nondifferentiable}, while subsampling requires knowing the rate of convergence of the estimator. More recently, generic approaches have been developed to construct valid confidence intervals with minimal regularity conditions, such as universal inference \citep{wasserman2020universal} and the HulC \citep{kuchibhotla2024hulc}, which use sample-splitting techniques.
{The universal inference framework is based on likelihood ratio statistics and therefore requires specification of an underlying likelihood. It is not applicable to our problem as we do not have a likelihood model for the inferential target ECE.}  
Compared to {the HulC}, for the problem of confidence intervals for ECE, our method is computationally more efficient (no resampling required) and produces confidence intervals with shorter lengths. In particular, our method produces shorter intervals because it relies on the asymptotic distribution instead of sample-splitting, which would effectively reduces the sample size, leading to wider intervals.


\subsection{Notation}
For a positive integer $d$, we denote $[d]:=\{1,\ldots,d\}$.
$ \Cov X$ denotes the covariance matrix of a random vector $X$ and the support of $X$ is denoted by $\supp(X)$.
For $p\ge 1$, a positive integer $d$ 
and a vector $v\in \R^d$ we write $\|v\|_p:=\bigl(\sum_{j=1}^d |v_j|^p\bigr)^{1/p}$ and
$\|v\|_\infty:=\max_{j=1}^d |v_j|$;
further $\|v\| = \|v\|_2$.
Denote by $I(\cdot)$ the indicator of an event.
We write $X_n\xrightarrow{d}X$ if 
a sequence of random variables $(X_n)_{n\ge 1}$
converges in distribution to a random variable $X$,
and $X_n\rightarrow_{p}c$ if 
a sequence of random variables $(X_n)_{n\ge 1}$
converges in probability to a constant $c$.
For $\alpha \in (0,1)$, let $z_\alpha$ be the $(1-\alpha)$th quantile of the standard normal distribution.
For two positive sequences $(a_n)_{n\ge 1}$ and $(b_n)_{n\ge 1}$, we write $a_n = O(b_n)$ if for some $C>0$ that does not depend on $n$, we have $a_n \le Cb_n$, and write 
$a_n = o(b_n)$ or $a_n \ll  b_n$ if $a_n/b_n\to 0$ as $n \to \infty$. 
We also write $a_n \asymp b_n$ if $0 < \lim\inf_n a_n / b_n < \lim\sup a_n / b_n < \infty$.

\section{Formulation}
\label{form}


For $K\ge 2$,
consider a $K$-class classification problem with inputs $X\in\mX$ and one-hot encoded labels $Y \in \mathcal{Y}:=\{(Y_1, \dots, Y_K)^\top \in \{0,1\}^{K}: \sum_{j=1}^{K} Y_j = 1\}$. 
Thus, for any $j\in[K]$,
$Y_j=1$ if and only if $j$ is the correct class.
Let $\Delta_{K-1}:=\{(w_1, \dots, w_K)^\top \in [0,1]^{K}: \sum_{j=1}^{K} w_j = 1\}$
be the simplex of probability distributions over $K$ classes.
We consider a pre-trained model (probabilistic classifier) $f:\mX\to \Delta_{K-1}$, which for every input $X\in \mX$ outputs
predicted probabilities $Z = f(X)\in \Delta_{K-1}$ for the $K$ classes. Suppose we are given a calibration data set $\{(X^{(1)}, Y^{(1)}),\dots,(X^{(n)}, Y^{(n)})\}$ that is independent of the data used to construct the model $f$. For all $i\in[n]$, we define the predicted probabilities $Z^{(i)} = f(X^{(i)}) \in \Delta_{K-1}$,

\subsection{Motivation}
{
To better contextualize our method in the existing literature, we begin with the binary classification setting and relate calibration error estimation to the classical goodness-of-fit problem.
For binary classification (\(K=2\)), estimation of the \(\ell_2\) expected calibration error is closely related to goodness-of-fit testing. Recall that calibration requires $\mathrm{ECE}^2=\mathbb{E}\!\left[\left\|\mathbb{E}[Y-Z\mid Z]\right\|^2\right]=0$. Conditioning on a fixed model prediction \(Z\), let
$O_{1,Z}=\sum_{f(X^{(j)})=Z}\mathbf{1}\{Y^{(j)}=(1,0)^\top\},O_{2,Z}=\sum_{f(X^{(j)})=Z}\mathbf{1}\{Y^{(j)}=(0,1)^\top\}$ be the observed counts for the two labels, and let \(N_Z = O_{1,Z}+O_{2,Z}\) be the total number of observations with prediction \(Z\). The corresponding expected counts are $E_{1,Z}=N_Z Z_1, E_{2,Z}=N_Z Z_2$.
Classical goodness-of-fit tests compare observed and expected counts. For example, Pearson's chi-square statistic \citep{pearson1900} takes the form
\[
\frac{(O_{1,Z}-E_{1,Z})^2}{E_{1,Z}}
+
\frac{(O_{2,Z}-E_{2,Z})^2}{E_{2,Z}}.
\]
In our setting, a closely related quantity is the plug-in estimator of $\left\|\mathbb{E}[Y-Z\mid Z]\right\|^2$:
\[
\left\|
\frac{1}{N_Z}\sum_{f(X^{(j)})=Z}\bigl(Y^{(j)}-Z^{(j)}\bigr)
\right\|^2 
=\frac{(O_{1,Z}-E_{1,Z})^2}{N_Z^2} + \frac{(O_{2,Z}-E_{2,Z})^2}{N_Z^2}.
\]
Calibration requires the observed counts \(O_{1,Z},O_{2,Z}\) to be close to the expected counts \(E_{1,Z},E_{2,Z}\) for all \(Z\in\Delta_1\). In practice, however, a classifier may output essentially arbitrary values in \(\Delta_1\), and it is rare for two observations to have exactly the same predicted probability vector. Hence, instead of conditioning on an exact value of \(Z\), we need to partition the simplex \(\Delta_1\) into bins $\{B_1,\dots,B_m\}$ and aggregate within bins. Let $\mathcal{I}_i=\{j: Z^{(j)}\in B_i\}$ denote the set of indices in bin \(i\). Define the observed counts
$
O_{1,i}
=
\sum_{j\in\mathcal{I}_i}
\mathbf{1}\{Y^{(j)}=(1,0)^\top\},\,\,
O_{2,i}
=
\sum_{j\in\mathcal{I}_i}
\mathbf{1}\{Y^{(j)}=(0,1)^\top\},
$
and the corresponding expected counts
$
E_{1,i}
=
\sum_{j\in\mathcal{I}_i} Z^{(j)}_1,\,\,
E_{2,i}
=
\sum_{j\in\mathcal{I}_i} Z^{(j)}_2.
$
The Hosmer--Lemeshow-type goodness-of-fit test \citep{hosmer1997comparison} for logistic regression is based on the statistic
\begin{equation}
\label{Hosmer_Lemeshow_statistic}
T_{\mathrm{HL}}
=
\sum_{i=1}^m
\frac{(O_{1,i}-E_{1,i})^2}{E_{1,i}}
+
\frac{(O_{2,i}-E_{2,i})^2}{E_{2,i}}.
\end{equation}
}

{
If we instead replace the denominators \(E_{1,i}\) and \(E_{2,i}\) by \(|\mathcal{I}_i|=E_{1,i}+E_{2,i}\), and normalize the estimator by \(n\), we obtain an plug-in estimator of the \(\ell_2\) ECE:
\begin{equation}
\label{Hosmer_Lemeshow_version}
T_m
=
\frac{1}{n}\sum_{i=1}^m
\frac{(O_{1,i}-E_{1,i})^2+(O_{2,i}-E_{2,i})^2}{|\mathcal{I}_i|}
=
\sum_{i=1}^m
\frac{|\mathcal{I}_i|}{n}
\left\|
\frac{1}{|\mathcal{I}_i|}
\sum_{j\in\mathcal{I}_i}
\bigl(Y^{(j)}-Z^{(j)}\bigr)
\right\|^2.
\end{equation}
where $|\mathcal{I}_i|/n$ estimates the probability $\mathbb{P}(Z\in B_i)$.
}

{
In what follows, we extend this perspective from binary classification to top-1-to-\(k\) predicted probabilities in the multi-class setting, and we propose a debiased estimator of the \(\ell_2\) ECE that debiases
the within-bin plug-in estimator $\|\sum_{j\in\mathcal{I}_i}
\bigl(Y^{(j)}-Z^{(j)}\bigr) / |\mathcal{I}_i| \|^2$ through a \(U\)-statistic-type approach. For large $n$ and a fixed number of bins $m$, the Hosmer--Lemeshow statistic \eqref{Hosmer_Lemeshow_statistic} is approximately distributed as a chi-square random variable with \(m-2\) degrees of freedom. In contrast, we study the regime in which the number of bins also goes to infinity, and establish the asymptotic distribution of the resulting debiased estimator.
}

\subsection{Top-1-to-k Calibration Error}
We consider evaluating the top-1-to-$k$ calibration of the model $f$ for some $1\le  k \le  K$. 
For any $a\in[K]$, 
let $f(X)_{(a)}$ be the $a$-th largest entry of $f(X)$,
and let $f(X)_{(1:k)}$ denote the largest $k$ entries and
sorted in a non-increasing order; breaking ties arbitrarily if needed. 
Let $r_{1:k} = (r_1, \dots, r_k)^\top$ be the corresponding classes with the largest $k$ predicted probabilities, and let $Y_{r_{1:k}} = (Y_{r_1}, \dots, Y_{r_k})^\top$ be the associated label indicators. 
Let $P$ denote the distribution of $(Z,Y)$, and let $P_Z$ denote the distribution of $Z$.

Then a model is top-$1$-to-$k$ calibrated if 
$P_Z$-almost-surely,

\begin{equation}
\label{top_1_to_k_calibration}
\EE{Y_{r_{1:k}}| f(X)_{(1:k)} = Z_{(1:k)}} = Z_{(1:k)}.
\end{equation}

Equivalently, for almost every top-$k$ predicted probabilities $z_{(1:k)}$ with respect to the distribution of $Z_{(1:k)}$, given that the predicted probabilities of the top $k$ classes are $z_{(1:k)}$, the probabilities that  $Y_j=1$ are given by $z_j$, for all $j$ among the top $k$ predicted probabilities.
When $k = 1$, definition \eqref{top_1_to_k_calibration} recovers the widely used notion of confidence calibration
 \citep{guo2017calibration}. 
When $k = K$, it becomes multi-class or full calibration  \citep[see e.g.,][etc]{harrell2015regression, kull2019beyond}. 

\begin{figure}
    \centering
    
\begin{tikzpicture}[scale=3]
    \draw[->] (0,0) -- (1.2,0) node[right] {$z_1$};
    \draw[->] (0,0) -- (0,1.2) node[above] {$z_2$};
    
    \draw (0,1) -- (1,0);
    
    \fill[gray!20] (0.4,0) -- (1,0) -- (0.5, 0.5) -- (0.2,0.2) -- (0.4,0) -- cycle;
    
    \node at (0.5,0.2) {$\Delta(K,k)$};
    \node[below left] at (0,0) {$0$};
    \node[below right] at (1,0) {$1$};
    \node[above left] at (0,1) {$1$};

    \draw[dashed] (0,0) -- (1, 1);
    \node at (1.1,0.8) {$z_1 = z_2$};
    
    \draw[dashed] (0,0.4) node[left] {$\frac{2}{K}$} -- (0.4,0) node[below] {$\frac{2}{K}$};
\end{tikzpicture}
    \caption{Illustration of the truncated Weyl chamber $\Delta(K,k)$ for $K=5$ and $k=2$. The shaded region represents the set of vectors $(z_1, z_2)$ satisfying $\frac{k}{K} \le  z_1 + z_2 \le  1, \, \, z_1 \ge  z_2$.}
    \label{fig:Ck}
\end{figure}

The top-1-to-$k$ (squared) $\ell_2$ Expected Calibration Error (ECE) is the measure of mis-calibration defined as
\begin{equation}
\label{top_1_to_k_ece}
\mathrm{ECE}_{1:k}^2:= \EE{\bigl\|\EE{Y_{r_{1:k}}-Z_{(1:k)}|Z_{(1:k)}}\bigr\|^2},
\end{equation}

where we recall that  $\|\cdot\| = \|\cdot\|_2$ is the $\ell_2$ norm, and
where dependence on $f$ is not displayed for simplicity.
We can see that $\mathrm{ECE}_{1:k}^2\ge0$, and moreover $f$ is top-1-to-$k$-calibrated---so that \eqref{top_1_to_k_calibration}  holds---if and only if $\mathrm{ECE}_{1:k}^2=0$. In what follows, we will treat $k, K$ as fixed.

\subsection{Estimating the Calibration Error}
\label{part+estim}
{Similar to \eqref{Hosmer_Lemeshow_version}, estimating the top-1-to-$k$ $\ell_2$ ECE requires partition of the space of predictions. This section describes how to partition the space of $Z_{(1:k)}$ and introduces our debiased estimator.}
The top-$k$ predicted probabilities  $f(x)_{(1:k)}$ are sorted in a 
{non-increasing order}, and thus for all $x$ belong to a special set we call
the \emph{truncated Weyl chamber}
\begin{equation}\label{Deck}
\Delta(K, k) :=
\biggl\{(z_1, \dots, z_k):\, z_1 \ge z_2\ge \cdots \ge  z_k \ge 0, \,\, \frac{k}{K} \le  \sum_{i=1}^{k} z_i \le  1  \biggr\}.
\end{equation}
This is obtained by cutting off the tip of the intersection of the simplex with the Weyl chamber, see \Cref{fig:Ck}.
To evaluate model calibration, we define a partition of a superset of this set,
whose size depends on a hyperparameter $m>0$ that we will set later, and construct an estimator where the regression functions appearing in \eqref{top_1_to_k_ece} are estimated by a constant over each bin. 

We now define a collection of sets
$\mathcal{B}_m = \{B_1, \dots, B_{\ell_{m,n}}\}$, 
where $\ell_{m,n}>0$. 
For $k = K$, we let $\mathcal{B}_m$ be an equal-volume binning scheme for the probability simplex $\Delta(K, K) = \Delta_{K-1}$ 
which partitions $\Delta_{K-1}$ into $m^{K-1}$ equal volume simplices is introduced in Section 8.2.3 of \cite{lee2023t}. 
For $k < K$, we instead
start by considering a cubic partition of $\mathbb{R}^{k}$. 
For a positive integer $m \in \mathbb{N}_{>0}$
and a $k$-vector of integers
$\vec{i} = (i_1, \dots, i_k)^\top$, 
let $R_{m, \vec{i}} := \prod_{j=1}^{k} [\frac{i_j}{m K}, \frac{i_j + 1}{m K})$ be a hypercube in $\mathbb{R}^{k}$. 
We take the sets
$R_{m, \vec{i}}$
for which 
$R_{m, \vec{i}} \cap \Delta(K,k) $ is nonempty  to form a collection of sets (also referred to as a partition, despite not partitioning $\Delta(K,k)$)
$\mathcal{B}_m = \{B_1, \dots, B_{\ell_{m,n}}\}$, 
where $\ell_{m,n}>0$ 
is the size of the partition. 
These sets have equal volumes, are disjoint, and their union covers $\Delta(K,k)$.
As $m$ grows, their union approximates $\Delta(K,k)$ more and more closely.
Further, by considering the cases $k=K$ and $k<K$ separately, we see that there exists a constant $c>0$ such that $\ell_{m,n} \le  c m^{\min(k,K-1)}$.

Recall that we are given a calibration data set $\{(X^{(1)}, Y^{(1)}),\dots,(X^{(n)}, Y^{(n)})\}$ 
that is independent of the data used to construct the model $f$.
For all $i\in[n]$, we define the predicted probabilities $Z^{(i)} = f(X^{(i)}) \in \Delta_{K-1}$, and
let $U^{(i)} = Y^{(i)}_{r_{1:k}} - Z^{(i)}_{(1:k)}$ be the difference between the top probability predictions and associated labels.
Further, for $i\in[\ell_{m,n}]$,
we let $\mathcal{I}_{m,n,i} := \{j: Z^{(j)}_{(1:k)} \in B_i, 1\le  j \le  n\}$ be the indices of data points in $B_i$.
By extending and slightly modifying the methods of \cite{lee2023t} from full calibration to top-1-to-$k$ calibration, we study the following  
estimator\footnote{{This estimator extends the method of \cite{lee2023t} from full calibration to top-1-to-$k$ calibration.
Compared to the statistic proposed by \cite{lee2023t}, we replace the factor $1/(n|\mathcal{I}_{m,  i}|)$ 
by $1/[n\bigl(|\mathcal{I}_{m,  i}| - 1\bigr)]$ for each bin, 
As shown in Section 2.6 of the supplementary material, our estimator has a smaller bias.
See also \cite{popordanoska2022consistent,gruber2025optimizing} for other estimators of the squared calibration error.}} of $\mathrm{ECE}_{1:k}^2$:
\begin{equation}
\label{debiased_estimator}
    T_{m} = \frac1n\sum_{\substack{1 \le  i \le  \ell_{m},\, |\mathcal{I}_{m, i}| \ge  2}} \frac{1}{|\mathcal{I}_{m,  i}| - 1} \sum_{a\neq b \in \mathcal{I}_{m,  i}} U^{(a)\top}U^{(b)}.
\end{equation}

{Intuitively, this estimator first partitions the features space into the sets $B_i$.
Then, within each set, it estimates the 
conditional expectation
$\bigl\|\EE{Y_{r_{1:k}}-Z_{(1:k)}|Z_{(1:k)\in B_i }}\bigr\|^2
$
by a U-statistic.
Each $U^{(a)} = Y^{(a)}_{r_{1:k}} - Z^{(a)}_{(1:k)}$, $a \in \mathcal{I}_{m,i}$ 
is a random draw from the distribution $P_{i}$ of $U = (Y_{r_{1:k}}-Z_{(1:k)})\mid Z_{(1:k)\in B_i }$.
To estimate $\|\E U\|^2$, we use the $U$-statistic
$
\frac{1}{(|\mathcal{I}_{m,  i}| - 1)|\mathcal{I}_{m,  i}|} \sum_{a\neq b \in \mathcal{I}_{m,  i}} U^{(a)\top}U^{(b)}.
$
Taking the average over all bins $B_i$, weighted by the estimates $|\mathcal{I}_{m,  i}|/n$ of their probabilities $P(B_i)$,
leads to the above estimator.
To gain further insight, we provide a discussion of connections to
nonparametric quadratic functional estimation in Section 1.1 in the Supplementary Material.}

\subsection{Confidence Interval}
\label{ci}

To develop a confidence interval for the calibration error, we will study the asymptotic distribution of the estimator  $T_{m,n}$ given in \eqref{debiased_estimator}. Typically,
given $\alpha\in (0,1)$,
this process involves the following steps:
\begin{itemize}
    \item Establishing a central limit theorem of the form $ \sqrt{n}(T_{m,n} - \EE{T_{m,n}}) / \sigma \xrightarrow{d} \N(0, 1)$, for some $\sigma>0$. 
    \item Developing a consistent variance estimator $\hat{\sigma} \rightarrow_{p} \sigma$.
    \item Constructing a confidence interval using $[T_{m,n} - z_{\alpha/2} \hat\sigma/\sqrt{n} ,\quad T_{m,n} + z_{\alpha/2} \hat\sigma/\sqrt{n}]$.
\end{itemize}
However, it turns out that the asymptotic distribution of the estimator \eqref{debiased_estimator} 
behaves \emph{differently} for calibrated and mis-calibrated models, and this must be considered in the confidence interval. 
Additionally, this standard approach might lead to confidence intervals that cover \emph{negative values}, which are not meaningful since the calibration error is non-negative. 
In this section, we propose an algorithm to construct confidence intervals for $\mathrm{ECE}_{1:k}^2$, 
accounting for the different behaviors of $T_{m,n}$ and non-negativity of $\mathrm{ECE}_{1:k}^2$. The theoretical justification for our proposed approach is provided in Section \ref{theory_section}.

\paragraph{Different scalings of the estimator:}
To build intuition, consider for a moment the much simpler problem of forming a confidence interval for 
the scalar parameter $\theta^2$ (which shares a rough similarity with $\mathrm{ECE}_{1:k}^2$ in terms of being a quadratic parameter), 
based on a sequence of normal observations $X_n\sim \N(\theta,1/n)$, $n\ge 1$, which can be thought of as taking the mean of an i.i.d.~sample (see e.g., Section 3 of \cite{kuchibhotla2024hulc} for similar problems).
We may attempt to form a confidence interval using the estimator $X_n^2$.
Now, when $\theta=0$,
we have $X_n^2\sim \chi^2(1)/n$, 
but when  $\theta\neq0$,
we have by the delta method that
$n^{1/2} (X_n^2-\theta^2)\xrightarrow{d}\N(0,2)$ as $n\to\infty$.
This shows that the limiting distribution, and even the \emph{scaling}, of the estimator $X_n^2$ differs between the two cases ($X_n^2$ has a $1/n$ scaling when $\theta=0$ and an $1/\sqrt{n}$ scaling when $\theta>0$). 
Similarly, for our calibration problem, $T_{m,n}$ also has different limiting distributions for calibrated and mis-calibrated models.

\paragraph{Limiting distributions:} Under mild regularity conditions, we can establish the following limiting distributions for  $T_{m,n}$: 
\begin{itemize}
    \item If $f$ is top-1-to-$k$ calibrated, i.e., equation \eqref{top_1_to_k_calibration} holds and $\mathrm{ECE}_{1:k}^2=0$, then with $w := \text{Vol}(B_1) $, and
    \begin{equation*}
    \sigma_0^2  := 2\int_{\Delta(K, k)} \bigl(\|Z_{(1:k)}\|_2^2 - 2 \|Z_{(1:k)}\|_3^3 + \|Z_{(1:k)}\|_2^4\bigr) dZ_{(1:k)},
    \end{equation*}
    we have $n\sqrt{w}T_{m,n}/\sigma_{0}  \xrightarrow{d} \N(0, 1)$.
    \item Let $U = Y_{r_{1:k}} - Z_{(1:k)} \in \R^k$ be the difference between the top probability predictions and the associated labels. If $f$ is mis-calibrated, with
    \begin{equation*}
    \sigma_1^2 := \Var{\|\EE{U|Z_{(1:k)}}\|^2} + 4\EE{\EE{U^{\top}|Z_{(1:k)}}\Cov{U|Z_{(1:k)}}\EE{U|Z_{(1:k)}}},    
    \end{equation*}
    we have $\sqrt{n}(T_{m,n} - \EE{T_{m,n}}) / \sigma_{1} \xrightarrow{d} \N(0, 1)$.
\end{itemize}
The scaling factor and asymptotic variance of $T_{m,n}$ differ for calibrated and mis-calibrated models.
In particular, $\sigma_0^2$ only depends on the support of $Z_{(1:k)}$. To construct confidence intervals for the ECE using the limiting distribution of the estimator \eqref{debiased_estimator}, we need to estimate the asymptotic variance $\sigma_1^2$.

\paragraph{Estimator of $\sigma_1^2$: } We propose to estimate $\sigma_1^2$ using a plug-in estimator.
For $i\in[\ell_{m,n}]$,
let define the plug-in estimators of $\EE{U| Z_{(1:k)} \in \mathcal{I}_{m,n, i}}$ and $\Cov{U | Z_{(1:k)} \in \mathcal{I}_{m,n, i}}$, respectively as in in \eqref{pi}. 
We define an estimator of $\sigma_1^2$ as in \eqref{sigma1_plug_in_estimator}.
We will show that $\hat{\sigma}_1^2$ is a consistent estimator of $\sigma_1^2$, i.e. $\hat{\sigma}_1^2 \rightarrow_{p} \sigma_1^2$, which implies $\sqrt{n}(T_{m,n} - \EE{T_{m,n}}) / \hat\sigma_{1} \xrightarrow{d} \N(0, 1)$. 
This ensures the asymptotic validity of the standard confidence interval $[T_{m,n} - z_{\alpha/2} \hat\sigma/\sqrt{n} ,\quad T_{m,n} + z_{\alpha/2} \hat\sigma/\sqrt{n}]$ for mis-calibrated models. 
However, the confidence interval may cover negative values, which---as mentioned above---are not meaningful since $\mathrm{ECE}_{1:k}^2$ is non-negative.
Further, this interval does not account for the different behavior of the calibrated model. Next, we provide an adjusted confidence interval to address these issues.

\paragraph{Adjusted confidence intervals:}
While one could simply truncate the confidence interval to contain non-negative values in an \emph{ad hoc} manner, 
we follow a more principled approach that explicitly takes into account non-negativity at the time of interval construction.
We following the general approach for bounded parameters from \cite{wu2012adjusted}.
We define the positive part estimator $T_{m,n}^{+} = \max\{T_{m,n}, 0\}$
and
$T^* = \EE{T_{m,n}^+}$. 
We construct a confidence interval by inverting hypothesis tests $T^*=t^*$, for each $t^*\ge 0$, 
i.e., collecting $t^*$ such that we fail to reject the null hypothesis $T^{*} = t^*$.
Equivalently, these are the $t^*$ values such that 
$P(|T^{+} - t^*| > |t_{m,n}^{+} - t^*|) > \alpha$, 
where the probability is taken only over the random variable $T^{+}$ that has the same distribution as $T_{m,n}^{+}$, and $t_{m,n}^{+}$ is a fixed realization of $T_{m,n}^{+}$.
Following \cite{wu2012adjusted},
this leads to the interval in \eqref{aci}.
Further, if $T_{m,n}^{+}$ is sufficiently small, we include zero in the interval, leading to 
\begin{equation}
\label{adjusted_KI}
C_{m, n} = 
\begin{cases}
C_{m, n, 1} \cup \{0\}, & \text{if}  \quad T_{m,n}^{+} <   z_{\alpha} \sigma_0/(n\sqrt{w}), \\
C_{m, n, 1}, & \text{otherwise}.
\end{cases}
\end{equation}

\begin{algorithm}
\SetAlgoLined
\KwIn{Calibration data set $\{(X^{(1)}, Y^{(1)}),\dots,(X^{(n)}, Y^{(n)})\}$, model $f$,
number $k$ of top classes for which to check calibration,
number of bins $\ell_{m,n}$, significance level $\alpha\in(0,1)$.}
\KwOut{Confidence interval $C_{m,n}$}

\textbf{Step 1: Partition.}

Record predicted probabilities   $Z^{(i)} = f(X^{(i)})$
and prediction errors
    $U^{(i)} = Y^{(i)}_{r_{1:k}} - Z^{(i)}_{(1:k)}$, $i\in [n]$.\;

For the partition $\{B_1, \dots, B_{\ell_{m,n}}\}$ from \eqref{Deck} from \Cref{part+estim},
define the indices of datapoints in each bin:   $\mathcal{I}_{m,n,i} = \{j: Z^{(j)}_{(1:k)} \in B_i, 1\le  j \le  n\}$, $i\in[\ell_{m,n}]$.\;

\textbf{Step 2: Compute debiased top-1-to-$k$ calibration error estimator.}


$  T_{m,n} = \frac1n\sum_{\substack{1 \le  i \le  \ell_{m,n},\, |\mathcal{I}_{m,n, i}| \ge  2}} \frac{1}{|\mathcal{I}_{m, n, i}| - 1} \sum_{a\neq b \in \mathcal{I}_{m, n, i}} U^{(a)\top}U^{(b)}$\;

\textbf{Step 3: Compute Variance Estimator.}

Compute the variance of a calibrated model: $\sigma_0^2 = 2\int_{\Delta(K,k)} (\|Z_{(1:k)}\|_2^2 - 2 \|Z_{(1:k)}\|_3^3 + \|Z_{(1:k)}\|_2^4) dZ_{(1:k)}$, \;

Compute the variance estimator of a mis-calibrated model:
\begin{equation}
\begin{split}
\label{sigma1_plug_in_estimator}
\hat{\sigma}_1^2 =&  \sum_{i=1}^{\ell_{m,n}} \frac{|\mathcal{I}_{m, n, i}|}{n} \Bigl\Vert\mathbb{E}_n[U]^{(i)}\Bigr\Vert^4 - \Biggl(\sum_{i=1}^{\ell_{m,n}} \frac{|\mathcal{I}_{m, n, i}|}{n} \Bigl\Vert\mathbb{E}_n[U]^{(i)}\Bigr\Vert^2 \Biggr)^2 \\
& + 4 \sum_{i=1}^{\ell_{m,n}} \frac{|\mathcal{I}_{m, n, i}|}{n} \mathbb{E}_n[U]^{(i) \top}\textnormal{Cov}_n[U]^{(i)} \mathbb{E}_n[U]^{(i)}.
\end{split}
\end{equation}
where 
\begin{equation}\label{pi}
\mathbb{E}_n[U]^{(i)} = \frac{1}{|\mathcal{I}_{m,n, i}|} \sum_{j \in \mathcal{I}_{m, n, i}} U^{(j)}, 
\textnormal{Cov}_n[U]^{(i)} = \frac{1}{|\mathcal{I}_{m,n, i}|} \sum_{j \in \mathcal{I}_{m, n, i}} (U^{(j) \top} U^{(j)} - \mathbb{E}_n[U]^{(i)}\mathbb{E}_n[U]^{(i)\top}).
\end{equation}\;

\textbf{Step 4: Construct Confidence Interval.}

Define positive part $T_{m,n}^{+} = \max\{T_{m,n}, 0\}$\;
and normal quantiles
$z_{\alpha/2} = \Phi^{-1}(1-\alpha/2)$,\;
$z_{\alpha} = \Phi^{-1}(1-\alpha)$.\;
Define adjusted positive confidence interval
\begin{equation}\label{aci}
C_{m, n, 1} = 
\begin{cases}
     [T_{m,n}^{+} - z_{\alpha/2} \hat\sigma_1/\sqrt{n} ,\quad T_{m,n}^{+} + z_{\alpha/2} \hat\sigma_1/\sqrt{n}], & \text{if} \quad T_{m,n}^{+} / 2 \le  T_{m,n}^{+} - z_{\alpha/2} \hat\sigma_1/\sqrt{n}, \\ 
    [\max\{0, T_{m,n}^{+} - z_{\alpha}\hat\sigma_1/\sqrt{n}\}, \quad T_{m,n}^{+} + z_{\alpha/2} \hat\sigma_1/\sqrt{n}], & \text{if} \quad T_{m,n}^{+} - z_{\alpha} \hat\sigma_1/\sqrt{n} < T_{m,n}^{+} / 2, \\
     [T_{m,n}^{+} / 2, \quad T_{m,n}^{+} + z_{\alpha/2} \hat\sigma_1/\sqrt{n}], & 
    \text{otherwise}.
\end{cases}
\end{equation}
Let $C_{m,n} = C_{m,n,1} \setminus \{0\}$, and if the estimator value is small enough that $T_{m,n}^{+} < {z_{\alpha} \sigma_0}/{(n\sqrt{\textnormal{Vol}(B_1)})}$, then 
include zero: $C_{m,n} = C_{m,n} \cup \{0\}$.\;

\Return{$C_{m,n}$}\;

\caption{Confidence Interval for the $\ell_2$ Expected Calibration Error}
\label{CI_algorithm}
\end{algorithm}

The full procedure to construct confidence intervals is summarized in Algorithm \ref{CI_algorithm}. In the study of calibration, it is common to obtain error bars using resampling methods (such as in \cite{vaicenavicius2019evaluating}). Our method is more computationally efficient as it does not require resampling or sample splitting.
In the next section, we also provide theoretical justification demonstrating that the confidence interval \eqref{adjusted_KI} is asymptotically valid.

We also mention that sometimes it is more interpretable to form a confidence interval for the calibration error $\mathrm{ECE}_{1:k}$ (rather that the squared error $\mathrm{ECE}_{1:k}^2$).
One can form a valid interval by taking the square roots of all values in $C_{m,n}$. since $C_{m,n}$ is an interval so that $C_{m,n} = [\gamma_{m,n,-},\gamma_{m,n,+}]$, we can take this to be 
$[\gamma_{m,n,-}^{1/2},\gamma_{m,n,+}^{1/2}]$.

\begin{remark}
[Binning Scheme]\label{bin}
In our theoretical analysis, we provide conditions for the binning scheme (Conditions \ref{ass_1} and \ref{ass_3}) to ensure the validity of the confidence interval \eqref{adjusted_KI}. 
In practice, a rule of thumb for the binning scheme can be to minimize the mean squared estimation error, which has the standard bias-variance decomposition $\EE{(T_{m,n} - \mathrm{ECE}_{1:k}^2)^2} = \EE{(T_{m,n} - \EE{T_{m,n}})^2} + \bigl(\EE{T_{m,n}} - \mathrm{ECE}_{1:k}^2\bigr)^2$. 
In Section 
2.6 in the supplementary material,
we show that the variance is of order 
$\EE{(T_{m,n} - \EE{T_{m,n}})^2} = O\left( m^{\min(k, K-1)} n^{-2}\right) + O\left( n^{-1} \right)$. 
Intuitively, 
the term $O\left( m^{\min(k, K-1)} n^{-2}\right)$ arises because $T_{m,n}$ involves a summation of $\ell_{m,n}=O\left( m^{\min(k, K-1)}\right)$ 
terms divided by $n$, and $O\left( n^{-1} \right)$ arises  from the central limit theorem.
We also show that the bias is of order $\bigl|\EE{T_{m,n}} - \mathrm{ECE}_{1:k}^2\bigr| = O\left( m^{-2s} \right)$, where $s$ is the H\"older smoothness parameter in Condition \ref{ass_1}. 
The bias results from the discrepancy between $\int_{B_i} \left\|\EE{U|Z_{(1:k)}}\right\|^2 d P_{Z_{(1:k)}} $ and $\bigl\| \int_{B_i}\EE{U|Z_{(1:k)}} d P_{Z_{(1:k)}} \bigr\|^2 / \mu(B_i)$ for each bin, where $\mu(B_i) = P(Z_{(1:k)} \in B_i)$ for all $i\le \ell_{m,n}$. 
Choosing $m$ to minimize the asymptotic order of the mean squared estimation error leads to $m \asymp n^{2 /(4s +  \min(k, K-1))}$.
In the special case that $k=K$, this coincides with the optimal binning scheme proposed in \cite{lee2023t}. 
\end{remark}

\section{Theoretical Analysis}
\label{theory_section}
\subsection{Asymptotic Normality}
In this section, we develop theoretical guarantees 
ensuring that Algorithm \ref{CI_algorithm} provides asymptotically valid confidence intervals.
We first introduce some conditions that are required to develop our theoretical results.
For $(Z,Y)\sim P$,
recall that $U = Y_{r_{1:k}} - Z_{(1:k)} \in \R^k$ is the difference between the labels associated with the top probability predictions and the respective predictions.
Condition \ref{ass_1} requires the underlying calibration curve $Z_{(1:k)} \mapsto\EE{U|Z_{(1:k)}}$ to behave regularly, which allows us to control the bias and variance of the binned estimator. 
\begin{condition}[H\"older smoothness]
\label{ass_1}
Each coordinate of the map $z_{(1:k)} \mapsto\EE{U|Z_{(1:k)}=z_{(1:k)}}$ is 
H\"older continuous with H\"older smoothness parameter $ 0 < s \le  1$ and H\"older  constant $L$, i.e., 
for all $z_{(1:k)}, z^{'}_{(1:k)} \in \Delta(K, k)$ and $j \in\{ 1,\dots, k\}$,
$$\Bigl|\EE{U\Bigl|Z_{(1:k)}=z_{(1:k)}}_j - \EE{U\Bigr|Z_{(1:k)}=z^{'}_{(1:k)}}_j \Bigr|\le  L \Bigl\|z_{(1:k)} - z^{'}_{(1:k)}\Bigr\|^s.$$ 
\end{condition}
For the purposes of theoretical analysis, 
we can index the partition elements such that 
for all $j\le \ell_{m,n}-1$,
$P(f(X)_{(1:k)} \in B_{j}) \ge  P(f(X)_{(1:k)} \in B_{j+1})$, as well as $P(f(X)_{(1:k)} \in B_{\ell_{m,n}}) > 0$. 
Our algorithm does not depend on the ordering of the partition elements, so this indexing is only used for the purpose of stating a required theoretical condition.
Condition \ref{ass_2} ensures that all bins contain a growing expected number of data points, but no bin is too large. 
\begin{condition}[Distribution Balance]
\label{ass_2}
We have that $\supp\left(Z_{(1:k)}\right) = \Delta(K, k)$, 
and as $m, n\rightarrow \infty$,
\begin{equation}\label{db}
n P(f(X)_{(1:k)} \in B_{\ell_{m,n}}) \rightarrow \infty,\,\,\, P(f(X)_{(1:k)} \in B_{1}) \rightarrow 0,\,\,\,
\ell_{m,n}\rightarrow \infty.
\end{equation}
\end{condition}

This is a mild condition that holds if the distribution of the predictions $Z=f(X)$  is sufficiently balanced over the simplex.
The condition $\supp\left(Z_{(1:k)}\right) = \Delta(K, k)$ ensures that the support of the distribution of $Z_{(1:k)}$ is the whole set $\Delta(K, k)$; this holds if $Z_{(1:k)}$ has a strictly positive density, even if the density is arbitrarily close to zero.
Recalling our binning scheme from Section \ref{part+estim}, the condition \eqref{db} holds if $Z_{(1:k)}$ has a density strictly bounded away from zero, and $m  = o(n^{{1}/{\min(k,K-1)}})$.
Comparing with the discussion from Remark \ref{bin},
$m  = o(n^{{1}/{\min(k,K-1)}})$ ensures that the term of order $m^{\min(k, K-1)} n^{-2}$ in the variance is negligible compared to the term of order $n^{-1}$.

Under these conditions, we can establish the asymptotic normality of the proposed estimator \eqref{debiased_estimator}. 
For a perfectly calibrated model, we have the following theorem.
Recall that we are using an equal volume partition, 
and $w= \text{Vol}(B_1)$ denotes the volume of each piece.
\begin{theorem}[Limiting distribution for a top-1-to-$k$ calibrated model]
\label{clt_calibrated_model}
If  $f$ is top-1-to-$k$ calibrated, i.e., equation \eqref{top_1_to_k_calibration} holds and $\mathrm{ECE}_{1:k}^2=0$,
then 
under Conditions \ref{ass_1} and \ref{ass_2}, with 
\begin{equation}
\label{var_calibrated}
\begin{split}
 \sigma_0^2  := 2\int_{\Delta(K, k)} \bigl(\|Z_{(1:k)}\|_2^2 - 2 \|Z_{(1:k)}\|_3^3 + \|Z_{(1:k)}\|_2^4\bigr) dZ_{(1:k)},
\end{split}
\end{equation}
as $n, m\rightarrow \infty$, we have
for $T_{m,n}$ from  \eqref{debiased_estimator} that
$n\sqrt{w}T_{m,n}/\sigma_{0}  \xrightarrow{d} \N(0, 1)$.
\end{theorem}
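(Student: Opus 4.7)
I would establish the theorem via a Poissonization-then-de-Poissonization argument. Introduce a fictitious sample size $N\sim\mathrm{Poisson}(n)$ and consider the Poissonized estimator $\tilde T_{m,n}=n^{-1}\sum_{i=1}^{\ell_{m,n}}W_i$, where $W_i$ denotes the per-bin contribution from~\eqref{debiased_estimator}. The crucial feature of this reduction is that, by the mapping theorem for Poisson processes, the per-bin counts $\tilde N_i$ are independent $\mathrm{Poisson}(np_i)$ with $p_i=P(Z_{(1:k)}\in B_i)$, and the data within each bin are conditionally i.i.d.\ from the restricted law; hence the $W_i$ are independent. A second stage then transfers the resulting CLT for $\tilde T_{m,n}$ back to the fixed-$n$ estimator $T_{m,n}$ via Lemma~\ref{de_poissonization_uncorelated}.

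\textbf{Moment bookkeeping under calibration.} Under $\mathrm{ECE}_{1:k}=0$ one has $\EE{U\mid Z_{(1:k)}}=0$, so $\EE{W_i}=0$ and hence $\EE{\tilde T_{m,n}}=0$. For the variance I would condition first on the $Z$'s and use the identity $\EE{(U^{(a)\top}U^{(b)})^2\mid Z^{(a)},Z^{(b)}}=\mathrm{tr}\bigl(\Sigma(Z^{(a)})\Sigma(Z^{(b)})\bigr)$, where $\Sigma(z)=\mathrm{diag}(z)-zz^\top$ is the conditional covariance of $Y_{r_{1:k}}$ under calibration (a short sub-multinomial computation using $\EE{Y_{r_i}Y_{r_j}\mid Z}=0$ for $i\ne j$). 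Combined with the Poisson identity $\EE{\tilde N_i/(\tilde N_i-1)\cdot I(\tilde N_i\ge 2)}\to 1$, justified by $np_i\to\infty$ from Condition~\ref{ass_2}, this yields $\mathrm{Var}(W_i)\approx 2\,\mathrm{tr}(\Sigma(z_i^\ast)^2)+o(1)$ for a representative $z_i^\ast\in B_i$. Summing over bins gives a Riemann sum of mesh $w$: $\sum_i\mathrm{Var}(W_i)\to (2/w)\int_{\Delta(K,k)}\mathrm{tr}(\Sigma(z)^2)\,dz=\sigma_0^2/w$, using the identity $\mathrm{tr}(\Sigma(z)^2)=\|z\|_2^2-2\|z\|_3^3+\|z\|_2^4$. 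The H\"older regularity from Condition~\ref{ass_1} controls the per-bin Riemann approximation error.

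\textbf{CLT and de-Poissonization.} Since the $W_i$ are independent, mean-zero, and have uniformly bounded higher moments (because $U$ lies in $[-1,1]^k$), a Lyapunov condition holds under $\min_i np_i\to\infty$, yielding $n\sqrt{w}\,\tilde T_{m,n}/\sigma_0\xrightarrow{d}\N(0,1)$. I would then apply Lemma~\ref{de_poissonization_uncorelated} to show the same limit holds for the conditional law of $\tilde T_{m,n}$ given $N=n$, which coincides with the law of $T_{m,n}$. The required joint CLT for the pair $(n\sqrt{w}\,\tilde T_{m,n}/\sigma_0,(N-n)/\sqrt n)$ with uncorrelated limiting marginals follows from $\EE{W_i(\tilde N_i-np_i)}=0$, a direct consequence of $\EE{W_i\mid\tilde N_i}=0$ under calibration.

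\textbf{Main obstacle.} The hardest step is the de-Poissonization itself: verifying the hypotheses of Lemma~\ref{de_poissonization_uncorelated} requires uniform characteristic-function estimates across all bins and joint asymptotic normality of $\tilde T_{m,n}$ with $N$, rather than marginal normality alone. A secondary technical nuisance is the boundary effect, since bins intersecting $\partial\Delta(K,k)$ can have volumes strictly smaller than $w$; the mass conditions in~\eqref{db}, combined with $\ell_{m,n}\to\infty$, are what render their aggregate contribution $o(1)$.
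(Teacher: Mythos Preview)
Your proposal is correct and follows essentially the same route as the paper: Poissonize to make the per-bin contributions independent, compute moments under calibration (the paper reaches the same variance by direct expansion rather than your cleaner trace identity $\mathrm{tr}(\Sigma(z)^2)=\|z\|_2^2-2\|z\|_3^3+\|z\|_2^4$), verify a Lyapunov condition, and de-Poissonize via Lemma~\ref{de_poissonization_uncorelated}, using the uncorrelatedness $\EE{W_i\mid\tilde N_i}=0$ that comes from $\EE{U\mid Z_{(1:k)}}=0$. One detail to keep in mind when you carry this out: the hypotheses of Lemma~\ref{de_poissonization_uncorelated} are stated in terms of the \emph{split} statistics $T(\tilde N,a)$ and $T'(\tilde N,a)$ for each $a\in(a_0,1)$, not the full $\tilde T_{m,n}$ alone, so you must establish the joint CLT with $\tilde N$ separately over the bins in $\Delta(a)$ and its complement---but since your Lyapunov and covariance arguments apply verbatim to any subcollection of bins, this is routine.
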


If the model is mis-calibrated, we have the following theorem.
\begin{theorem}[Limiting distribution for a mis-calibrated model]
\label{clt_miscalibrated_model}
Let
\begin{equation}
\label{var_miscalibrated}
\sigma_1^2 := \Var{\|\EE{U|Z_{(1:k)}}\|^2} + 4\EE{\EE{U^{\top}|Z_{(1:k)}}\Cov{U|Z_{(1:k)}}\EE{U|Z_{(1:k)}}}.    
\end{equation}
Under Conditions \ref{ass_1} and \ref{ass_2},
we have $\sigma_1^2>0$.
If $\mathrm{ECE}_{1:k}^2>0$,
we have for $T_{m,n}$ from  \eqref{debiased_estimator} that
$\sqrt{n}(T_{m,n} - \EE{T_{m,n}}) / \sigma_{1} \xrightarrow{d} \N(0, 1)$
 as $n\rightarrow \infty$.
\end{theorem}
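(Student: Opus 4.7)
}
The plan is to decompose the residual $U^{(a)}$ into its conditional mean and fluctuation parts, split $T_{m,n}$ accordingly, and identify a single i.i.d.\ influence-function term that drives the $\sqrt{n}$ limit. Unlike the calibrated case of Theorem \ref{clt_calibrated_model}, where within-bin means vanish and the $n^{-1}$ rate comes from second-order cross-bin contributions, here the dominant fluctuation is a linear sum of independent terms, yielding the usual $n^{-1/2}$ rate.

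Write $\mu(z) := \EE{U|Z_{(1:k)}=z}$, $\Sigma(z) := \Cov{U|Z_{(1:k)}=z}$, and $U^{(a)} = \mu(Z^{(a)}_{(1:k)}) + V^{(a)}$, so that $V^{(a)}$ is conditionally on $Z^{(a)}$ mean-zero with covariance $\Sigma(Z^{(a)}_{(1:k)})$. Substituting into \eqref{debiased_estimator} splits $T_{m,n} = T_1+T_2+T_3$, where $T_1$ collects $\mu\mu$ products, $T_2$ collects $\mu V$ products, and $T_3$ collects $V V$ products; by conditional independence and zero-mean of the $V^{(a)}$, $\EE{T_{m,n}|Z}=T_1$ and $T_2, T_3$ have conditional mean zero. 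I would treat the three pieces as follows.

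For $T_3$: conditional on $Z$, each bin yields a completely degenerate quadratic form in $k$-dimensional independent mean-zero vectors, so a direct variance calculation gives $\Var{T_3|Z}=O(\ell_{m,n}/n^2)$, and hence $\sqrt{n}\,T_3=o_p(1)$. The cross-bin correlation induced by $\sum_i N_i = n$ is handled via the Poissonization Lemmas \ref{de_poissonization_uncorelated}--\ref{de_poissonization}, which substitute independent $\mathrm{Poisson}(nP(B_i))$ counts for the multinomial $N_i$'s. For $T_2$: rewriting the inner sum as $\sum_{b\in\mathcal{I}_{m,n,i}}V^{(b)\top}(N_i\bar\mu_i-\mu(Z^{(b)}_{(1:k)}))/(N_i-1)$ and using (i) $N_i/(N_i-1)=1+O(1/N_i)$ under Condition \ref{ass_2} and (ii) $\bar\mu_i \to \mu(z)$ uniformly on $z\in B_i$ under Condition \ref{ass_1}, one obtains
\begin{equation*}
T_2 = \frac{2}{n}\sum_{b=1}^n V^{(b)\top}\mu(Z^{(b)}_{(1:k)}) + o_p(n^{-1/2}),
\end{equation*}
after which a Lindeberg CLT (conditional on $Z$, and valid since $U$ is bounded) yields $\sqrt{n}\,T_2\xrightarrow{d}\N(0,\,4\EE{\mu^\top\Sigma\mu})$. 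For $T_1$: an analogous H\"older-based substitution gives
\begin{equation*}
T_1 - \EE{T_1} = \frac{1}{n}\sum_{a=1}^n\bigl(\|\mu(Z^{(a)}_{(1:k)})\|^2 - \EE{\|\mu(Z_{(1:k)})\|^2}\bigr) + o_p(n^{-1/2}),
\end{equation*}
and the standard i.i.d.\ CLT gives $\sqrt{n}(T_1-\EE{T_1})\xrightarrow{d}\N(0,\Var{\|\mu(Z_{(1:k)})\|^2})$. The bias $\EE{T_1}-\mathrm{ECE}_{1:k}^2=O(m^{-2s})$ from Remark \ref{bin} / Section \ref{bias_var} is absorbed into the centering at $\EE{T_{m,n}}$ used in the theorem statement.

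Combining: the cross-covariance $\EE{\|\mu(Z)\|^2 \cdot V^\top\mu(Z)}$ vanishes by $\EE{V|Z}=0$, so the two leading sums are asymptotically uncorrelated, and a Cram\'er--Wold / conditional-CLT argument yields $\sqrt{n}(T_{m,n}-\EE{T_{m,n}})/\sigma_1\xrightarrow{d}\N(0,1)$. For $\sigma_1^2>0$ under $\mathrm{ECE}_{1:k}^2>0$: if $\Var{\|\mu\|^2}>0$ we are done; otherwise $\|\mu\|$ is a positive constant on $\supp(Z_{(1:k)})=\Delta(K,k)$, and writing $\Sigma(z)=\mathrm{diag}(p(z))-p(z)p(z)^\top$ with $p(z)=z+\mu(z)$, one verifies $\mu(z)^\top\Sigma(z)\mu(z)>0$ generically on $\Delta(K,k)$ (the only potential kernel direction of $\Sigma$ arises from the simplex constraint $\sum_j Y_{r_j}\le 1$ and is generically not parallel to $\mu$), forcing $\EE{\mu^\top\Sigma\mu}>0$. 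The main obstacle I anticipate is controlling these approximation errors uniformly over bins whose counts $N_i$ are random and whose widths shrink with $m$: because $1/(N_i-1)$ is singular for $N_i\le 1$ and Condition \ref{ass_2} only guarantees $nP(B_{\ell_{m,n}})\to\infty$, one must show the event $\{N_i\ge 2\text{ for all }i\}$ holds with probability $1-o(1)$ and that its complement contributes negligibly---this is where the Poissonization lemmas and careful moment bounds on $N_i^{-1}$ become indispensable.
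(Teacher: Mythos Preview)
Your decomposition $T_{m,n}=T_1+T_2+T_3$ via $U=\mu(Z)+V$ is a valid and genuinely different route from the paper's. The paper does \emph{not} isolate an influence function; instead it Poissonizes the sample size ($\tilde N\sim\mathrm{Poisson}(n)$), applies the Lyapunov CLT to the bivariate sum $M(\tilde N,a)=t\,(T(\tilde N,a)-\EE{T(\tilde N,a)})+v\,(\tilde N-n)/\sqrt n$ over independent Poissonized bins, and then invokes its new Lemma~\ref{de_poissonization} (de-Poissonization with \emph{correlated} components) to pass from $\tilde N$ back to $n$. The correlation $\lambda_a=\lim\Cov{T(\tilde N,a),\tilde N/\sqrt n}=\EE{\|\mu\|^2 I(Z\in\Delta(a))}$ is nonzero here (in contrast to the calibrated case), and the identity $\sigma_1^2=\rho^2-\lambda^2$ drops out of the partial-inversion formula for the conditional characteristic function. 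Your approach recovers the same $\sigma_1^2$ more directly as the variance of the single influence function $\psi(Z,Y)=\|\mu(Z)\|^2+2V^\top\mu(Z)$, and the standard i.i.d.\ CLT suffices. What the paper's route buys is a unified machinery for Theorems~\ref{clt_calibrated_model} and~\ref{clt_miscalibrated_model} (and the Lemma~\ref{de_poissonization} they advertise as a contribution); what yours buys is elementarity for this specific case. One correction: your invocations of the Poissonization lemmas are not actually needed in your own argument. For $T_3$ you only need a variance bound, not a CLT, and the multinomial negative correlations among the $N_i$ only help; for the $\{N_i\le 1\}$ events and $1/(N_i-1)$ moments, direct Bernstein/binomial tail bounds under Condition~\ref{ass_2} (giving $P(N_i\le 1)\le 2e^{-c\,n\mu(B_i)}$) are enough, so you can drop Poissonization entirely. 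Finally, your argument for $\sigma_1^2>0$ (``generically not parallel to $\mu$'') is not a proof; but the paper's proof section does not establish this claim either, so you are not behind on that point.
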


The proof uses a \emph{Poissonization argument} 
inspired by---and extending the arguments of---\cite{beirlant1994asymptotic, beirlant1998asymptotic}. 
We let the number of data points be a Poisson random variable $\tilde{N} \sim \text{Poisson}(n)$ and let $\mathcal{I}(\tilde{N})_{i} = \left\{j: Z^{(j)}_{(1:k)} \in B_i, 1\le  j \le  \tilde{N} \right\}$. Then we have
$\bigl|\mathcal{I}(\tilde{N})_{i}\bigr| \sim \text{Poisson}(n\mu(B_i)),$
and $|\mathcal{I}(\tilde{N})_{i}|, i=1,\dots, \ell_{m,n}$ are mutually independent. Then the Poissonized estimator 
\begin{equation*}
 \tilde T(\tilde{N}) = \sum_{\substack{1 \le  i \le  \ell_{m,n},\, |\mathcal{I}(\tilde{N})_{i}| \ge  2}} \frac{1}{n\bigl(|\mathcal{I}(\tilde{N})_{i}| - 1\bigr)}  \Biggl[  \sum_{j_1\neq j_2 \in \mathcal{I}(\tilde{N})_{i}} U^{(j_1)\top}U^{(j_2)} \Biggr]
\end{equation*}
is a sum of independent random variables. 
We show the asymptotic normality of $\tilde T(\tilde{N})$ via moment calculations using the Lyapunov Central Limit Theorem. 
Now, the distribution of $T_{m,n}$ is the same as the conditional distribution of $\tilde T(\tilde{N})$ given $\tilde{N} = n$. 
The limit law of the conditional distribution is established using a \emph{partial inversion} approach for obtaining characteristic functions \citep[see e.g.,][etc]{bartlett1938characteristic,holst1979two, esty1983normal}.

In our proof, we also carefully specify the conditions needed to obtain the limit law of the conditional distribution 
(see Lemma 2.3 in the supplementary material), 
which involves splitting the test statistic into two components. Our analysis also extends the results of \cite{beirlant1998asymptotic} by providing a result that enables Poissonization with correlated components 
(Lemma 2.4 in the supplementary material). 
This novel technical tool may be useful beyond our paper.

\subsection{Adjusted Confidence Interval}
\label{vare}
To construct confidence intervals for ECE using the limiting distribution of the estimator \eqref{debiased_estimator}, we also need to estimate the asymptotic variance. 
For calibrated models, $\sigma_0^2$ defined in \eqref{var_calibrated}
can be computed analytically or using standard numerical integration methods \citep{davis2007methods}. 
For a mis-calibrated model, we propose to estimate $\sigma_1^2$ by the plug-in estimator defined in \eqref{sigma1_plug_in_estimator}. The following proposition shows that $\hat{\sigma}_1^2$ is a consistent estimator of $\sigma_1^2$:
\begin{proposition}
\label{sigma1_hat_convergence}
Under Conditions \ref{ass_1} and \ref{ass_2},
as $n\rightarrow \infty$,
$\hat\sigma_1^2 \rightarrow_{p} \sigma_1^2$.
\end{proposition}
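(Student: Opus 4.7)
The plan is to split the convergence into three separate pieces. Expanding the variance in \eqref{var_miscalibrated}, write $\sigma_1^2 = A - B^2 + 4C$ where $f(z) := \EE{U \mid Z_{(1:k)} = z}$, $A := \EE{\|f(Z_{(1:k)})\|^4}$, $B := \EE{\|f(Z_{(1:k)})\|^2}$, and $C := \EE{f(Z_{(1:k)})^\top \Cov{U \mid Z_{(1:k)}} f(Z_{(1:k)})}$. The plug-in estimator \eqref{sigma1_plug_in_estimator} decomposes correspondingly as $\hat\sigma_1^2 = \hat A - \hat B^2 + 4 \hat C$, so by the continuous mapping theorem it is enough to show $\hat A \rightarrow_p A$, $\hat B \rightarrow_p B$, and $\hat C \rightarrow_p C$ separately.

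For each piece, I would factor the convergence through a deterministic ``oracle binned'' intermediate. Writing $p_i := P(Z_{(1:k)} \in B_i)$, $\mu_i := \EE{U \mid Z_{(1:k)} \in B_i}$, and $\Sigma_i := \Cov{U \mid Z_{(1:k)} \in B_i}$, set $\tilde A := \sum_i p_i \|\mu_i\|^4$, $\tilde B := \sum_i p_i \|\mu_i\|^2$, and $\tilde C := \sum_i p_i \mu_i^\top \Sigma_i \mu_i$; then $|\hat A - A| \le |\hat A - \tilde A| + |\tilde A - A|$ and similarly for $\hat B$, $\hat C$. The \emph{deterministic binning error} $|\tilde A - A|$ is controlled via Condition \ref{ass_1}: for $z \in B_i$, $\|f(z) - \mu_i\| \le L \cdot \textnormal{diam}(B_i)^s = O(m^{-s})$; since entries of $U$ lie in $[-1,1]$, $\|f(z)\|$ and $\|\mu_i\|$ lie in a fixed bounded set on which $\|\cdot\|^4$ is Lipschitz, giving $|\tilde A - A| = O(m^{-s}) \to 0$. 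For $\tilde C - C$ I additionally use the within-bin total covariance identity $\Sigma_i = \tfrac{1}{p_i}\int_{B_i} \Cov{U \mid Z_{(1:k)} = z}\, dP_{Z_{(1:k)}}(z) + \Cov{f(Z_{(1:k)}) \mid Z_{(1:k)} \in B_i}$, whose second summand has operator norm $O(m^{-2s})$; applying the same Hölder bound to the integrand yields $|\tilde B - B|, |\tilde C - C| = O(m^{-s}) \to 0$.

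For the \emph{stochastic sampling error} I work term by term; take $\hat A - \tilde A$ as the representative case. Letting $N_i := |\mathcal{I}_{m,n,i}|$ and $\bar U_i := \mathbb{E}_n[U]^{(i)}$,
\begin{equation*}
|\hat A - \tilde A| \le \Bigl|\sum_i \Bigl(\tfrac{N_i}{n} - p_i\Bigr)\|\mu_i\|^4\Bigr| + \sum_i \tfrac{N_i}{n}\,\bigl|\|\bar U_i\|^4 - \|\mu_i\|^4\bigr|.
\end{equation*}
The first term equals the centered i.i.d.\ average $\frac{1}{n}\sum_{j=1}^n (g(Z^{(j)}) - \EE{g(Z^{(j)})})$ with $g(z) := \sum_i I(z \in B_i)\|\mu_i\|^4$ bounded uniformly in $m$, so it is $O_p(1/\sqrt n)$ by Chebyshev. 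For the second term, conditional on $N_i \ge 1$, $\bar U_i - \mu_i$ is a sample mean of bounded centered vectors with $\EE{\|\bar U_i - \mu_i\|^2 \mid N_i} = \textnormal{tr}(\Sigma_i)/N_i$; combining the Lipschitz bound $\bigl|\|\bar U_i\|^4 - \|\mu_i\|^4\bigr| \le c\|\bar U_i - \mu_i\|$ with Cauchy--Schwarz across bins produces an $O_p(\sqrt{\ell_{m,n}/n})$ bound, which vanishes since the equal-volume construction of Section \ref{part+estim} together with Condition \ref{ass_2} forces $\ell_{m,n}/n \to 0$ (the smallest bin probability is $\asymp 1/\ell_{m,n}$, so $n p_{\ell_{m,n}} \to \infty$ forces $n/\ell_{m,n} \to \infty$). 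Analogous arguments dispatch $\hat B - \tilde B$ and $\hat C - \tilde C$. The main technical obstacle is $\hat C - \tilde C$, since the quadratic form $\bar U_i^\top \textnormal{Cov}_n[U]^{(i)} \bar U_i$ couples the fluctuations of two empirical bin quantities; I handle this by writing $\textnormal{Cov}_n[U]^{(i)} = \hat S_i - \bar U_i \bar U_i^\top$ with $\hat S_i := \frac{1}{N_i}\sum_{j \in \mathcal{I}_{m,n,i}} U^{(j)} U^{(j)\top}$, concentrating $\hat S_i$ around $S_i := \EE{UU^\top \mid Z_{(1:k)} \in B_i}$ at rate $O(1/\sqrt{N_i})$ in operator norm, and repeating the concentration-plus-Lipschitz scheme on the resulting pieces. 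Combining all bounds yields $\hat\sigma_1^2 \rightarrow_p \sigma_1^2$ via the continuous mapping theorem.
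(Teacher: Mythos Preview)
Your argument is correct and takes a genuinely different route from the paper's own proof. The paper proceeds by a single bias--variance decomposition applied to the \emph{whole} estimator: it shows $\bigl|\EE{\hat\sigma_1^2}-\sigma_1^2\bigr|=O(m^{-s}+\ell_{m,n}/n)$ by expanding each term $\EE{\tfrac{N_i}{n}\|\mathbb{E}_n[U]^{(i)}\|^4}$ (and its analogues) via conditioning on the bin counts and counting how many of the indices in products like $U^{(j_1)\top}U^{(j_2)}U^{(j_3)\top}U^{(j_4)}$ are distinct; then it bounds $\Var{\hat\sigma_1^2}$ directly using the law of total variance and boundedness, and concludes by Chebyshev. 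Your approach instead splits $\hat\sigma_1^2=\hat A-\hat B^2+4\hat C$ and attacks each piece with a two-stage triangle inequality through an oracle binned intermediate, replacing the paper's explicit moment expansions by a Lipschitz-plus-concentration scheme. This is arguably cleaner and more modular, and the Cauchy--Schwarz step $\sum_i\sqrt{N_i}\le\sqrt{\ell_{m,n}n}$ is a neat way to aggregate the per-bin fluctuations; the price is a slightly looser stochastic rate $O_p(\sqrt{\ell_{m,n}/n})$ versus the paper's tighter variance bounds, though this is immaterial for the convergence-in-probability conclusion. Your justification that $\ell_{m,n}/n\to 0$ follows from Condition~\ref{ass_2} (via $\min_i\mu(B_i)\le 1/\ell_{m,n}$) is correct and matches what the paper implicitly uses.
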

The proof can be found in Section \ref{sigma1_hat_convergence}. We show that $\EE{\hat\sigma_1^2}\to\sigma_1^2$ and that $\Var{\hat\sigma_1^2}\to0$. 
The conclusion follows from Chebyshev’s inequality. Then by Slutsky's theorem (e.g., \cite{van2000asymptotic}, Lemma 2.8), we have the following corollary:
\begin{corollary}
\label{clt_estimated_sigma1}
Under Conditions \ref{ass_1} and \ref{ass_2},
as $n\rightarrow \infty$, $\sqrt{n}(T_{m,n} - \EE{T_{m,n}}) / \hat\sigma_{1} \xrightarrow{d} \N(0, 1)$.
\end{corollary}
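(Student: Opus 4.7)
The plan is to combine Theorem \ref{clt_miscalibrated_model} with Proposition \ref{sigma1_hat_convergence} via Slutsky's theorem, replacing the unknown $\sigma_1$ by the consistent estimator $\hat\sigma_1$ in the denominator of the pivot. Since the conclusion involves division by $\hat\sigma_1$, the corollary implicitly lives in the mis-calibrated regime where Theorem \ref{clt_miscalibrated_model} guarantees $\sigma_1^2 > 0$.

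First, I would invoke Theorem \ref{clt_miscalibrated_model} to obtain
\[
\frac{\sqrt{n}\,(T_{m,n}-\EE{T_{m,n}})}{\sigma_1}\xrightarrow{d}\N(0,1).
\]
Second, I would invoke Proposition \ref{sigma1_hat_convergence} to get $\hat\sigma_1^2\rightarrow_{p}\sigma_1^2$. Since $\sigma_1^2>0$, the continuous mapping theorem applied to $x\mapsto \sqrt{x}$ (continuous at $\sigma_1^2$) yields $\hat\sigma_1\rightarrow_{p}\sigma_1$, and a second application with $x\mapsto \sigma_1/x$ (continuous at $\sigma_1>0$) gives $\sigma_1/\hat\sigma_1\rightarrow_{p}1$.

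Finally, I would decompose
\[
\frac{\sqrt{n}\,(T_{m,n}-\EE{T_{m,n}})}{\hat\sigma_1}=\frac{\sqrt{n}\,(T_{m,n}-\EE{T_{m,n}})}{\sigma_1}\cdot\frac{\sigma_1}{\hat\sigma_1}
\]
and apply Slutsky's theorem (e.g., Lemma 2.8 of \cite{van2000asymptotic}): since the first factor converges weakly to $\N(0,1)$ and the second converges in probability to the constant $1$, their product converges in distribution to $\N(0,1)\cdot 1=\N(0,1)$, which is exactly the claim.

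The argument has no substantive obstacle; the heavy lifting was already done in Theorem \ref{clt_miscalibrated_model}, whose proof required the Poissonization argument with correlated Poissonized components and a partial inversion of characteristic functions to extract the conditional limit law, and in Proposition \ref{sigma1_hat_convergence}, whose proof proceeds via direct mean and variance calculations combined with Chebyshev's inequality. This corollary is the clean packaging step that turns those two ingredients into a studentized pivot usable without knowledge of the true asymptotic variance $\sigma_1^2$, and thus forms the basis of the adjusted confidence interval \eqref{aci}.
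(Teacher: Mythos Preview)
Your proposal is correct and matches the paper's approach exactly: the paper simply states that the corollary follows from Proposition \ref{sigma1_hat_convergence} and Theorem \ref{clt_miscalibrated_model} by Slutsky's theorem (Lemma 2.8 of \cite{van2000asymptotic}), and your decomposition via $\sigma_1/\hat\sigma_1\to_p 1$ is the standard way to make that one-line reference precise. Your remark that the argument implicitly lives in the mis-calibrated regime (where $\sigma_1^2>0$) is a correct and useful clarification.
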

Then we have the following theorem demonstrating that the confidence interval \eqref{adjusted_KI} provides asymptotically correct coverage for $\EE{T_{m,n}}$:
\begin{theorem}[Asymptotically Valid Inference for the Mean]
\label{infer_mean}
Given a model $f$, for any value 
of $\mathrm{ECE}_{1:k}^2 \in [0,\infty)$
from \eqref{top_1_to_k_ece},
Under Conditions \ref{ass_1} and \ref{ass_2}, as $n\rightarrow \infty$,
$P(\EE{T_{m,n}} \in C_{m, n}) \rightarrow 1-\alpha$.
\end{theorem}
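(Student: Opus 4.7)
The plan is to split into two regimes based on $\mathrm{ECE}_{1:k}^2 = 0$ versus $\mathrm{ECE}_{1:k}^2 > 0$ and, in each regime, reduce coverage of $\EE{T_{m,n}}$ to an application of the CLTs already established in Theorems \ref{clt_calibrated_model} and \ref{clt_miscalibrated_model} and Corollary \ref{clt_estimated_sigma1}.

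\textbf{Calibrated case ($\mathrm{ECE}_{1:k}^2 = 0$).} First I would show $\EE{T_{m,n}} = 0$ \emph{exactly}. Independence of $(Z^{(a)}, Y^{(a)})$ and $(Z^{(b)}, Y^{(b)})$ for $a \neq b$ gives $\EE{U^{(a)\top} U^{(b)} \mid Z^{(a)}_{(1:k)}, Z^{(b)}_{(1:k)}} = \EE{U^{(a)} \mid Z^{(a)}_{(1:k)}}^{\top} \EE{U^{(b)} \mid Z^{(b)}_{(1:k)}}$, which vanishes under calibration since $\EE{U \mid Z_{(1:k)}} = 0$; conditioning on all $Z^{(j)}_{(1:k)}$ (which determine the bin indices $\mathcal{I}_{m,n,i}$) yields $\EE{T_{m,n}} = 0$. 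Hence the goal reduces to showing $P(0 \in C_{m,n}) \to 1-\alpha$. Inspection of the three branches of $C_{m,n,1}$ in \eqref{aci} shows that $0 \notin C_{m,n,1}$ deterministically: branches~1 and~3 have strictly positive left endpoints whenever $T_{m,n}^+ > 0$; when $T_{m,n}^+ = 0$, only branch~2 is active (since $-z_\alpha \hat\sigma_1/\sqrt{n} < 0$), and branch~2 explicitly removes $\{0\}$. Therefore $0 \in C_{m,n}$ if and only if the augmentation rule $T_{m,n}^+ < z_\alpha \sigma_0 / (n \sqrt{w})$ triggers. By Theorem \ref{clt_calibrated_model}, $n \sqrt{w}\, T_{m,n}/\sigma_0 \xrightarrow{d} \N(0,1)$, so $n \sqrt{w}\, T_{m,n}^+ / \sigma_0 \xrightarrow{d} \max(0, Z)$ with $Z \sim \N(0,1)$, and continuity of this limit at $z_\alpha > 0$ gives $P(T_{m,n}^+ < z_\alpha \sigma_0/(n \sqrt{w})) \to P(\max(0, Z) < z_\alpha) = P(Z < z_\alpha) = 1 - \alpha$.

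\textbf{Mis-calibrated case ($\mathrm{ECE}_{1:k}^2 > 0$).} The bias bound in Remark \ref{bin} gives $\EE{T_{m,n}} \to \mathrm{ECE}_{1:k}^2 > 0$, while Corollary \ref{clt_estimated_sigma1} combined with Proposition \ref{sigma1_hat_convergence} imply $T_{m,n} \rightarrow_{p} \mathrm{ECE}_{1:k}^2$ and $\hat\sigma_1 \rightarrow_{p} \sigma_1 < \infty$. Hence with probability tending to one: (i) $T_{m,n}^+ = T_{m,n}$; (ii) $T_{m,n}^+ / 2 \geq z_{\alpha/2} \hat\sigma_1 / \sqrt{n}$, placing us in the first branch of \eqref{aci}; and (iii) $T_{m,n}^+ \geq z_\alpha \sigma_0 / (n \sqrt{w})$, so the augmentation rule does not fire. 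Thus with probability tending to one $C_{m,n} = [T_{m,n} - z_{\alpha/2} \hat\sigma_1 / \sqrt{n},\ T_{m,n} + z_{\alpha/2} \hat\sigma_1 / \sqrt{n}]$, and Corollary \ref{clt_estimated_sigma1} yields $P(\EE{T_{m,n}} \in C_{m,n}) \to 1 - \alpha$.

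\textbf{Main obstacle.} The delicate part is the calibrated case. One must carefully verify that $0$ enters $C_{m,n}$ through \emph{exactly one} mechanism---the augmentation rule---which requires a case-by-case inspection of the three branches of \eqref{aci}, and that $\EE{T_{m,n}} = 0$ holds exactly under calibration so that covering the mean reduces to covering zero. Once these are in hand the limiting probability falls out cleanly from Theorem \ref{clt_calibrated_model}; the mis-calibrated case is then a standard Wald-type argument after checking that the branch structure and augmentation rule collapse asymptotically to the standard Wald interval.
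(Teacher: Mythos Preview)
Your proposal is correct and follows essentially the same route as the paper's proof: split on $\mathrm{ECE}_{1:k}^2=0$ versus $>0$, reduce the calibrated case to the augmentation rule via Theorem~\ref{clt_calibrated_model}, and in the mis-calibrated case observe that with probability tending to one the interval collapses to the first (Wald) branch so Corollary~\ref{clt_estimated_sigma1} applies. You are in fact slightly more explicit than the paper in two places---verifying $\EE{T_{m,n}}=0$ exactly under calibration and checking branch-by-branch that $0\notin C_{m,n,1}$---whereas the paper compresses these into the single line $P(\EE{T_{m,n}}\in C_{m,n})=P\bigl(T_{m,n}^{+}<z_\alpha\sigma_0/(n\sqrt{w})\bigr)$.
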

The proof can be found in Section 
2.7 in the supplementary material,
and is a direct application of Corollary \ref{clt_estimated_sigma1}. 
Theorem \ref{infer_mean} ensures asymptotically correct coverage for $\EE{T_{m,n}}$, 
which can be viewed as a \emph{smoothed} form of $\mathrm{ECE}_{1:k}^2$, and 
with $\mu(B_i) = P(Z_{(1:k)} \in B_i)$ for all $i\le \ell_{m,n}$,
has the explicit form: 
\begin{equation*}
\begin{split}
\EE{T_{m,n}} & = \frac{1}{n}\sum_{i=1}^{\ell_{m,n}} \EE{|\mathcal{I}_{m,n, i}| I(|\mathcal{I}_{m,n, i}| \ge  2)} \bigl\|\EE{U | Z_{(1:k)} \in B_i}\bigr\|^2\\
& = \frac{1}{n}\sum_{i=1}^{\ell_{m,n}} 
n\mu(B_i)\left[1 - \bigl(1 - \mu(B_i)\bigr)^{n-1}\right]  \left\| \frac{\int_{B_i}\EE{U|Z_{(1:k)}} d P_{Z_{(1:k)}}}{\mu(B_i)} \right\|^2 \\
& \approx \sum_{i=1}^{\ell_{m,n}} \frac{\left\| \int_{B_i}\EE{U|Z_{(1:k)}} d P_{Z_{(1:k)}} \right\|^2}{\mu(B_i)}.
\end{split}
\end{equation*}
Unfortunately, $T_{m,n}$ is not an unbiased estimator of $\mathrm{ECE}_{1:k}^2 = \sum_{i=1}^{\ell_{m,n}} \int_{B_i}\|\EE{U|Z_{(1:k)}}\|^2 d P_{Z_{(1:k)}} $. The bias mainly results from the discrepancy between $\bigl\| \int_{B_i}\EE{U|Z_{(1:k)}} d P_{Z_{(1:k)}} \bigr\|^2 / \mu(B_i)$ and $\int_{B_i} \left\|\EE{U|Z_{(1:k)}}\right\|^2 d P_{Z_{(1:k)}} $ for each bin. 
However, we show that we can obtain a valid confidence interval for $\mathrm{ECE}_{1:k}^2$, under the following additional condition:
\begin{condition}
\label{ass_3}
The binning scheme satisfies $n^{{1}/{(4s)}} = o(m)$. 
\end{condition}
Comparing with the discussion from Remark \ref{bin},
the condition $n^{{1}/{4s}} = o(m)$ ensures that the squared bias of order $m^{-4s}$ is negligible compared to the term of order $n^{-1}$, i.e. $\sqrt{n} (\EE{T_{m,n}}-\mathrm{ECE}_{1:k}^2) \rightarrow_{p} 0$.
If $Z_{(1:k)}$ has a density strictly bounded away from zero,
Condition \ref{ass_3}, together with Condition \ref{ass_2}, requires the binning scheme used to construct the estimator to satisfy $n^{1/4s} \ll  m \ll  n^{{1}/{\min(k,K-1)}}$. 
For instance, for a Lipschitz continuous map $z_{(1:k)} \mapsto\EE{U|z_{(1:k)}}$ (i.e., $s=1$) 
and top-1 calibration (i.e., $k=1$), 
it requires $ n^{{1}/{4}} \ll  m \ll   n$. 
In particular, since $s\le 1$, the condition requires $k < 4$.
In practice, often only the top few probabilities are used, 
and so this is a mild condition. 
{ Additionally, when the model is perfectly calibrated, the estimator is unbiased. Therefore, for hypothesis testing applications, our method remains valid for all $k$. }
With Condition~\ref{ass_3}, 
we can show that the confidence interval provides asymptotically valid coverage for $\mathrm{ECE}_{1:k}^2$.

\begin{theorem}[Asymptotically Valid Inference for $\mathrm{ECE}_{1:k}^2$]
\label{infer_ece}
Under Conditions \ref{ass_1}, \ref{ass_2} and \ref{ass_3},
we have for any value 
of $\mathrm{ECE}_{1:k}^2 \in [0,\infty)$
that 
$P(\mathrm{ECE}_{1:k}^2 \in C_{m,n}) \rightarrow 1-\alpha$  as $n\rightarrow \infty$. 
\end{theorem}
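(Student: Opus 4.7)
\textbf{Proof plan for Theorem~\ref{infer_ece}.}

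The plan is to leverage the asymptotic normality results from Theorems~\ref{clt_calibrated_model} and~\ref{clt_miscalibrated_model} together with Corollary~\ref{clt_estimated_sigma1}, and to transfer the coverage guarantee from $\EE{T_{m,n}}$ already established in Theorem~\ref{infer_mean} to the true parameter $\mathrm{ECE}_{1:k}^2$. The bridge between these two quantities is the bias bound $|\EE{T_{m,n}} - \mathrm{ECE}_{1:k}^2| = O(m^{-2s})$ from Remark~\ref{bin}, which under Condition~\ref{ass_3} yields $\sqrt{n}\,|\EE{T_{m,n}} - \mathrm{ECE}_{1:k}^2| \to 0$. Since $T_{m,n}$ has two distinct limiting regimes, I handle the calibrated and mis-calibrated cases separately.

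\textbf{Mis-calibrated case.} When $\mathrm{ECE}_{1:k}^2 > 0$, combining Corollary~\ref{clt_estimated_sigma1} with the vanishing scaled bias via Slutsky's theorem yields $\sqrt{n}\,(T_{m,n} - \mathrm{ECE}_{1:k}^2)/\hat\sigma_1 \xrightarrow{d} \N(0,1)$. Because $T_{m,n} \rightarrow_{p} \mathrm{ECE}_{1:k}^2 > 0$ and $\hat\sigma_1/\sqrt{n} \rightarrow_{p} 0$, both the event $\{T_{m,n}^{+} = T_{m,n}\}$ and the event $\{z_{\alpha/2}\hat\sigma_1/\sqrt{n} \le T_{m,n}^{+}/2\}$ occur with probability tending to one. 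Hence $C_{m,n,1}$ collapses asymptotically to the Wald interval $[T_{m,n} - z_{\alpha/2}\hat\sigma_1/\sqrt{n},\, T_{m,n} + z_{\alpha/2}\hat\sigma_1/\sqrt{n}]$. Similarly, the zero-inclusion trigger $T_{m,n}^{+} < z_\alpha \sigma_0/(n\sqrt{w})$ fails eventually, since its threshold decays at rate $n^{-1}$ while $T_{m,n}^{+}$ stays bounded away from zero; hence $C_{m,n} = C_{m,n,1}$ with probability tending to one. The coverage probability then converges to $P(|\N(0,1)| \le z_{\alpha/2}) = 1-\alpha$.

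\textbf{Calibrated case.} When $\mathrm{ECE}_{1:k}^2 = 0$, I first observe that by construction $C_{m,n,1}$ never contains $0$ (either its lower endpoint is strictly positive, or $\{0\}$ is explicitly excised in the middle branch of \eqref{aci}), so $0 \in C_{m,n}$ if and only if the zero-inclusion trigger $T_{m,n}^{+} < z_\alpha \sigma_0/(n\sqrt{w})$ is activated. Because this threshold is strictly positive, the trigger event coincides with $\{T_{m,n} < z_\alpha \sigma_0/(n\sqrt{w})\} = \{n\sqrt{w}\,T_{m,n}/\sigma_0 < z_\alpha\}$, and Theorem~\ref{clt_calibrated_model} gives $n\sqrt{w}\,T_{m,n}/\sigma_0 \xrightarrow{d} \N(0,1)$, so this probability converges to $\Phi(z_\alpha) = 1-\alpha$.

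\textbf{Main obstacle.} The essential quantitative input is the bias bound $|\EE{T_{m,n}} - \mathrm{ECE}_{1:k}^2| = O(m^{-2s})$, which Condition~\ref{ass_3} converts into $o(n^{-1/2})$. This bound is stated in Remark~\ref{bin} but requires a careful argument comparing $\|\int_{B_i} \EE{U\mid Z_{(1:k)}}\,dP_{Z_{(1:k)}}\|^2/\mu(B_i)$ with $\int_{B_i} \|\EE{U\mid Z_{(1:k)}}\|^2\,dP_{Z_{(1:k)}}$ bin by bin, exploiting the H\"older regularity from Condition~\ref{ass_1} and the shrinking diameter of the equal-volume partition. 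Beyond this, the main bookkeeping task is verifying that the piecewise definition of $C_{m,n,1}$ degenerates to the standard Wald branch in the mis-calibrated regime and that the two threshold rates ($n^{-1}$ for zero-inclusion versus $n^{-1/2}$ for the Wald half-width) are compared correctly, so that with probability tending to one no regime transitions distort the coverage.
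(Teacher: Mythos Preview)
Your proposal is correct and follows essentially the same approach as the paper: split into calibrated and mis-calibrated cases, invoke Theorem~\ref{clt_calibrated_model} for the former via the zero-inclusion trigger, and for the latter combine Corollary~\ref{clt_estimated_sigma1} with the $O(m^{-2s})$ bias bound (made $o(n^{-1/2})$ by Condition~\ref{ass_3}) to reduce $C_{m,n,1}$ to the Wald interval with probability tending to one. Your write-up is in fact more explicit than the paper's (which refers back to the proof of Theorem~\ref{infer_mean}), particularly in verifying that $C_{m,n,1}$ never contains $0$ and that the $n^{-1}$ zero-inclusion threshold is eventually beaten by $T_{m,n}^{+}$ in the mis-calibrated regime.
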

The proof can be found in Section 
2.8 in the supplementary material,
and follows the proof of Theorem \ref{infer_mean} with an additional argument showing that $\sqrt{n} (\EE{T_{m,n}}-\mathrm{ECE}_{1:k}^2) \rightarrow_{p} 0$.

\section{Experiments}
We perform experiments on both simulated and empirical datasets to support our theoretical results. 
In particular, we use a variety of simulations to both check the finite sample performance of our confidence intervals as well as to empirically compare their length to those of existing methods.
Further we illustrate our methods on two empirical data sets. The first example is the standard CIFAR  image classification data set, which is meant to showcase our method on a popular and widely used benchmark.
The second example is an
Alzheimer’s disease prediction dataset discussed in \cite{wang2023bias}, 
and is meant to illustrate the applicability of our method to a different domain (medical data), while also being an example where our method leads to a different conclusion from a standard method (the bootstrap). {Additional experiments on simulated datasets of different sizes, sensitivity analyses with respect to the number of bins, and further experiments on CIFAR and large language models with verbalized confidence are provided in Section 3 of the Supplementary Material. }

\subsection{Simulated Data}
\label{simulated_data}
We use simulated data to demonstrate our method's ability to generate valid confidence intervals. We consider three data distributions:

\begin{itemize}
    \item Setting 1: We generate {$n = 100$} i.i.d.~data points $\bigl(Z^{(i)}, Y^{(i)}\bigr)$, $i\in[n]$, from a $K = 2$-class classification problem with predicted probability $Z \sim \text{Unif}(\Delta_1)$.
    A one-hot encoded label $Y = (Y_1,Y_2)^\top$ is generated such that
    \begin{equation}\label{lr}
    P(Y_1 = 1 | Z) = \left(1 + \exp\{-\beta \log{[Z_1/(1-Z_1)]}\}\right)^{-1},   
    \end{equation}
    
    for some $\beta \in \mathbb{R}$. We evaluate the top-1 calibration error (i.e. $k=1$) with $mK = 20$.  
    
    \item Setting 2: This setting is as Setting 1, except  the features $Z$ are non-uniformly distributed with $Z_1 \sim \text{Beta}(5, 0.5)$.  
    \item Setting 3: We generate {$n = 100$} i.i.d.~data points $\bigl(Z^{(i)}, Y^{(i)}\bigr)$, $i\in[n]$, from a $K = 10$-class classification problem with predicted probability $Z \sim \text{Unif}(\Delta_9)$. 
    The one-hot encoded label $Y$ is generated such that
    \begin{equation*}
    P(Y_{r_1} = 1 | Z_{(1)}) = Z_{(1)} - \beta, \, \, P(Y_{r_2} = 1 | Z_{(2)}) = Z_{(2)} + \beta, \, \, P(Y_{r_j} = 1 | Z_{(j)}) = Z_{(i)}, \, \forall j > 2,
    \end{equation*}

    for some $\beta \in \mathbb{R}$. We evaluate the top-1-to-2 calibration error (i.e. $k=2$) with $mK = 10$.  
\end{itemize}

\begin{figure}
    \centering
    \includegraphics[scale=0.25]{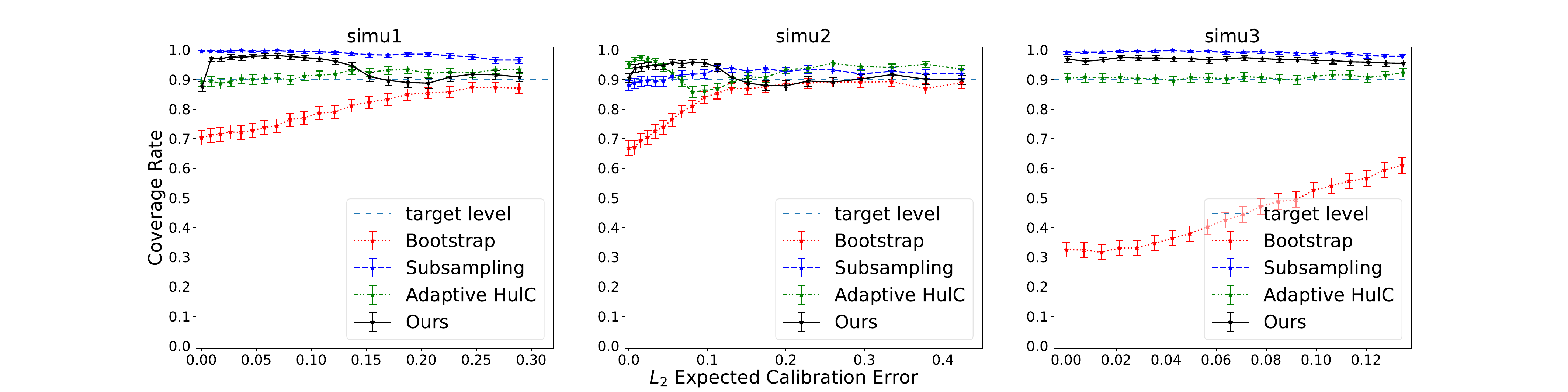}
    \caption{Coverage Rate vs. Calibration Error. For each setting and each value of $\beta$, we compute the coverage rate of the confidence intervals over 1000 datasets. The error bars for coverage rates are generated using the Clopper–Pearson method \citep{clopper1934use}. }
    \label{simulation_coverage}
\end{figure}

Settings 1 and 2 consider binary classification problems, differing only in the distribution of $Z$. Setting 3 addresses a multiclass classification problem, evaluating the top-1-to-2 calibration error. 
The value of $\beta$ controls the calibration error of the model.
For Settings 1 and 2, data is generated with $\beta \in \{0, 0.05, 0.1, \dots, 0.95, 1\}$, where $\beta = 1$ corresponds to a multi-class calibrated model. 
For Setting 3, data is generated with $\beta \in \{0, 0.005, 0.01, \dots, 0.1\}$, where $\beta = 0$ corresponds to a multi-class calibrated model. 
For each value of $\beta$, we generate 1000 datasets and compute the estimator $T_{m,n}$ from \eqref{debiased_estimator}. For Settings 1 and 2, we use a cubic partition with width $1/(mK) = 1 / 20$, and for Setting 3, we set width $1/(mK) = 1 / 10$.
We generate 90\% confidence intervals using our method \eqref{adjusted_KI} for each dataset, 
and report the coverage rate of the confidence intervals over the 1000 datasets. 
As seen in Figure \ref{simulation_coverage},
our method leads to valid confidence intervals, even for a relatively moderate sample size of $n=100$.

\begin{figure}
    \centering
    \includegraphics[scale=0.25]{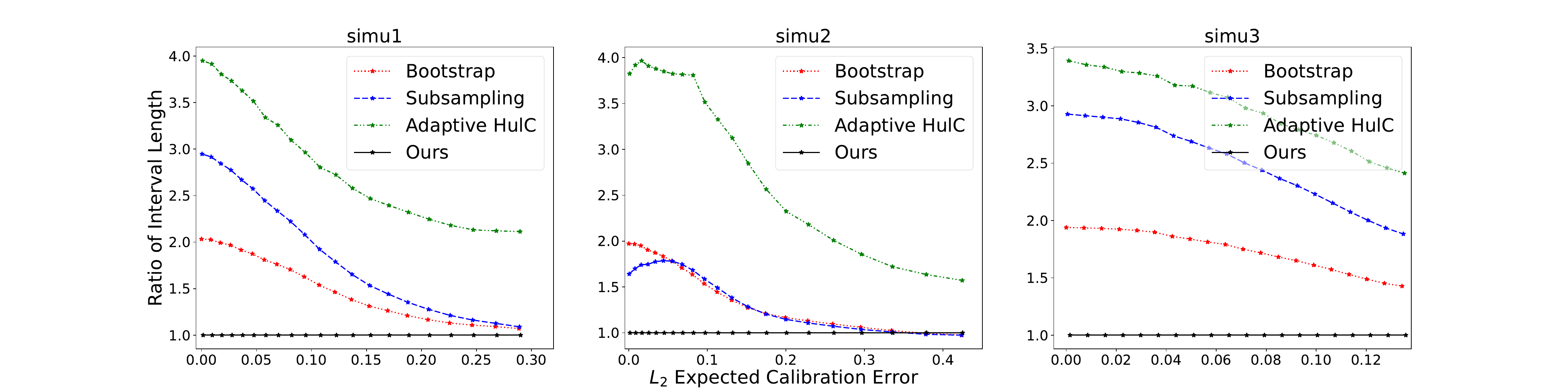}
    \caption{
    Confidence Interval Length vs. Calibration Error: Over 1000 datasets, we report the ratio of the average confidence interval lengths of various methods relative to our method.
    }
    \label{simulation_interval_length}
\end{figure}

For comparison, we also generate confidence intervals using the bootstrap \citep{efron1979bootstrap}, subsampling \citep{politis1994large} with a subsample size of $\lfloor\sqrt{n}\rfloor$, and the Adaptive HulC method \citep{kuchibhotla2024hulc}. As shown in Figure \ref{simulation_coverage}, the bootstrap fails to generate valid confidence intervals in finite samples, with coverage rates significantly smaller than the target level for models with small ECE. While subsampling and the Adaptive HulC generate valid intervals with coverage rates at or above the target level, 
their interval lengths are larger than those produced by our method.
In Figure \ref{simulation_interval_length}, 
we report the ratio of the average interval lengths of other methods relative to our method, the length of confidence intervals can be found in Section 3 of the supplementary material. 
All other methods generate much wider confidence intervals than ours, especially for models with small ECE.

To further demonstrate the effect of the length of confidence intervals, we consider a hypothesis testing problem with the null hypothesis $H_0: \mathrm{ECE}_{1:k}^2 = 0$. For all methods, we reject the null hypothesis if the confidence interval does not contain zero,
and report the power of each method over 1000 datasets. 
We also compare our method with the T-Cal method \citep{lee2023t} in Settings 1 and 2. T-Cal is designed for testing full calibration, so we consider full calibration in these settings. The threshold for the T-Cal method is obtained through 1000 Monte Carlo simulations, as suggested in Section 4.1 of \cite{lee2023t}. 
As shown in Figure \ref{simulation_type2_error}, our method exhibits greater power than the subsampling and adaptive HulC methods. 
The T-Cal method and ours have very similar power, but our method's threshold can be computed analytically through $\sigma_0$ defined in \eqref{var_calibrated}, making the method computationally more efficient.

\begin{figure}
    \centering
    \includegraphics[scale=0.25]{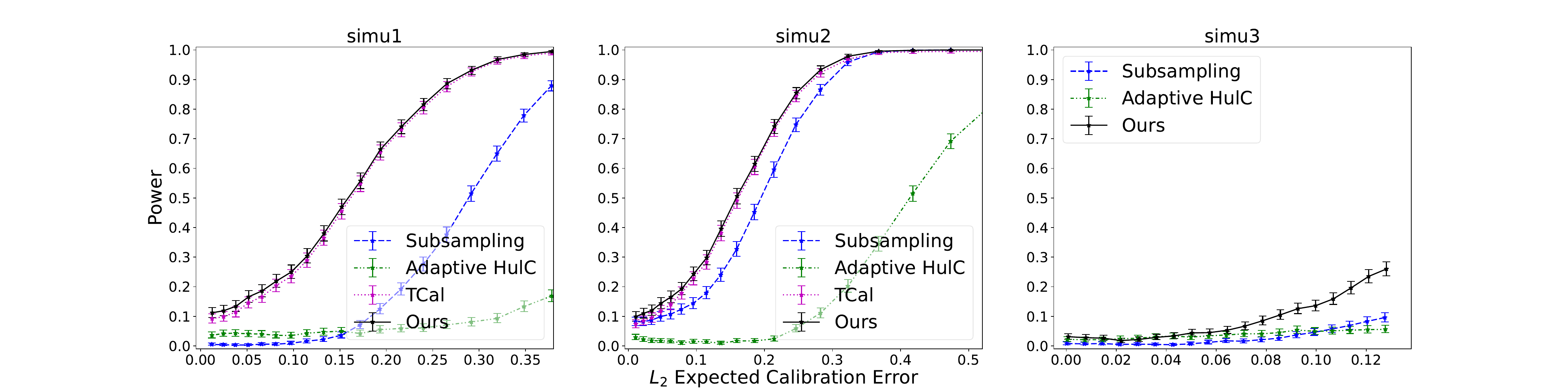}
    \caption{Power vs. Calibration Error.  For each setting and each value of $\beta$, we compute the power (percentage of null hypothesis rejections) over 1000 datasets. The error bars for power are generated using the Clopper–Pearson method.}
    \label{simulation_type2_error}
\end{figure}

\subsection{Results on Empirical Datasets}
We apply our method to generate confidence intervals for the ECE of machine learning models on several datasets.
\subsubsection{CIFAR}

\begin{table}
\centering
    \begin{tabular}{lcccc}
    \toprule
\multirow{2}{*}{} & \multicolumn{2}{c}{DenseNet40} &   \multicolumn{2}{c}{ResNet110} \\
                  & $\sqrt{T_{m,n}}$        & CI & $\sqrt{T_{m,n}}$        & CI   \\
\midrule
No Calibration      &    20.35\% & [19.55\%, 21.13\%] & 20.28\% & [19.46\%, 21.07\%]   \\                  
Temperature Scaling       &  5.20\%  & [4.26\%, 5.99\%] &  5.84\%  & [4.92\%, 6.64\%]    \\
Matrix Scaling & 38.33\% & [37.52\%, 39.12\%] & 38.00\% & [37.19\%, 38.81\%]  \\
Focal Loss & 7.66\%  & [6.80\%, 8.43\%] & 6.18\%  & [5.27\%, 6.97\%]  \\
MMCE & 5.58\%  & [4.63\%, 6.38\%] & 5.53\%  & [4.60\%, 6.32\%]  \\
\bottomrule          
\end{tabular}
\caption{The values of $\sqrt{T_{m,n}}$ and our confidence intervals for top-1 $\mathrm{ECE}$ of models trained on CIFAR100.}
\label{tab:cifar100}
\end{table}

We evaluate the top-1 $\ell_2$ ECE for models trained on the CIFAR100 dataset, additional results on the CIFAR10 dataset can be found in Section 3 of the supplementary material. 
We consider two models: DenseNet40 \citep{huang2017densely} and ResNet110 \citep{he2016deep}, 
fit using the optimization methods and hyperparameter settings from the respective papers. Additionally, we apply four calibration methods: temperature scaling, matrix scaling \citep{guo2017calibration}, focal loss \citep{mukhoti2020calibrating}, and MMCE loss \citep{kumar2018trainable}. 
We randomly select 10\% of the training data set as the calibration data for posthoc calibration methods. For each method, we compute the estimator $T_{m,n}$ with partition width $1 / (mK) = 1 / 50$ on the test set.
The value of $\sqrt{T_{m,n}}$ and the 90\% confidence interval for $\mathrm{ECE}_{1:1}$ are reported in 
Table \ref{tab:cifar100}. 

Similar to the T-Cal method \citep{lee2023t}, our method can test if a model is significantly mis-calibrated by checking if the confidence interval covers zero.
Additionally, for models obtained using various calibration methods, our method provides a quantitative evaluation of whether the calibration of two models is significantly different.
For instance, Figure \ref{cifar100_diagram} shows the reliability diagram of two DenseNet40 models obtained using temperature scaling and focal loss. From the plot, it is evident that the difference between the confidences and accuracies of the focal loss model is larger than that of the temperature scaling model. Quantitatively, the $T_{m,n}$ estimator of focal loss is larger. Our method further indicates that this difference is statistically significant.

\begin{figure}
    \centering
    \includegraphics[scale=0.3]{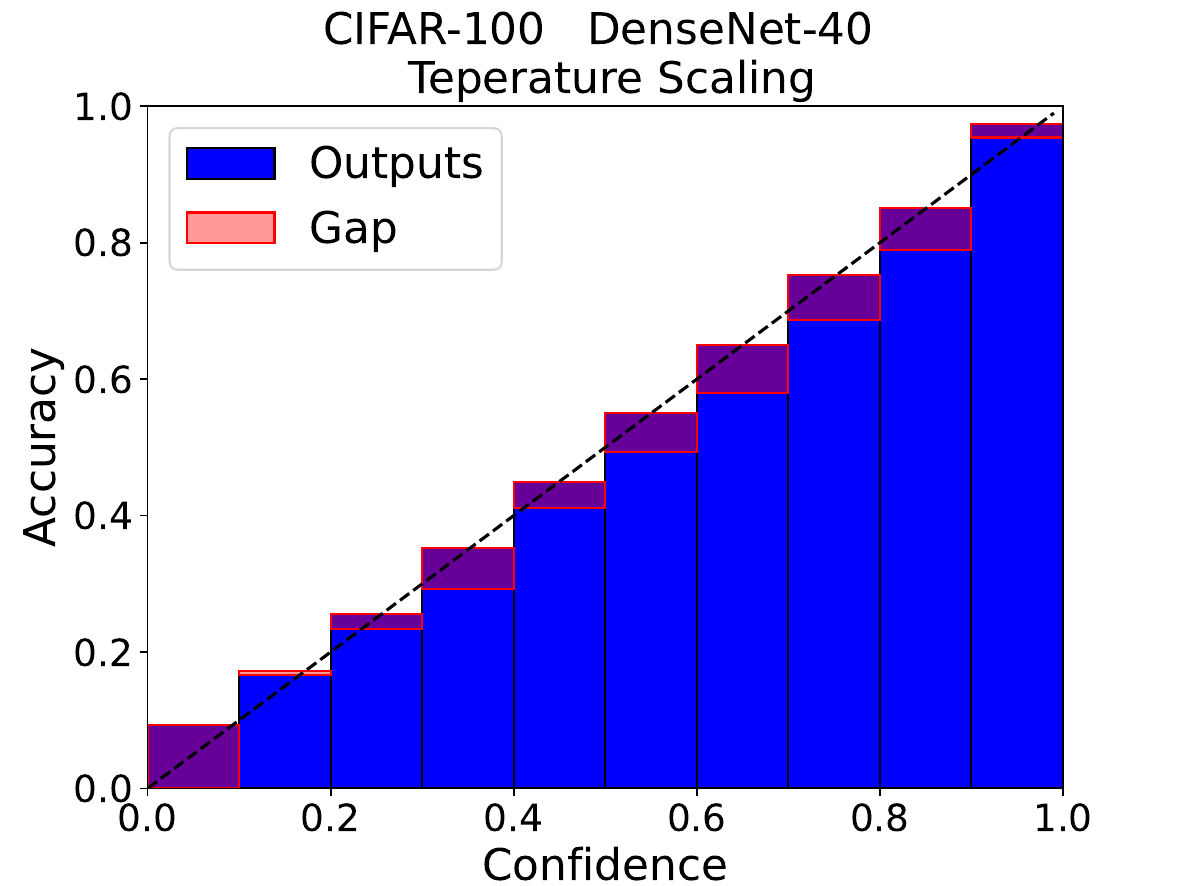}
    \includegraphics[scale=0.3]{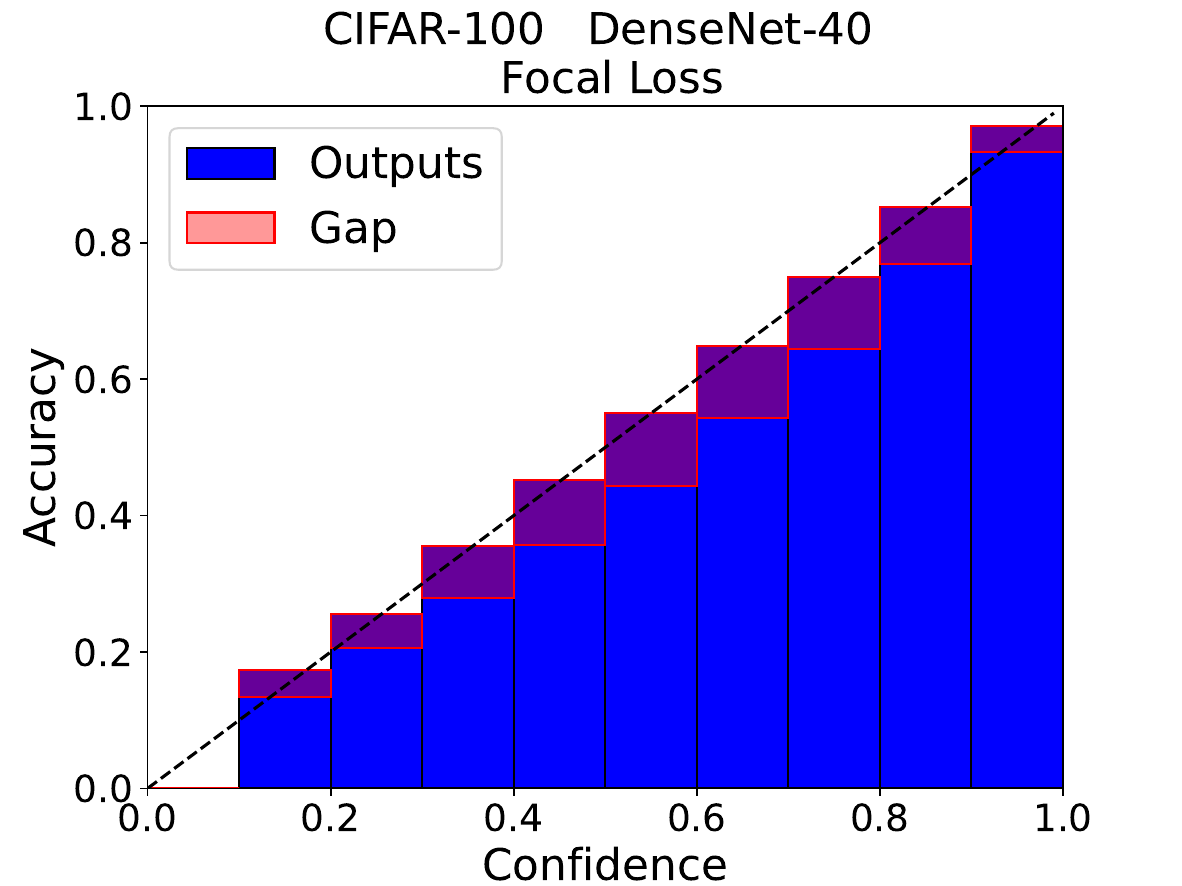}
    \caption{Reliability diagrams of a Densenet-40 model trained on CIFAR-100, calibrated by temperature scaling and focal loss. The dashed lines correspond to perfect calibration. One can observe that the model calibrated by focal loss is more poorly calibrated, and this is confirmed by our method.}
    \label{cifar100_diagram}
\end{figure}

\subsubsection{Alzheimer’s Disease Prediction}

We also apply our method to evaluate the calibration of an Alzheimer’s Disease Prediction Model developed in \cite{wang2023bias}. 
This model uses an ensemble of various machine learning models including neural nets, random forests, LightGBM, among others. The model uses multi-modal data, including MRI features and demographic information, to predict Alzheimer's disease, via binary classification between  Alzheimer's disease and controls. 
It is shown in \cite{wang2023bias} that the ensemble model is capable of delivering statistically fair performance across diverse sub-populations in terms of AUC. We further examine the top-1 calibration of the model, calculating the estimator $T_{m,n}$ with bin width $1 / (mK) = 20$ on the test set and generating confidence intervals for $\mathrm{ECE}_{1:1}$. 
We also generate confidence intervals via Bootstrap. The estimator takes value $\sqrt{T_{m,n}} = 4.52\%$, and the confidence intervals for the two methods are reported in Table \ref{AD_data}. 
The bootstrap suggests that 
the model is mis-calibrated, while our proposed method suggests the opposite. 
As shown in the experiments on simulated data in Section \ref{simulated_data}, the bootstrap fails to generate valid confidence intervals for ECE. This example highlights how a reliance on bootstrap methods might lead to questionable conclusions in model calibration evaluation.

\begin{table}
    \centering
    \begin{tabular}{c|c}
    \hline
        Methods & CI of ECE  \\
        \hline
        Bootstrap & [2.97\%, 11.30\%]  \\
        Ours & [0, 8.27\%] \\
    \hline
    \end{tabular}
    \caption{\small Confidence intervals for top-1 $\ell_2$ ECE for the Alzheimer’s Disease prediction model.}
    \label{AD_data}
\end{table}

\section{Discussion}

This paper addresses the problem of constructing confidence intervals for the top-1-to-$k$ $\ell_2$ Expected Calibration Error $\mathrm{ECE}_{1:k}$. We analyzed the asymptotic distribution of a debiased estimator of $\mathrm{ECE}_{1:k}^2$ and proposed an algorithm to construct confidence intervals that consider the different asymptotic behaviors of the estimator and the non-negativity of $\mathrm{ECE}_{1
}^2$. 
On the theoretical side, the asymptotic distribution of the debiased estimator is derived using a Poissonization argument. In our proof, we carefully specified the conditions needed to obtain the limit law of the conditional distribution and extended existing results to enable Poissonization with correlated components. This technical tool may be useful beyond our paper.
Numerical experiments demonstrate that our proposed method provides valid confidence intervals and is more powerful compared to other resampling-based methods.

Interesting future directions include extending the theoretical result to top-1-to-$k$ $l_p$-ECE with general $p \ge 1$ and investigating unequal volume binning schemes or kernel estimators for the calibration error.

\section*{Acknowledgments}

This work was supported in part by
the ASSET Center for 
Safe, Explainable and Trustworthy
AI-Enabled Systems at
Penn,  the 
ARO, NSF, ONR, and the Sloan Foundation. 
We thank 
Larry Wasserman for inspiring us to work on this problem, and
Arun Kumar Kuchibotla
for helpful discussion and feedback on earlier versions of the manuscript.
We also thank Sebastian Gruber for catching a typo in the earlier versions.

{\small
\setlength{\bibsep}{0.2pt}
\bibliographystyle{unsrt}
\bibliography{ref.bib}
}

\appendix
\newpage
\section{Further Discussion}
\label{fudi}

\subsection{Connection to Nonparametric Quadratic Functional Estimation}
\label{np-q-est}
{
Here, we provide additional explanation on the connection between our estimator $T_{m,n}$ and nonparametric functional estimation to better motivate its use for estimating $\mathrm{ECE}_{1:k}^2$.
    Broadly speaking, our method estimates the ECE in several steps: (1) first, it partitions the space; (2) second, in each component of the space, it estimates the squared functional in each interval by a U-statistic. 
    While this is a simple idea, explaining it precisely requires a certain level of detail and notational overhead.
    Below, we make an analogy to more classically well-known estimation problem,  hoping that this will ease the burden of understanding or method.
}

{
    Let $f$ be a probability density function of a random variable $X$. The estimation of the quadratic integral functional $\int f^2(x) dx$ has been well studied in the broader statistical literature \citep{kerkyacharian1996estimating, laurent1996efficient}. 
    Consider 
    the Haar basis functions
    where $\phi_{ji}(x) = 2^{j/2}\phi(2^jx - i), \phi(x) = 1_{[0,1]}(x)$. 
    These are simply shifted and scaled versions of the ``mother wavelet"
    $\phi(x) = 1_{[0,1]}(x)$, 
    which is just the indicator function of the unit interval, equal to unity on $[0,1]$, and zero otherwise.
}


{
    The quadratic wavelet estimator of $\int f^2(x)dx$ discussed in section 3 of the classic Annals of Statistics paper
    \cite{kerkyacharian1996estimating} is given by 
    \begin{equation*}
        \hat{\theta}_j = \frac{2}{n(n-1)} \sum_{1\leq j_1 < j_2 \leq n}\sum_i \phi_{ji}(X_{j_1}) \phi_{ji}(X_{j_2}),
    \end{equation*}
    which is an unbiased estimator of $\sum_i\left(\int \phi_{ji}(x)f(x)dx\right)^2$.
    At a fixed scale $j$, this estimator can be viewed as first partitioning  
    $\mathbb{R}$ into the intervals
    $B_{i} = [i / 2^{j}, (i+1)/2^{j}]$, 
    and then using a U-statistic within each interval:
    \begin{equation*}
        \hat{\theta}_{ji} = \frac{2}{n(n-1)} \sum_{1\leq j_1 < j_2 \leq n}\phi_{ji}(X_{j_1}) \phi_{ji}(X_{j_2}),
    \end{equation*}
    to estimate $\int_{B_{i}} f^2(x)dx$.
}

{
    If we let $\rm{Vol}(B_i) = 1 / 2^{j}$ be the length of the interval, then $\phi_{ji}(x) =  \rm{Vol}(B_i)^{-1/2} 1_{B_i}$. $\hat{\theta}_{ji}$ is an unbiased estimator of $\left(\int \phi_{ji}(x)f(x)dx\right)^2 = \left(\int_{B_i}f(x)dx\right)^2 / \rm{Vol}(B_i)$, which approximates $\int_{B_{i}} f^2(x)dx$ as $j$ increases and the length of $B_i$ gets smaller.
}

{
    Our estimator $T_{m,n}$ can be written in a similar form. Recall that our goal is to estimate the quadratic integral functional 
    \[
    \mathrm{ECE}_{1:k}^2 = \int \bigl\|\EE{Y_{r_{1:k}}-Z_{(1:k)}|Z_{(1:k)}}\bigr\|^2 dZ_{(1:k)}
    \]
    We partition $\Delta(K,k)$, the support of top-$k$ predicted probabilities, into $\mathcal{B}_m = \{B_1, \dots, B_{\ell_{m}}\}$, where $\ell_{m}>0$ denotes the number of bins.
    We are given a calibration data set $\{(X^{(1)}, Y^{(1)})$, $\dots$, $(X^{(n)}, Y^{(n)})\}$. For all $i\in[n]$, we define the predicted probabilities $Z^{(i)} = f(X^{(i)}) \in \Delta_{K-1}$, and let $U^{(i)} = Y^{(i)}_{r_{1:k}} - Z^{(i)}_{(1:k)}$ be the difference between the top probability predictions and associated labels. For each bin $B_i$, we let $\mathcal{I}_{m,n,i} := \{j: Z^{(j)}_{(1:k)} \in B_i, 1\le  j \le  n\}$ be the indices of data points in $B_i$.
}

{
    We can then define the function 
    \[
    \phi_{i}(Z) = \biggl( \frac{|\mathcal{I}_{m, n, i}| - 1}{n-1}\biggr)^{-1/2} 1_{B_i}(Z),
    \]
    which takes a similar form as the Haar basis functions, 
    but is defined over the cut simplex $\Delta(K,k)$ rather than $\mathbb{R}$, and $\rm{Vol}(B_i)$ is replaced by $(|\mathcal{I}_{m, n, i}| - 1)  / (n-1)$, an empirical estimate of $P(B_i)$. 
    Using this definition, our estimator can be rewritten as:
    \begin{equation*}
    \begin{split}
    T_{m,n} & = \frac1n\sum_{\substack{1 \le  i \le  \ell_{n},\, |\mathcal{I}_{m,n, i}| \ge  2}} \frac{1}{|\mathcal{I}_{m, n, i}| - 1} \sum_{a\neq b \in \mathcal{I}_{m, n, i}} U^{(a)\top}U^{(b)}, \\
    & = \frac{2}{n(n-1)} \sum_{1\leq j_1 < j_2 \leq n}\sum_{1\leq i\leq l_{m,n} } U^{(j_1)\top}U^{(j_2)} \phi_{i}\Bigl(Z^{(j_1)}_{(1:k)}\Bigr) \phi_{i}\Bigl(Z^{(j_2)}_{(1:k)}\Bigr).
    \end{split}
    \end{equation*}
    This form highlights the structural similarity between our estimator and classical estimators of quadratic functionals.
}

\subsection{Further Discussion of Related Work}
\label{compare_tcal}

{
\paragraph{Comparison between discrimination and calibration} For a classifier, strong discrimination does not necessarily require or imply good calibration. As discussed in \cite{guo2017calibration}, many modern machine learning models have achieved better discrimination performance but are often worse in terms of calibration. 
}

{
While discrimination focuses on correctly ranking or classifying input data, calibration concerns the reliability of the predicted probabilities. Many classification models are trained using cross-entropy loss, where the softmax or logistic transformation of model outputs is interpreted as class probabilities. These probabilities are often used in downstream decision-making. Calibration ensures that such probability estimates are meaningful and trustworthy.
}

{
    Poor calibration can lead to over- or under-confidence in predictions. In high-stakes domains such as healthcare, 
    poor calibration could mislead patients and healthcare providers, resulting in suboptimal treatment strategies \cite{van2019calibration}. For instance, consider a decision rule such as “if the predicted probability of relapse in the next month exceeds 0.7, schedule a follow-up”, it is problematic if an ML model predicts 0.7, the true average probability could be 0.5 or 0.9. This discrepancy could have costly consequences (schedule follow-up when not needed, etc).
}

{
\paragraph{Connection between calibration and AI:} calibration can also be applied to generative AI models such as large language models (LLMs), see e.g., \cite{jiang2021can,kadavath2022language,zhang2024study}.
The simplest, yet important, setting is when the language models are used to answer multiple-choice questions. 
This occurs widely in many popular benchmarks which often involve multiple choice test problems from various domains.
Such problems can be viewed as classification and so techniques for calibration of classifiers can be directly used.
A second setting is to calibrate the claims for correctness of an LLM, for instance by asking the LLM ``what is the probability that your answer is correct?" or a similar prompt.
This becomes a binary classification setting and so standard calibration can again be used.
For both of the above settings, our method could be used directly to provide confidence intervals for the calibration error.
}

{
Additionally, there are also a number of 
other ways to apply calibration to LLMs, see e.g.,  \cite{huang2024uncertainty,band2024linguistic}, etc.
Similar questions about uncertainty quantification of the calibration error also arise in these settings.
Our techniques could possibly be extended to cover these scenarios.
}

{
\paragraph {Comparison to T-Cal method \citep{lee2023t}} \cite{lee2023t} used a similar debiased estimator and developed an optimal testing procedure for calibration. Here, we emphasize the key differences between our method and the approach in reference \cite{lee2023t} and highlight our contributions:
    \begin{itemize}
       \item[1] {\bf Asymptotic Distributions of Estimator $T_{m,n}$:} We characterize 
        the asymptotic distribution of the debiased estimator $T_{m,n}$, for all non-negative $\mathrm{ECE}_{1:k}^2$. 
        In contrast, the \cite{lee2023t} does not have any results about the asymptotic distribution, only some bounds on the mean and variance, which are not useful for our work. 
        For instance, in Section A.1 of \cite{lee2023t}, the authors discussed that their analysis mainly focuses on finding a lower bound on mean difference under null and alternative hypotheses $\E_{P_1}[T_{m,n}] - \E_{P_0}[T_{m,n}]$ and upper bound on the variance $\text{Var}_{P_0}[T_{m,n}]$ and $\text{Var}_{P_1}[T_{m,n}]$.
        In addition to the new Poissonization argument highlighted in the paper (Lemma \ref{de_poissonization}), our proof requires a more refined analysis. 
        \item[2] {\bf Explicit Critical Value:} While \cite{lee2023t} introduces an optimal testing procedure, its analysis only provides the asymptotic order of the critical value. In practice, the critical values are chosen via resampling methods (as noted on page 16 of \cite{lee2023t}), and in their simulations, Monte Carlo sampling is used to estimate them. In contrast, our analysis derives an exact form for the critical value $z_{\alpha} \sigma_0$, which can be computed explicitly or using standard numerical integration methods.
              \item[3] Some additional extensions include:
        \begin{itemize}
            \item[(1)] We extend the analysis from full calibration to top-1-to-$k$ calibration, which is more practical in real-world applications.
            \item[(2)] We refine the debiased estimator by replacing the factor $1/(n|\mathcal{I}_{m,  i}|)$ with $1/[n\bigl(|\mathcal{I}_{m,  i}| - 1\bigr)]$ for each bin. As shown in Remark \ref{small_bias}, our estimator has a smaller bias.
        \end{itemize}
    \end{itemize}
}

\paragraph{Broader Work on Uncertainty Quantification for Machine Learning.}
Beyond calibration, an important line of work aims to quantify the uncertainty in the predictions of machine learning models by constructing prediction sets, as opposed to point prediction sets.
The idea of prediction sets dates back at least to the pioneering works of \cite{Wilks1941}, \cite{Wald1943}, \cite{scheffe1945non}, and \cite{tukey1947non,tukey1948nonparametric}.
More recently conformal prediction has emerged as a prominent methodology for constructing prediction sets \citep[see, e.g.,][]{saunders1999transduction,vovk1999machine,vovk2005algorithmic,Vovk2013, lei2013distribution,angelopoulos2021gentle,guan2023localized, romano2020classification,liang2023conformal,dobriban2023symmpi}. 
Predictive inference methods
\citep[e.g.,][etc]{geisser2017predictive}  
have been developed under various assumptions
\citep[see, e.g.,][]{bates2021distribution,park2021pac,park2022pac,sesia2022conformal,qiu2023prediction,li2022pac,kaur2022idecode,si2023pac,lee2024simultaneous}.

{
\paragraph{Comparison to Conformal Methods}
Both conformal prediction and calibration are methods to improve uncertainty quantification in machine learning.
In classification problems, standard split conformal methods construct prediction sets for labels, ensuring that the true label is covered with a specified probability. These methods provide marginal coverage guarantees (the probability is taken over both the input and label distributions, rather than being conditioned on the input). Additionally, conformal methods may alter the decision-making workflow by producing a set of predictions rather than a single probabilistic output.
}

{
In contrast, calibration requires that the predicted probabilities accurately reflect the likelihood of events, ensuring that the model gives reliable probabilistic forecasts. For instance, \cite{zhao2021calibrating} demonstrated that the Bayes decision rule based on well-calibrated probabilities will lead to optimal expected risk. Our method provides a rigorous framework for evaluating model calibration, offering asymptotically valid confidence intervals for the Expected Calibration Error. 
}

{
\subsection{Discussion of the HulC Method}
\label{hudi}
In our simulated data experiments, we compared our method with the HulC method \citep{kuchibhotla2024hulc}. Our confidence interval is asymptotically exact, the HulC method can provide finite-sample valid confidence intervals in some cases. Here we would like to clarify that the finite-sample coverage guarantee of the HulC method does not trivially hold in our context.
}

{
For the HulC method,
a lower bound on the finite sample coverage bound only holds if there is a known upper bound on the median bias of the estimator, and an exact coverage bound (say between $1-\alpha$ and $1-\alpha+\epsilon$ for some $\epsilon>0$) only holds if we have both upper and lower bounds on the bias. Unfortunately, it is not clear how to prove such bounds.
This median bias $\Delta$ of the estimator $T_{m,n}$ is not zero in finite samples. Our result only implies that the asymptotic median bias is zero.
According to \cite{kuchibhotla2024hulc}, if the number of splits $B_{\alpha,\Delta}$ is chosen such that $(1/2 - \Delta)^{B_{\alpha,\Delta}} + (1/2 + \Delta)^{B_{\alpha,\Delta}} \leq \alpha$ with $\Delta = 0$, then the HulC interval is asymptotically valid. 
Finite-sample validity only holds for $n > N_{\alpha}$, where $N_{\alpha}$ is a quantity depending on the convergence rate of the median bias.
 Unfortunately, we do not know what $N_{\alpha}$ is in our setting.
 }

{
Furthermore, we use Algorithm 1 from the HulC paper \cite{kuchibhotla2024hulc}, where the number of splits is randomized. As stated in \cite{kuchibhotla2024hulc}, this, does not imply coverage validity,
    because with non-zero probability, we can have a smaller number of splits than required.
    In our experiments, we use the adaptive HulC, where the median bias is estimated via subsampling and the number of splits is also randomized as in Algorithm 1 in \cite{kuchibhotla2024hulc}. Therefore, the finite-sample coverage guarantee does not trivially hold in our context.
}

\section{Proofs}
In this section, we present the proofs for our theoretical results in Section 3. The proofs are organized as follows: 
Section \ref{moments_calculation} details the preliminary calculations of the moments of the Poissonized variable \eqref{MN}, which are essential for establishing the asymptotic normality of \eqref{MN} using the Lyapunov Central Limit Theorem. 
The proofs of the asymptotic normality for the estimator $T_{m,n}$ (Theorems 3.3 and 3.4) are provided in Sections \ref{proof_clt_calibrated_model} and \ref{proof_clt_miscalibrated_model}. These proofs use a {Poissonization argument}. Our analysis extends the results from \cite{beirlant1998asymptotic} by introducing a result that facilitates Poissonization with correlated components (see Lemma \ref{de_poissonization}). 
The convergence of the variance estimator (Proposition 3.5) is established in Section \ref{proof_sigma1_hat_convergence}, which involves calculating the estimator's mean and variance, with the conclusion drawn via Chebyshev’s inequality. 
Section \ref{bias_var} analyzes the mean and variance of the estimator $T_{m,n}$, guiding the selection of the number of bins. 
Finally, the asymptotic correct coverage rates (Theorems 3.7 and 3.9) are established in Sections \ref{proof_KI_mean} and \ref{proof_KI_ece}.

{\bf Notations.}
In the proofs, we will rely on the following additional notations.
We denote the imaginary unit by $\i $.
For two strictly positive sequences $(a_n)_{n\ge1}$ and $(b_n)_{n\ge1}$,
we write $a_n \sim b_n$ if $a_n/b_n\to 1$ as $n\to\infty$.

\subsection{Moment Calculations}
\label{moments_calculation}

We start with some preliminary moment calculations that will be used later in the arguments.
With notation defined in 
Section 2 of the main text,
let $(\underline{Z},\underline{Y})=(Z_{(1:k)},Y_{r_{1:k}})$.
Let $B$ be a subset of $\Delta(K,k)$, 
such that $\mu(B) = P(Z_{(1:k)} \in B)>0$, 
and
$N\sim \text{Poisson}(n \mu(B))$. We allow $B=B_{(n)}$ to depend on $n$, such that $\lim_{n\rightarrow \infty} \mu(B_{(n)}) = 0$ and $\lim_{n\rightarrow \infty} n \mu(B_{(n)}) = \infty$, but  for simplicity, we do not display the subscript $n$ unless it is required.

Let $(\underline{Z}(B), \underline{Y}(B))$ be a random variable such that $P\bigl((\underline{Z}(B), \underline{Y}(B)) \in S\bigr) = P\bigl((Z_{(1:k)}, Y_{r_{1:k}}) \in S \mid Z_{(1:k)} \in B\bigl)$ for any Borel set $S$.
Further, let $\{(\underline{Z}^{(1)}(B),\underline{Y}^{(1)}(B)),\dots, ( \underline{Z}^{(N)}(B),\underline{Y}^{(N)}(B))\}$ 
be an i.i.d.~sample from the distribution of $(\underline{Z}(B), \underline{Y}(B))$ and 
define $V^{(i)} = \underline{Y}^{(i)}(B) - \underline{Z}^{(i)}(B)$ and
$V = \underline{Y}(B) - \underline{Z}(B)$. We first study the moments of 
\begin{equation}\label{mn}
\begin{split}
M_n & := \frac{N^2}{n(N-1)I(N\ge  2)}\Biggl[\biggl\|\frac{1}{N}\sum_{i=1}^{N}V^{(i)} \biggr\|^2 - \frac{1}{N}\sum_{i=1}^{N}\left\|V^{(i)}\right\|^2 \Biggr] \\
& = \frac{2}{n(N-1)I(N\ge  2)}\sum_{1\le  j_1 < j_2 \le  N} {V^{(j_1) \top}}V^{(j_2)}.
\end{split}
\end{equation}
Above, we use the convention that
$\sum_{i=1}^0 V^{(i)}=0$, 
$\sum_{i=1}^0 \bigl\|V^{(i)}\bigl\|^2=0$, $\sum_{1\le  j_1 < j_2 \le  1} {V^{(j_1) \top}}V^{(j_2)} = 0$
and
$0/0=0$, so that $M_n=0$ when $N=0$ or $N = 1$.

\subsubsection{Preliminary Calculations}
Denote the coordinates of 
$V$ as $V  = (V_1, \ldots, V_k)^\top$
and those of 
$V^{(i)}$, $i\in[N]$ as
$V^{(i)}  = (V^{(i)}_1, \ldots, V^{(i)}_k)^\top$.
Conditioned on $N$, $V^{(i)}$, $i\in[N]$ is an i.i.d.~sample, so we have
\begin{equation*}
\begin{split}
\EE{{V^{(1) \top}}V^{(2)}} & = \EE{V^{(1) \top}}\EE{V^{(2)}} = \bigl\|\EE{V}\bigr\|^2, \\ 
\EE{\bigl[{V^{(1) \top}}V^{(2)}\bigr]^2} & = 
\sum_{i=1}^{k}\sum_{j=1}^{k}\EE{V^{(1)}_i V^{(2)}_i V^{(1)}_j V^{(2)}_j} = \sum_{i=1}^{k}\sum_{j=1}^{k}\left(\EE{V_i V_j}\right)^2.
\end{split}
\end{equation*}
Hence, $\Var{{V^{(1) \top}}V^{(2)}} = \sum_{i=1}^{k}\sum_{j=1}^{k}\left(\EE{V_i V_j}\right)^2 - \bigl\|\EE{V}\bigr\|^4$.
Moreover, 
\begin{equation*}
\begin{split}
& \Cov{{V^{(1) \top}}V^{(2)}, {V^{(1) \top}}V^{(3)}} \\
 = & \EE{{V^{(2) \top}}}\EE{V^{(1)}{V^{(1) \top}}}\EE{V^{(3)}} - \EE{{V^{(2) \top}}}\EE{V^{(1)}}\EE{{V^{(1) \top}}}\EE{V^{(3)}} \\
= & \EE{V^{\top}}\Cov{V}\EE{V}.
\end{split}
\end{equation*}

Recalling that $I(\cdot)$ is the indicator of an event,
with $M_n$ from \eqref{mn},
we have conditional on $N$ that 
\begin{equation}
\label{conditional_mean}
\begin{split}
\EE{M_n | N}  &= \frac{N I(N\ge  2)}{n}  \EE{{V^{(1) \top}}V^{(2)}} = \frac{N  I(N\ge  2)}{n} \bigl\|\EE{V}\bigr\|^2. 
\end{split}
\end{equation} 
Using the results above, 
\begin{equation}
\label{conditional_var}
\begin{split}
&\Var{M_n | N} =  \frac{2 N(N-1) I(N\ge  2) \cdot (\Var{{V^{(1) \top}}V^{(2)}} 
 + 2(N-2) \Cov{{V^{(1) \top}}V^{(2)}, {V^{(1) \top}}V^{(3)}})}
 {n^2(N-1)^2} \\
 &= \frac{2 N I(N\ge  2)}{n^2(N-1)} \left[\sum_{i=1}^{k}\sum_{j=1}^{k}\left(\EE{V_i V_j}\right)^2 - \bigl\|\EE{V}\bigr\|^4 \right]
 +\frac{4N(N-2)I(N\ge  2)}{n^2(N-1)}  \EE{V^{\top}}\Cov{V}\EE{V}.
\end{split}
\end{equation} 
Next, since  $N\sim \text{Poisson}(n \mu(B))$, we have
\begin{align}\label{emn}
\EE{M_n} &= \EE{\EE{M_n|N}} = 
\mu(B)\Bigl(1 - e^{-n\mu(B)}\Bigr) \bigl\|\EE{V}\bigr\|^2.
\end{align}
With $c_n = n\mu(B)$,
\begin{equation*}
 \EE{N^2 I(N\ge  2)}
 = \sum_{j= 2}^\infty \frac{\bigl(j(j-1) + j \bigr) e^{-c_n} c_n^j}{j!}
 =
 c_n^2\sum_{j= 2}^\infty \frac{e^{-c_n} c_n^{j-2}}{(j-2)!} + c_n\sum_{j= 2}^\infty \frac{e^{-c_n} c_n^{j-1}}{(j-1)!}
 = c_n^2 + c_n \bigl(1- e^{-c_n}\bigr).
\end{equation*}
Thus, we have
\begin{equation}
\label{partition_covariance}
\begin{split}
 &\Cov{N,M_n} =  \EE{\EE{(N - \EE{N})(M_n - \EE{M_n})| N} }   \\
 &=  \frac{ \bigl\|\EE{V}\bigr\|^2}{n}\Cov{N, N I(N\ge  2)}
 =  \frac{ \bigl\|\EE{V}\bigr\|^2}{n} 
 \bigl(\EE{N^2I(N\ge  2)}
 -\EE{N} \EE{N I(N\ge  2)}\bigr)\\
 &= \frac{ \bigl\|\EE{V}\bigr\|^2}{n} 
 \biggl( c_n^2 + c_n\Bigl(1 - e^{-c_n}\Bigr)
 -c_n\Bigl(c_n - c_ne^{-c_n}\Bigr) \biggr)
 = \mu(B)\Bigl(1 + \bigl(c_n - 1\bigr)e^{-c_n}\Bigr)\bigl\|\EE{V}\bigr\|^2. 
\end{split}
\end{equation}
Moreover, by the law of total variance, we have 
\begin{equation}
\label{partition_variance}
\begin{split}
& \Var{M_n} 
= \EE{\Var{M_n | N}} + \Var{\EE{M_n | N}} \\ 
=&  \frac{2}{n^2} \biggl\{\EE{\frac{N I(N\ge  2)}{N-1}} \biggl[\sum_{i=1}^{k}\sum_{j=1}^{k}\left(\EE{V_i V_j}\right)^{2} - \bigl\|\EE{V}\bigr\|^4 \biggr] \biggr\}\\
 &+ \frac{1}{n^2}\left\{ \Var{N I(N\ge  2)} \bigl\|\EE{V}\bigr\|^{\textcolor{black}{4}}  +\EE{\frac{4 N (N-2)I(N\ge  2)}{N-1}} \EE{V^{\top}}\Cov{V}\EE{V}\right\}.
\end{split}
\end{equation}

Since $N \sim \text{Poisson}(c_n)$, by Lemma \ref{poisson_inverse_ineq},
\begin{equation}
\label{poisson_order_-1}
\left\vert \EE{\frac{I(N\ge  2)}{N - 1}} - \frac{1}{c_n}\right\vert \le  \frac{9}{c_n^2}. 
\end{equation}
Then 
consider $B = B_{(n)}$ depending on $n$,
and let $n\to\infty$.
Since $c_n = n\mu(B_{(n)}) \rightarrow \infty$, we have 
\begin{equation}
\label{poisson_order_0}
\EE{\frac{N I(N\ge  2)}{N - 1}} = O(1),  \quad \left\vert \EE{\frac{4N (N-2)I(N\ge  2)}{N - 1}} - 4c_n\right\vert = O(1),
\end{equation}
and
\begin{equation}
\label{poisson_order}
\begin{split}
 & \bigl\vert\Var{N I(N\ge  2)}- c_n\bigr\vert 
= \Bigl\vert\ c_n^2 + c_n(1-e^{-c_n}) - c_n^2(1-e^{-c_n})^2  - c_n\Bigr\vert\\
&=  e^{-c_n}\bigl(2c_n^2-c_n -c_n^2e^{-c_n}\bigr) = o(1).
\end{split}
\end{equation}

Letting $\bar{V}^{i,j} = {V^{(i) \top}}V^{(j)} - \EE{{V^{(i) \top}}V^{(j)}}$, by the independence of $V^{(i)}$ for different $i$, 
$\EE{\bar{V}^{i,j}\bar{V}^{s,t}} = 0$ if $\{s,t\}\cap \{i,j\}=\emptyset$. 
So if $\EE{\prod_{l=1}^6 \bar{V}^{i_l, j_l} } \neq 0$, there can be at most seven different indices in $(i_1,\dots i_6, j_1, \dots, j_6)$.
Also since $\|V^{(i)}\|_{\infty} \le  1$, we have $\bar{V}^{i,j} \le  2k$ for all $i,j$. 
Since $N / (N - 1) \le  2$ for $N \ge  2$, 
we have that
for some $\kappa_1>0$ depending only on $k$, for $N\ge 2$,
\begin{equation*}
\begin{split}
& \EE{|M_n - \EE{M_n}|^{3} | N}  \le  \sqrt{\EE{|M_n - \EE{M_n}|^{6} | N}}  \\
& \le  \sqrt{\kappa_1 \frac{N(N-1)(N-2)(N-3)(N-4)(N-5)(N-6)I(N \ge  6) + \kappa_1}{n^{6}N^{6}}},
\end{split}
\end{equation*}
whereas $\EE{|M_n - \EE{M_n}|^{3} |0} = 0$.
Therefore, for some $\kappa_2, \kappa_3>0$ depending only on $k$, by Jensen's inequality
\begin{equation}
\begin{split}
\label{third_moment}
\EE{|M_n - \EE{M_n}|^{3}} & = \EE{\EE{|M_n - \EE{M_n}|^{3} | N}} \\
& \le  \frac{\E\sqrt{(\kappa_1N + \kappa_1/N^6)I(N\ge 1)}}
{n^{3}}
\le  \kappa_2 \frac{\sqrt{c_n}}{n^3} + \kappa_3 \frac{1}{n^3}.
\end{split}
\end{equation}
Next, since $c_n = \EE{N} = n\mu(B)$, 
by
relations (5.21) and (5.22) on page 121 of \cite{kendall1946advanced},
the central moments $\EE{(N-c_n)^{j}}$, $j\ge 2$,  satisfy
\begin{equation*}
\EE{(N-c_n)^{j}} = (j-1)c_n\EE{(N-c_n)^{j-2}} + c_n \frac{d}{dc}\EE{(N-c)^{j-1}}\vert_{c_n},
\end{equation*}
where $\frac{d}{dc}\EE{(N-c)^{j-1}}$ is the derivative of $c\mapsto\EE{(N-c)^{j-1}}$ with respect to $c$.
In particular,
$\EE{(N-c_n)^{2}}= \EE{(N-c_n)^{3}} = c_n$, while
$$\EE{(N-c_n)^{4}} = 3c_n^2 + c_n, \,
\EE{(N-c_n)^{5}} = 10c_n^2 + c_n, \,
\textnormal{ and }\,
\EE{(N-c_n)^{6}} = 15c_n^3 + 25c_n^2 + c_n.$$
Thus, 
we have 
$\EE{|N - \EE{N}|^3} \le  \sqrt{\EE{|N - \EE{N}|^6}} = \sqrt{c_n + 25c_n^2 + 15 c_n^3}$. 
Therefore, 
if $c_n\ge \kappa_0$ for some $\kappa_0$, then 
for some $\kappa_4$ depending only on $k$ and $\kappa_0$,
\begin{equation}
\label{poisson_third_moments}
    \EE{\left\vert\frac{N - \EE{N}}{\sqrt{n}}\right\vert^3} \le  \kappa_4 \mu^{\frac{3}{2}}(B). 
\end{equation}

\subsubsection{Calibrated Model}
In this section we present some simplified moment calculations for the case of a calibrated model. 
If $f$ is top-1-to-$k$ calibrated,
i.e., $\EE{\underline{Y}(B) | \underline{Z}(B) = Z_{(1:k)}} = Z_{(1:k)}$ $P_Z$-almost surely, 
then 
$\EE{V} = 0$. 
By \eqref{emn}, we have 
$\EE{M_n} = 0$,
and by \eqref{partition_covariance}, 
we have
\begin{equation}
\label{partition_covariance_calibrated}
\begin{split}
 \Cov{M_n, N} = 0.
\end{split}
\end{equation}
Further, by \eqref{partition_variance}, we have
\begin{equation*}
\Var{M_n} 
= \frac{2}{n^2} \EE{\frac{N I(N\ge  2)}{N-1}} \biggl[\sum_{i=1}^{k}\sum_{j=1}^{k}\left(\EE{V_i V_j}\right)^2 \biggr]. 
\end{equation*}
Recalling that $V = \underline{Y}(B) - \underline{Z}(B)$ and $\underline{Y}(B) \in \{0, 1\}^k$ with $\sum_{i=1}^k \underline{Y}(B)_i \le 1$, 
we have that for $i \neq j$, $\EE{\underline{Y}(B)_i \underline{Y}(B)_j} = 0$ and   $\EE{\underline{Y}(B)_i^2} = \EE{\underline{Y}(B)_i} = \EE{\underline{Z}(B)_i}$. Therefore, 
\begin{equation*}
\begin{split}
& \left(\EE{V_i V_j}\right)^2  =  \EE{\underline{Z}(B)_i \underline{Z}(B)_j}^2, \, \text{if} \quad i\neq j, 
\,\text{ and  }\\
& \left\{\EE{V_i^2}\right\}^2 = \EE{\underline{Z}(B)_i}^2 - 2\EE{\underline{Z}(B)_i}\EE{\underline{Z}(B)_i^2} + \EE{\underline{Z}(B)_i^2}^2.
\end{split}
\end{equation*}
In conclusion, we have 
\begin{equation}
\label{partition_variance_calibrated}
\begin{split}
 \Var{M_n} 
= & \EE{\frac{2N I(N\ge  2)}{n^2 (N-1)}} 
\biggl[\sum_{i,j=1}^{k}\left\{\EE{\underline{Z}(B)_i \underline{Z}(B)_j}\right\}^2 \biggr] \\
& + \EE{\frac{2N I(N\ge  2)}{n^2 (N-1)}} 
\biggl[\sum_{i=1}^{k}\EE{\underline{Z}(B)_i}\bigl(\EE{\underline{Z}(B)_i}-2\EE{\underline{Z}(B)_i^2}\bigr)
\biggr].
\end{split}
\end{equation}

Recalling that  $\bar{V}^{i,j} = {V^{(i) \top}}V^{(j)} - \EE{{V^{(i) \top}}V^{(j)}}$ 
and $\EE{V} = 0$, 
we have
$\EE{\prod_{l=1}^3 \bar{V}^{i_l, j_l} } = 0$ whenever an index appears only once in $(i_1, i_2, i_3, j_1, j_2, j_3)$.
So if $\EE{\prod_{l=1}^3 \bar{V}^{i_l, j_l} } \neq 0$, there can be at most three distinct indices in 
$(i_1, i_2, i_3, j_1, j_2, j_3)$. 
Therefore, for some constant $\kappa_5$,
\begin{equation*}
\begin{split}
\EE{|M_n |^{3} | N} \le  \kappa_5 \frac{N(N-1)(N-2)}{n^{3}N^{3}}.
\end{split}
\end{equation*}
Thus
\begin{equation}
\label{third_moment_calibrated}
\EE{|M_n - \EE{M_n}|^{3}}  = \EE{|M_n |^{3}} = \EE{\EE{|M_n |^{3} | N}} \le  \frac{\kappa_5}{n^3}. 
\end{equation}
Finally, we present the following lemma, which was used in the above moment calculations.
\begin{lemma} 
\label{poisson_inverse_ineq}
If $N$ is a $\text{Poisson}(a)$ random variable for $a>0$, then 
\begin{equation*}
\left\vert \EE{\frac{I(N\ge  2)}{N - 1}} - \frac{1}{a}\right\vert \le  \frac{9}{a^2}. 
\end{equation*}
\begin{proof}
The proof is similar to the proof of Lemma 1 of \cite{beirlant1998asymptotic}.
By Jensen’s inequality, 
\begin{equation*}
\begin{split}
& \EE{\frac{I(N\ge  2)}{N - 1}}  \ge  \EE{\frac{1}{N} \Bigl| N \ge  2} P(N \ge  2) \\
& \ge  \frac{1}{\EE{N| N \ge 2}}P(N \ge  2) = \frac{P(N\ge  2)^2}{\EE{N}} = \frac{1}{a} - \frac{2(a+2)e^{-a} -(a+1)^2 e^{-2a} }{a} \ge  \frac{1}{a} - \frac{9}{a^2}.
\end{split}
\end{equation*}
For $N \ge  2$, $(N-1)(N+11) \ge  (N+1)(N+2)$, and so  $\frac{1}{N - 1} \le  \frac{1}{N + 1} + \frac{9}{(N+1)(N+2)}$. Therefore
\begin{equation*}
\begin{split}
\EE{\frac{I(N\ge  2)}{N - 1}} &\le  \EE{\frac{I(N\ge  2)}{N+1 1}} + \EE{\frac{9I(N\ge  2)}{(N+1)(N+2)}} \\
& =  \sum_{j=2}^{\infty} \frac{1}{j+1}\frac{e^{-a} a^{j}}{j!} + \sum_{j=2}^{\infty} \frac{9}{(j+1)(j+2)}\frac{e^{-a} a^{j}}{j!}\le  \frac{1}{a} + \frac{9}{a^2}.
\end{split}
\end{equation*}
This finishes the proof.
\end{proof}

\end{lemma}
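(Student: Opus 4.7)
The plan is to sandwich $1/(N-1)$ between tractable functions of $N$ whose expectations are given in closed form by Poisson identities, yielding matching bounds of the form $1/a \pm O(1/a^2)$. The two identities I would reach for are $\EE{1/(N+1)} = (1-e^{-a})/a$ and $\EE{1/((N+1)(N+2))} = (1-(1+a)e^{-a})/a^2$, obtained by shifting the summation index so that the factor $1/(j+1)$ or $1/((j+1)(j+2))$ absorbs into a properly normalized Poisson PMF. Both quantities equal their leading-order terms $1/a$ and $1/a^2$ respectively, up to exponentially small corrections.

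For the upper bound, I would look for a constant $c$ such that $\frac{1}{N-1} \le \frac{1}{N+1} + \frac{c}{(N+1)(N+2)}$ for every integer $N\ge 2$. Rewriting the inequality as $\frac{2(N+2)}{N-1} \le c$ and checking the extreme case $N=2$ shows any $c \ge 8$ works. Taking expectations under the indicator $I(N\ge 2)$ and using the two identities above then gives $\EE{I(N\ge 2)/(N-1)} \le 1/a + c/a^2$, since the subtracted exponential terms are nonnegative.

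For the lower bound, I would start from $1/(N-1) \ge 1/N$ on $\{N\ge 2\}$ and apply conditional Jensen's inequality to the convex function $x\mapsto 1/x$. This yields $\EE{I(N\ge 2)/(N-1)} \ge P(N\ge 2)\EE{1/N \mid N \ge 2} \ge P(N\ge 2)^2/\EE{N\mid N\ge 2}\cdot P(N\ge 2)/P(N\ge 2) = P(N\ge 2)^2/\EE{N} = P(N\ge 2)^2/a$, using $P(N\ge 2)\EE{N\mid N\ge 2} \le \EE{N}=a$. Substituting $P(N\ge 2) = 1 - (1+a)e^{-a}$ and expanding the square produces $1/a$ minus $2(1+a)e^{-a}/a$ plus a nonnegative $O(e^{-2a}/a)$ term, and multiplying the deviation by $a^2$ leaves $2a(1+a)e^{-a}$, which is uniformly bounded on $a > 0$.

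The only mildly delicate step is pinning down a single universal constant that works for all $a>0$. A clean way to handle this is to split into $a\le a_0$ and $a> a_0$: for small $a$, the trivial bound $1/a^2$ already dominates any fixed multiple of $e^{-a}$, and for large $a$, the exponential remainders are absorbed with room to spare. Straightforward arithmetic confirms that the constant $9$ is comfortably large enough on both sides, so no delicate case analysis is required beyond verifying the boundary case in the upper-bound inequality. No Poissonization or CLT machinery is needed; everything is elementary once the two Poisson identities are in hand.
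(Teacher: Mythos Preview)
Your proposal is correct and follows essentially the same route as the paper: for the upper bound, both sandwich $1/(N-1)$ by $1/(N+1)+c/((N+1)(N+2))$ and invoke the two Poisson identities (the paper uses $c=9$ via $(N-1)(N+11)\ge (N+1)(N+2)$, while you correctly observe that $c=8$ already suffices at the extreme $N=2$); for the lower bound, both pass from $1/(N-1)$ to $1/N$, apply conditional Jensen to get $P(N\ge 2)^2/\EE{N}$, and then bound the exponential remainder uniformly in $a$.
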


\subsection{Proof of Theorem 
3.3
}
\label{proof_clt_calibrated_model}

The proof uses a Poissonization argument 
inspired by, and extending, the arguments of \cite{beirlant1994asymptotic, beirlant1998asymptotic}.
To clarify our technical contribution, we will state the key results of \cite{beirlant1998asymptotic} in the context of calibration and demonstrate how our analysis more precisely states the required conditions, while also extending the results.

\subsubsection{Poissonization}
Let $\tilde{N} \sim \text{Poisson}(n)$ and $\{(Z^{(1)}, Y^{(1)}),\dots, (Z^{(\tilde{N})}, Y^{(\tilde{N})})\}$ be an i.i.d.~sample following the distribution $P$.  
Recall that we use $\mathcal{B}_m = \{B_1, \dots, B_{\ell_{m,n}}\}$ to denote our partition.
For $i\in [\ell_{m,n}]$,
letting $\mathcal{I}(\tilde{N})_{i} = \left\{j: Z^{(j)}_{(1:k)} \in B_i, 1\le  j \le  \tilde{N} \right\}$ be the indices of data points falling into $B_i$, we have
\begin{equation*}
\bigl|\mathcal{I}(\tilde{N})_{i}\bigr| = \sum_{j=1}^{\tilde{N}} I\left(Z^{(j)}_{(1:k)} \in B_i\right) \sim \text{Poisson}(n\mu(B_i)),
\end{equation*}
and $|\mathcal{I}(\tilde{N})_{i}|$, $i\in[\ell_{m,n}]$ are mutually independent. Then the scaled Poissonized estimator is defined as 
\begin{equation*}
\begin{split}
 T(\tilde{N}) & = \sqrt{w} \sum_{\substack{1 \le  i \le  \ell_{m,n},\, |\mathcal{I}(\tilde{N})_{i}| \ge  2}} \,\sum_{j_1\neq j_2 \in \mathcal{I}(\tilde{N})_{i}} U^{(j_1)\top}U^{(j_2)}/(|\mathcal{I}(\tilde{N})_{i}| - 1\bigr),
 \end{split}
\end{equation*}
where $w = \text{Vol}(B_i) $ is the volume of any partition element, $i\in [\ell_{m,n}]$. 
Notice that when $\tilde{N} = n$, 
$T(\tilde{N}) / (\tilde{N}\sqrt{w}) $ becomes the estimator $T_{m,n}$ defined in 
equation (4) in the main text.

For $t,v\in \R$,
we define the auxiliary random variable
\begin{equation}\label{MN}
M(\tilde{N}) = t \left(T(\tilde{N}) - \EE{T(\tilde{N})}\right)   + v \frac{\tilde{N} - n}{\sqrt{n}}.
\end{equation}
The key result of the Poissonization approach in \cite{beirlant1998asymptotic} is the following lemma, which states the asymptotic normality of the original statistic $T(n)$, given the asymptotic normality of $M(\tilde{N})$:
\begin{lemma}[Proposition 1 of \cite{beirlant1998asymptotic}]
\label{de_poissonization_original}
Suppose that for 
some $\rho>0$ and
all $t,v\in \R$,
\begin{equation}
\label{original_lemma_condition}
\begin{split}
    \Phi_{n}(t,v) &:= \EE{e^{\i M(\tilde{N})}} \rightarrow e^{-t^2\rho^2 / 2 - v^2 / 2} \quad \text{as} \quad n\rightarrow \infty.
\end{split}
\end{equation}
Then 
\begin{equation*}
\frac{T(n) - \EE{T(n)} } {\rho}\xrightarrow{d} \N(0, 1).
\end{equation*}
\end{lemma}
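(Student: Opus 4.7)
The plan is to use a partial Fourier inversion in the integer-valued variable $\tilde{N}$ to represent the characteristic function of $T(n)$ via the joint characteristic function $\Phi_n(t,v)$, and then to pass to the limit using Stirling's formula together with a dominated convergence argument. Because the data points are i.i.d.\ and $\tilde{N}$ is independent of them, conditioning $T(\tilde{N})$ on $\tilde{N}=n$ recovers the law of $T(n)$. Writing $S_n := T(n)-\EE{T(\tilde{N})}$ and $\tilde{S} := T(\tilde{N})-\EE{T(\tilde{N})}$, I would first derive
\begin{equation*}
P(\tilde{N}=n)\,\EE{e^{\i t S_n}} = \EE{e^{\i t \tilde{S}}\, I(\tilde{N}=n)}.
\end{equation*}
Inserting the Fourier representation $I(\tilde{N}=n) = \frac{1}{2\pi}\int_{-\pi}^{\pi} e^{\i u(\tilde{N}-n)}\,du$ of the indicator, applying Fubini, and substituting $v = u\sqrt{n}$ then yields the key identity
\begin{equation*}
2\pi\sqrt{n}\,P(\tilde{N}=n)\,\EE{e^{\i t S_n}} = \int_{-\pi\sqrt{n}}^{\pi\sqrt{n}} \Phi_n(t,v)\,dv.
\end{equation*}

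Next, Stirling's formula gives $\sqrt{n}\,P(\tilde{N}=n)\to 1/\sqrt{2\pi}$ as $n\to\infty$, so the proof reduces to showing that the right-hand integral tends to $\int_{\R} e^{-t^2\rho^2/2-v^2/2}\,dv = \sqrt{2\pi}\,e^{-t^2\rho^2/2}$. Pointwise convergence of the integrand is immediate from hypothesis \eqref{original_lemma_condition}. Passing the limit inside then gives $\EE{e^{\i t S_n}}\to e^{-t^2\rho^2/2}$, and L\'evy's continuity theorem yields $(T(n)-\EE{T(\tilde{N})})/\rho \xrightarrow{d} \N(0,1)$. Replacing $\EE{T(\tilde{N})}$ by $\EE{T(n)}$ follows from moment estimates of the type in \eqref{emn} that control $\EE{T(\tilde{N})}-\EE{T(n)} = o(\rho)$, combined with Slutsky's lemma.

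The main obstacle is justifying the interchange of limit and integral, since the trivial bound $|\Phi_n(t,v)|\le 1$ is not integrable over $\R$. I would split the integration at a fixed large $K$: on $|v|\le K$, bounded convergence applies directly. On the tail range $K<|v|\le \pi\sqrt{n}$, a uniform integrable envelope is required. The natural source of decay is the characteristic function of the Poisson count itself, which satisfies $|\EE{e^{\i v(\tilde{N}-n)/\sqrt{n}}}|=\exp\!\bigl(-2n\sin^2(v/(2\sqrt{n}))\bigr)\le e^{-2v^2/\pi^2}$ throughout this range, using $\sin^2(x)\ge (2x/\pi)^2$ for $|x|\le\pi/2$. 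Because $T(\tilde{N})$ and $\tilde{N}$ are not independent, this bound cannot be inserted into $|\Phi_n(t,v)|$ directly; one must first condition on the bin counts $\{|\mathcal{I}(\tilde{N})_i|\}$, note that $e^{\i v(\tilde{N}-n)/\sqrt{n}}$ is measurable with respect to them, and apply a Cauchy--Schwarz-type decoupling to factor out the Gaussian tail. Carrying out this tail control for the specific bin-sum structure of $T(\tilde{N})$ is the technical heart of the argument and is exactly where Lemma \ref{de_poissonization_uncorelated} and its companion Lemma \ref{de_poissonization} refine and extend the analyses of \cite{beirlant1998asymptotic}.
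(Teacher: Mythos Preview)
Your skeleton---partial Fourier inversion in the integer variable, Stirling's formula, then a dominated-convergence argument for $\int_{-\pi\sqrt{n}}^{\pi\sqrt{n}}\Phi_n(t,v)\,dv$---is exactly the route the paper follows (see \eqref{inverse_fourier} and \eqref{cchfphi}). You also correctly isolate the real difficulty: producing an integrable envelope for $v\mapsto\Phi_n(t,v)$.

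The gap is in your proposed mechanism for the tail. Conditioning on the bin counts and ``Cauchy--Schwarz-type decoupling'' will not recover the Poisson decay. After conditioning, you have $\Phi_n(t,v)=\EE{e^{\i v(\tilde{N}-n)/\sqrt{n}}\,\psi_t(\{N_i\})}$ with $|\psi_t|\le 1$; but $|e^{\i v(\tilde{N}-n)/\sqrt{n}}|\equiv 1$, so any inequality that passes through moduli (triangle, H\"older, Cauchy--Schwarz) collapses to $|\Phi_n|\le 1$. The Gaussian-like decay $g_n(v)=\exp\{n\mu_n(a)[\cos(v/\sqrt n)-1]\}$ arises from \emph{phase cancellation inside the expectation}, and Cauchy--Schwarz throws that cancellation away. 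This is precisely why the paper remarks that condition \eqref{original_lemma_condition} alone does not yield a dominating function.

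The paper's fix is not an inequality but a structural split: for $a<1$, the statistic $T(\tilde{N},a)$ depends only on bins $1,\ldots,\ell_{m,n}(a)$, while $\tilde{N}(a)=\sum_{i>\ell_{m,n}(a)}|\mathcal{I}(\tilde{N})_i|$ is \emph{genuinely independent} of it. Writing $\tilde{N}=\tilde{N}(a)+(\tilde{N}-\tilde{N}(a))$ then gives the exact factorization $|\Phi_{n,a}(t,v)|\le |\EE{e^{\i v(\tilde{N}(a)-n\mu_n(a))/\sqrt{n}}}|=g_n(v)$ in \eqref{cf_norm_bound}, after which \eqref{dominated_function} supplies the envelope. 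The pieces $T(n,a)$ and $T'(n,a)$ are reassembled at the end via Le~Cam's Lemma~\ref{le_cam_lemma}, letting $a\to 1$. So your final sentence pointing to Lemmas~\ref{de_poissonization_uncorelated}--\ref{de_poissonization} is right in spirit, but the operative idea there is independence through geometric splitting, not a decoupling inequality.
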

In the proof of the lemma, we need to bound the characteristic function of $M(\tilde{N})$ to apply the dominated convergence theorem (refer to equations \eqref{cchfphi}, \eqref{cf_norm_bound}, \eqref{dominated_function}, and \eqref{dominated_convergence} for details; see also equation (10) in \cite{beirlant1998asymptotic}). 
However, with the condition \eqref{original_lemma_condition} alone, it is not clear  how to  construct a dominating function so that the dominated convergence theorem can be applied.

To address this issue, we introduce a 
\emph{novel splitting strategy}, where we split the test statistic $T(\tilde{N})$ into two components parameterized by a scalar  $a\in[0,1]$.
The parameterization is such that in the limit $a\to 1$, we recover the original test statistic. 
For $a<1$, we can bound the characteristic function of a split version of $M(\tilde{N})$ by the norm of the characteristic function of a Poisson variable. This approach allows us to use the dominated convergence theorem to conclude the proof. 
Next, we will introduce the split version of the statistic $T(\tilde{N})$.

For $0 \le  a \le  1$, let $\Delta(a)\subset \Delta(K, k)$ be the \emph{core} of the truncated Weyl chamber $\Delta(K, k)$ 
, defined as
$$\Delta(a)  = \Delta(a,K, k) := \{(z_1, \dots, z_k) \in \Delta(K, k): z_1 \le  1/K + a (1 - 1/K) \}. $$

\begin{figure}
    \centering
\begin{tikzpicture}[scale=3]
\draw[->] (0,0) -- (1.2,0) node[right] {$z_1$};
\draw[->] (0,0) -- (0,1.2) node[above] {$z_2$};

\draw (0,1) -- (1,0);

\fill[gray!20] (0.4,0) -- (0.2,0.2) -- (0.5,0.5) -- (0.6,0.4) -- (0.6,0) -- (0.4,0) -- cycle;

\draw[dashed] (0,0) -- (1, 1);
\node at (1.1,0.8) {$z_1 = z_2$};

\node[below left] at (0,0) {$0$};
\node[below right] at (1,0) {$1$};
\node[above left] at (0,1) {$1$};

\draw[dashed] (0,0.4) node[left] {$\frac{2}{K}$} -- (0.4,0) node[below] {$\frac{2}{K}$};

\def\a{0.5}
\def\CValue{5}
\draw[dashed] (0,0.6) node[left] {$\frac{1}{K}+a(1-\frac{1}{K})$} -- (0.4,0.6);
\draw[dashed] (0.6,0) 
-- (0.6,0.4);

\end{tikzpicture}
    \caption{Illustration of the set $\Delta(a)$ for $K=5$, $k=2$, and $a=1/2$. 
    }
    \label{fig:Da}
\end{figure}

Then 
$\Delta(0) = \{(1/K, 1/K,\ldots, 1/K)^\top\}$
contains only the uniform probability distribution over $K$ indices, while
$\Delta(1) = \Delta(K,k)$ contains the entire truncated Weyl chamber.  
We will first study the estimator defined on $\Delta(a)$ and then let $a \rightarrow 1$.

For $i\in [\ell_{m,n}]$,
letting $b(B_i) := \max_{(z_1, \dots, z_k) \in B_i} z_1$, 
we can reorder $\mathcal{B}_m$ such that $b(B_i)$ are non-decreasing in $i\in [\ell_{m,n}]$. 
For $a\in [0,1]$,
let $\ell_{m,n}(a)$ be the largest index $i\in [\ell_{m,n}]$ such that $b(B_i) \le  1/K + a (1 - 1/K)$.
If no such index exists, then we define $\ell_{m,n}(a)=0$, and define all sums over $1, \ldots, \ell_{m,n}(a)$ as zero below.
Then $\{B_1, \dots, B_{\ell_{m,n}(a)} \}$ 
forms a partition of a subset of $\Delta(a)$.
We point out that even though not all $B_i$, $i\le \ell_{m,n}$ are contained in $\Delta(K,k)$, for $a>0$, $B_1, \dots, B_{\ell_{m,n}(a)} $ are all contained in $\Delta(a)$ by definition.

For $\xi>0$, 
define the following quantities:
\begin{equation}\label{tsd}
\begin{split}
 T(\tilde{N}, a,\xi) & = \sqrt{\xi} \sum_{\substack{1 \le  i \le  \ell_{m,n}(a),\, |\mathcal{I}(\tilde{N})_{i}| \ge  2}} \,\sum_{j_1\neq j_2 \in \mathcal{I}(\tilde{N})_{i}} U^{(j_1)\top}U^{(j_2)}/(|\mathcal{I}(\tilde{N})_{i}| - 1\bigr), \\
  T'(\tilde{N}, a,\xi) & = \sqrt{\xi} \sum_{\substack{\ell_{m,n}(a) + 1 \le  i \le  \ell_{m,n},\, |\mathcal{I}(\tilde{N})_{i}| \ge  2}}\, \sum_{j_1\neq j_2 \in \mathcal{I}(\tilde{N})_{i}} U^{(j_1)\top}U^{(j_2)}/(|\mathcal{I}(\tilde{N})_{i}| - 1\bigr). \\
 \end{split}
\end{equation}
Now,
recalling that  $w = \text{Vol}(B_i) $ is the volume of any partition element $i\in [\ell_{m,n}]$,
for any $a\in[0,1]$
we can split the sums arising in our estimator 
$T_{m,\tilde{N}}$ 
from equation (4) in the main text
    into the two non-overlapping components
$T(\tilde{N}, a):=T(\tilde{N}, a,w)$
and
$T'(\tilde{N}, a):=T'(\tilde{N}, a,w)$.
When $a = 1$, 
$T(\tilde{N}, a) / (\tilde{N}\sqrt{w}) $ recovers the estimator $T_{m,\tilde{N}}$. 
Moreover, for any 
$\tilde{N}$ and $a \in[0,1]$,
and $T(\tilde{N}, 1) = T(\tilde{N}, a) + T'(\tilde{N}, a)$.

Moreover, for $t,v\in \R$,
we define the auxiliary random variables (adding the same term as in \eqref{MN})
\begin{equation}
\label{poissonized_variable}
\begin{split}
M(\tilde{N}, a) & = t \left(T(\tilde{N}, a) - \EE{T(\tilde{N}, a)}\right)   + v \frac{\tilde{N} - n}{\sqrt{n}},    \\
M'(\tilde{N}, a) & = t \left(T'(\tilde{N}, a) - \EE{T'(\tilde{N}, a)}\right)   + v \frac{\tilde{N} - n}{\sqrt{n}}.  
\end{split}
\end{equation}
With these two random variables, 
we can present the following version of Lemma \ref{de_poissonization_original}:

\begin{lemma}[Poissonization with uncorrelated components]
\label{de_poissonization_uncorelated}
Suppose that for some constant $0 < a_0 < 1$, for all $a \in (a_0, 1)$,
there exist $\rho_a, \in \R$, such that 
the limit $\lim_{a\rightarrow 1} \rho_a = \rho$ exists.
Suppose further that for all $t,v\in \R, a \in (a_0, 1)$,
\begin{equation}\label{lemma_condition_corrected}
\begin{split}
    \Phi_{n, a}(t,v) &:= \EE{e^{\i M(\tilde{N}, a)}} \rightarrow e^{-t^2\rho_a^2 / 2 - v^2 / 2} \quad \textnormal{as} \quad n\rightarrow \infty,\, \textnormal{and}\\
    \Phi'_{n, a}(t,v) &:= \EE{e^{\i M'(\tilde{N}, a)}} \rightarrow e^{-t^2(\rho^2 - \rho_a^2) / 2  - v^2 / 2} \quad \textnormal{as} \quad n\rightarrow \infty.
\end{split}
\end{equation}
Then 
\begin{equation*}
\frac{T(n, 1) - \EE{T(n, 1)} } {\rho  }\xrightarrow{d} \N(0, 1).
\end{equation*}
\end{lemma}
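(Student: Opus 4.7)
The plan is to de-Poissonize the joint law of $T^* := T(\tilde N, 1) - \EE{T(\tilde N, 1)}$ and $\tilde N$ via Fourier inversion, using the splitting $T(\tilde N, 1) = T(\tilde N, a) + T'(\tilde N, a)$ both to obtain pointwise convergence of the joint characteristic function $\Phi_n(t, u) := \EE{\exp(\i t T^* + \i u (\tilde N - n)/\sqrt n)}$ and to construct an integrable dominating function---the latter being precisely what was missing from the argument of \cite{beirlant1998asymptotic}. Because $\tilde N$ is integer-valued, the standard partial inversion identity for lattice random variables combined with the rescaling $v = u/\sqrt n$ gives
\begin{equation*}
\EE{e^{\i t T^*} \mid \tilde N = n} = \frac{1}{2\pi\sqrt n\, P(\tilde N = n)} \int_{-\pi\sqrt n}^{\pi\sqrt n} \Phi_n(t, u)\, du,
\end{equation*}
and Stirling's formula yields $\sqrt n\, P(\tilde N = n) \to 1/\sqrt{2\pi}$. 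Thus, once pointwise convergence of $\Phi_n$ and a uniform $u$-integrable majorant have been secured, dominated convergence together with the Gaussian integral $\int_\R e^{-t^2\rho^2/2 - u^2/2}\, du = \sqrt{2\pi}\, e^{-t^2\rho^2/2}$ give $\EE{e^{\i t T^*} \mid \tilde N = n} \to e^{-t^2\rho^2/2}$, and the lemma follows by L\'evy's continuity theorem.

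For pointwise convergence, write $\tilde N = \tilde N_a + \tilde N'_a$ with $\tilde N_a := \sum_{i \le \ell_{m,n}(a)} |\mathcal{I}(\tilde N)_i|$, set $\alpha_a := \mu(\Delta(a))$, and let $S_a := (\tilde N_a - n\alpha_a)/\sqrt n$, $S'_a := (\tilde N - n)/\sqrt n - S_a$. Poissonization makes $(T(\tilde N, a), \tilde N_a)$ independent of $(T'(\tilde N, a), \tilde N'_a)$, and so
\begin{equation*}
\Phi_n(t, u) = \Psi_{n,a}(t, u)\, \Psi'_{n,a}(t, u), \quad \Psi_{n,a}(t, u) := \EE{\exp(\i t (T(\tilde N, a) - \EE{T(\tilde N, a)}) + \i u S_a)},
\end{equation*}
with $\Psi'_{n,a}$ defined symmetrically. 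Independence of $S_a, S'_a$ further gives $\Phi_{n,a}(t, v) = \Psi_{n,a}(t, v)\, \EE{e^{\i v S'_a}}$; the Poisson central limit theorem supplies $\EE{e^{\i v S'_a}} \to e^{-v^2(1-\alpha_a)/2}$, and hypothesis \eqref{lemma_condition_corrected} then forces $\Psi_{n,a}(t, v) \to e^{-t^2\rho_a^2/2 - v^2\alpha_a/2}$; the analogous argument yields $\Psi'_{n,a}(t, v) \to e^{-t^2(\rho^2 - \rho_a^2)/2 - v^2(1-\alpha_a)/2}$. Multiplying cancels the $a$-dependence and yields $\Phi_n(t, u) \to e^{-t^2\rho^2/2 - u^2/2}$ for every fixed $t, u \in \R$ and every $a \in (a_0, 1)$.

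The main obstacle is producing, for some fixed $a < 1$, a $u$-integrable majorant of $|\Phi_n(t, u)|$ on $(-\pi\sqrt n, \pi\sqrt n)$ uniformly in $n$. The trivial bound $|\Psi'_{n,a}(t, u)| \le 1$ reduces the task to bounding $|\Psi_{n,a}(t, u)|$, which factors under Poissonization as $\prod_{i \le \ell_{m,n}(a)} \phi_{n,i}(t, u)$ over independent per-bin characteristic functions. Combining the Poisson modulus identity $|\EE{\exp(\i u (|\mathcal{I}(\tilde N)_i| - c_i)/\sqrt n)}| = \exp(c_i(\cos(u/\sqrt n) - 1))$ (with $c_i := n\mu(B_i)$) with the elementary inequality $1 - \cos x \ge 2 x^2/\pi^2$ on $[-\pi, \pi]$, and a per-bin decoupling estimate exploiting the $U$-statistic structure of the bin contribution conditional on $|\mathcal{I}(\tilde N)_i|$, the target is a Gaussian-tail bound $|\Psi_{n,a}(t, u)| \le \exp(-c_a u^2)$ with $c_a > 0$ independent of $n$. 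I expect this per-bin decoupling to be the principal technical hurdle, because crude pointwise inequalities such as $|\phi_{n,i}(t, u)| \le |\phi_{n,i}(0, u)|$ are false in general; the Gaussian tail has to be extracted by combining Poisson concentration in $|\mathcal{I}(\tilde N)_i|$ with conditional-moment control on the bin-$i$ statistic, available from the H\"older regularity of Condition \ref{ass_1} and the distribution-balance of Condition \ref{ass_2}. Once the majorant is in hand, dominated convergence in the Fourier-inversion formula above closes the argument.
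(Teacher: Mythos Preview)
Your pointwise-convergence argument is fine, and your factorization $\Phi_n = \Psi_{n,a}\cdot\Psi'_{n,a}$ with the subsequent identification of each limit via the hypothesis \eqref{lemma_condition_corrected} is correct. The gap is exactly where you flag it: the dominating function. You discard $|\Psi'_{n,a}|\le 1$ and then try to bound $|\Psi_{n,a}(t,u)|$, the characteristic function of the pair $(T^c(\tilde N,a),\tilde N_a)$. Both components are measurable with respect to the \emph{same} core bins, so there is no independent factor to peel off; as you note, the naive inequality $|\phi_{n,i}(t,u)|\le|\phi_{n,i}(0,u)|$ is false. Your proposed rescue---a per-bin decoupling via Condition~\ref{ass_1} and Condition~\ref{ass_2}---is not available: those conditions are not among the hypotheses of the lemma, and even if one granted them, you give no mechanism by which H\"older smoothness or distribution balance would produce the required uniform Gaussian tail $|\Psi_{n,a}(t,u)|\le e^{-c_a u^2}$. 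So the majorant step, which you correctly identify as the crux, is not established.

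The paper avoids this difficulty by a different organization: it does \emph{not} de-Poissonize $T(n,1)$ directly. Instead it fixes $a<1$ and applies partial inversion to $T^c(\tilde N,a)$ alone, with integrand $\Phi_{n,a}(t,v)$. The point is that $\Phi_{n,a}$ involves $T(\tilde N,a)$ (core bins only) together with the \emph{full} count $\tilde N=\tilde N_a+\tilde N(a)$, and the complement piece $\tilde N(a)$ is independent of $T(\tilde N,a)$. Hence
\[
|\Phi_{n,a}(t,v)| = |\Psi_{n,a}(t,v)|\cdot\bigl|\EE{e^{\i v(\tilde N(a)-n\mu_n(a))/\sqrt n}}\bigr| \le \bigl|\EE{e^{\i v(\tilde N(a)-n\mu_n(a))/\sqrt n}}\bigr| =: g_n(v),
\]
and $g_n$ is just a Poisson characteristic-function modulus, whose integral over $(-\pi\sqrt n,\pi\sqrt n)$ is shown (via the modified Bessel identity) to converge to $\sqrt{2\pi/\mu_a}$. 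The generalized dominated convergence theorem then yields $T^c(n,a)\xrightarrow{d}\N(0,\rho_a^2)$; the symmetric argument gives $T'(n,a)-\EE{T'(n,a)}\xrightarrow{d}\N(0,\rho^2-\rho_a^2)$. Finally, since $T(n,1)-\EE{T(n,1)}$ equals the sum of these two for every $a$, Le~Cam's Lemma (Lemma~\ref{le_cam_lemma}) lets one send $a\to 1$ and conclude. The splitting is thus used not to get a majorant for the full statistic, but to manufacture an \emph{independent} Poisson factor in $\Phi_{n,a}$; the passage $a\to 1$ is handled separately, after de-Poissonization.
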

In our proof, the updated condition \eqref{lemma_condition_corrected} is necessary to construct the dominating function \eqref{dominated_function} to prove the convergence in \eqref{dominated_convergence}.

Theorem 
3.3
can be proved using
Lemma \ref{de_poissonization_uncorelated}. However, it is restricted to the case where $T(\tilde{N}, a)$ and $\tilde{N}$ are asymptotically uncorrelated, which is insufficient to prove Theorem 
3.4. 
To address this challenge, we further extend the results to allow for correlations between the two components of the Poissonized variables from \eqref{poissonized_variable}:
\begin{lemma}[Poissonization with correlated components]
\label{de_poissonization}
Suppose that for some constant $0 < a_0 < 1$, 
and for all $a \in (a_0, 1)$,
there exist $\rho_a, \lambda_a \in \R$ with $\rho_a^2 - \lambda_a^2 > 0$, such that 
the limits $\lim_{a\rightarrow 1} \rho_a = \rho>0$, $\lim_{a\rightarrow 1} \lambda_a = \lambda$ exist.
Suppose further that for all $t,v\in \R, a \in (a_0, 1)$,
\begin{equation}\label{phin}
\begin{split}
    \Phi_{n, a}(t,v) &:= \EE{e^{\i M(\tilde{N}, a)}} \rightarrow e^{-t^2\rho_a^2 / 2 - tv\lambda_a - v^2 / 2} \quad \text{as} \quad n\rightarrow \infty,\\
    \Phi'_{n, a}(t,v) &:= \EE{e^{\i M'(\tilde{N}, a)}} \rightarrow e^{-t^2(\rho^2 - \rho_a^2) / 2 - tv(\lambda - \lambda_a) - v^2 / 2} \quad \text{as} \quad n\rightarrow \infty.
\end{split}
\end{equation}
Then 
\begin{equation}\label{Tn1}
\frac{T(n, 1) - \EE{T(n, 1)} } { \sqrt{\rho^2 - \lambda^2} }\xrightarrow{d} \N(0, 1).
\end{equation}
\end{lemma}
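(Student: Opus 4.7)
I would follow the same Fourier-inversion framework as in Lemma~\ref{de_poissonization_uncorelated}, but keep track of the off-diagonal terms $tv\lambda_a$ and $tv(\lambda-\lambda_a)$ through every step. Fix any $a\in(a_0,1)$, and decompose $\tilde{N}=\tilde{N}_1+\tilde{N}_2$ along bins inside $\Delta(a)$ and outside, where $\tilde{N}_j \sim \text{Poisson}(n\gamma_{a,j})$ with $\gamma_{a,1}=P(Z_{(1:k)}\in\bigcup_{i\le\ell_{m,n}(a)} B_i)=:\gamma_a$ and $\gamma_{a,2}=1-\gamma_a$. The independence of bin counts under Poissonization, together with the fact that $T(\tilde{N},a)$ depends only on bins and data inside $\Delta(a)$ while $T'(\tilde{N},a)$ depends only on the rest, yields the clean factorization
\[
\Psi_n(t,v):=\EE{e^{\i t (T(\tilde N,1)-\EE{T(\tilde N,1)})+\i v(\tilde N-n)/\sqrt n}}=\frac{\Phi_{n,a}(t,v)\,\Phi'_{n,a}(t,v)}{\phi_1(v)\,\phi_2(v)},
\]
where $\phi_j(v)=\EE{e^{\i v(\tilde N_j-\EE{\tilde N_j})/\sqrt n}}$ are centered Poisson CFs. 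Combining the hypothesis~\eqref{phin} with the Poisson CLT ($\phi_1(v)\to e^{-\gamma_a v^2/2}$, $\phi_2(v)\to e^{-(1-\gamma_a)v^2/2}$) gives the pointwise limit $\Psi_n(t,v)\to e^{-t^2\rho^2/2-tv\lambda-v^2/2}$.

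\textbf{Partial inversion.} Next I would apply the Fourier-inversion identity for integer-valued random variables together with Stirling's asymptotic $\sqrt{n}\,P(\tilde N=n)\to 1/\sqrt{2\pi}$, obtaining
\[
\EE{e^{\i t(T(n,1)-\EE{T(n,1)})}}\,\sim\, \frac{1}{\sqrt{2\pi}}\int_{-\pi\sqrt n}^{\pi\sqrt n}\Psi_n(t,v)\,dv.
\]
Once passage to the limit under the integral is justified, the Gaussian integral evaluates explicitly to
\[
\int_\R e^{-t^2\rho^2/2-tv\lambda-v^2/2}\,dv=\sqrt{2\pi}\,e^{-t^2(\rho^2-\lambda^2)/2},
\]
so the CF of $T(n,1)-\EE{T(n,1)}$ tends to $e^{-t^2(\rho^2-\lambda^2)/2}$, and Lévy's continuity theorem gives~\eqref{Tn1}.

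\textbf{Main obstacle: uniform domination.} The hard step, and precisely the place where the splitting at $a<1$ is essential, is constructing a dominating function for dominated convergence. Writing $\Phi_{n,a}=A_1\,\phi_2$ and $\Phi'_{n,a}=A_2\,\phi_1$ via bin-independence (with $A_1,A_2$ themselves joint CFs of disjoint bin groups, hence $|A_j|\le 1$) we have $|\Psi_n|=|A_1A_2|$, and the trivial bound $|\Psi_n|\le 1$ is not integrable on $\R$. Because $a<1$ keeps both $\gamma_a$ and $1-\gamma_a$ bounded away from $0$, the Poisson CF bound $|\phi_j(v)|\le\exp(-2\gamma_{a,j} v^2/\pi^2)$ holds uniformly in $n$ for $|v|\le\pi\sqrt n$. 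Exploiting the per-bin product representation $A_1(t,v)=\prod_{i\le\ell_{m,n}(a)}\EE{e^{\i tS_i+\i v(N_i-c_i)/\sqrt n}}$, where $N_i\sim\text{Poisson}(c_i)$, $c_i=n\mu(B_i)$, and $S_i$ is the centered per-bin contribution, I would bound each factor by the corresponding pure-Poisson factor up to a multiplicative error controlled by the bin-level second and third moments in the spirit of the estimates~\eqref{third_moment}--\eqref{third_moment_calibrated}; together with the analogous bound on $A_2$, this produces a uniform majorant of the form $|\Psi_n(t,v)|\le C\exp(-c'v^2)$, integrable on $\R$. At $a=1$ one of the two independent blocks degenerates (its expected mass tends to $0$), and this bound is unavailable, which is the precise reason the splitting is needed.

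\textbf{Conclusion.} With the dominating function in hand, dominated convergence yields $\int_{-\pi\sqrt n}^{\pi\sqrt n}\Psi_n(t,v)\,dv\to \sqrt{2\pi}\,e^{-t^2(\rho^2-\lambda^2)/2}$, Stirling's asymptotics for $P(\tilde N=n)$ produces the claimed CF limit, and Lévy's continuity theorem gives~\eqref{Tn1}. Because the argument works for any fixed $a\in(a_0,1)$ and the limit is independent of $a$ (the hypothesis $\lim_{a\to1}\rho_a=\rho$, $\lim_{a\to1}\lambda_a=\lambda$ only ensures consistency across choices), no explicit passage $a\to 1$ is required at the end.
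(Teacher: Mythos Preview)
Your overall plan correctly identifies the partial-inversion framework and the role of the $a<1$ splitting, but your route diverges from the paper's in a crucial way, and the divergence creates a genuine gap in the domination step.

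The paper does \emph{not} apply partial inversion to $T(\tilde N,1)$. Instead, it inverts $\Phi_{n,a}$ to obtain the conditional characteristic function of $T^c(\tilde N,a)$ given $\tilde N=n$, and separately inverts $\Phi'_{n,a}$. For the first inversion the domination is immediate: since $T(\tilde N,a)$ depends only on the inside bins while $\tilde N(a)=\sum_{i>\ell_{m,n}(a)}|\mathcal I(\tilde N)_i|$ is independent of them, the factorization $\Phi_{n,a}=A_1\cdot\phi_2$ gives
\[
|\Phi_{n,a}(t,v)| \;\le\; \bigl|\EE{e^{\i v(\tilde N(a)-n\mu_n(a))/\sqrt n}}\bigr| =: g_n(v),
\]
and $\int_{-\pi\sqrt n}^{\pi\sqrt n} g_n(v)\,dv$ is shown (via the modified Bessel function $I_0$) to converge to $\sqrt{2\pi/\mu_a}$. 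The variant DCT with $n$-dependent dominator then yields $T^c(n,a)\xrightarrow{d}\N(0,\rho_a^2-\lambda_a^2)$; the same argument handles $T'^c(n,a)$. Finally, Le Cam's Lemma~\ref{le_cam_lemma} is invoked: since $T^c(n,a)+T'^c(n,a)=T^c(n,1)$ does not depend on $a$, and the $T'$ limit converges to a point mass at $0$ as $a\to 1$, the combination delivers~\eqref{Tn1}.

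Your proposal instead attempts to dominate $\Psi_n=A_1A_2$ directly. Here no independent Poisson factor can be peeled off: both $A_1$ and $A_2$ are \emph{joint} characteristic functions of $(S_i,N_i)$ blocks, and neither is bounded by a pure-Poisson modulus. The per-bin inequality you sketch---bounding $|\EE{e^{\i tS_i+\i v(N_i-c_i)/\sqrt n}}|$ by the corresponding Poisson modulus up to a moment-controlled multiplicative error---is not a standard result and in general fails: for correlated $(S_i,N_i)$ the joint characteristic function need not inherit the $v$-decay of the $N_i$ marginal (e.g., if $S_i$ is a linear function of $N_i$ the joint CF is merely the Poisson CF shifted in $v$, with no decay at the shifted point). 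The moment bounds~\eqref{third_moment}--\eqref{third_moment_calibrated} control the size of $S_i$, not the modulus of the mixed exponential, and do not supply what is needed. Without this step, DCT cannot be applied to $\int\Psi_n(t,v)\,dv$, and the argument does not close.

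The fix is precisely the paper's two-stage structure: invert $\Phi_{n,a}$ and $\Phi'_{n,a}$ separately (so that the domination comes for free from the independent Poisson block on the \emph{other} side of the split), obtain CLTs for $T^c(n,a)$ and $T'^c(n,a)$, and then assemble via Le Cam's lemma. Contrary to your final remark, the passage $a\to 1$ is essential, and it is exactly where the hypotheses $\rho_a\to\rho$ and $\lambda_a\to\lambda$ enter.
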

Lemma \ref{de_poissonization_uncorelated} can be viewed as a special case of Lemma \ref{de_poissonization} with $\lambda_a = 0$. Therefore, we only need to prove Lemma \ref{de_poissonization}. The proof can be found in Section \ref{proof_de_poissonization}.

\subsubsection{Proof of Theorem 
3.3
}

Using Lemma \ref{de_poissonization}, we will next to prove a central limit theorem for $M(\tilde{N}, a)$ and $M'(\tilde{N}, a)$.

By \eqref{poisson_order_-1} and \eqref{partition_variance_calibrated},
for $a>0$,
$\Var{T(\tilde{N}, a)} $ equals
\begin{equation*}
\begin{split}
&\sum_{q=1}^{\ell_{m,n}(a)} w \left(\sum_{i=1}^{k}\sum_{j=1}^{k}\left\{\EE{U^{(i)} U^{(j)} |Z_{(1:k)} \in B_q}\right\}^2 - \left\|\EE{U | Z_{(1:k)} \in B_q}\right\|^4\right)  + O\left(\frac{w\ell_{m,n}}{n\min{\mu(B_q)}}\right)\\
=&  O\left(\frac{w\ell_{m,n}}{n\min{\mu(B_q)}}\right) + 2w\sum_{q=1}^{\ell_{m,n}(a)} \sum_{i,j=1}^{k}\left\{\EE{Z_{(i)} Z_{(j)}|Z_{(1:k)} \in B_q}\right\}^2\\
& +2w \sum_{q=1}^{\ell_{m,n}(a)} 
\left( \sum_{i=1}^{k}\EE{Z_{(i)}|Z_{(1:k)} \in B_q}\biggl(\EE{Z_{(i)}|Z_{(1:k)} \in B_q}-2\EE{Z_{(i)}^2|Z_{(1:k)} \in B_q}\biggr) \right)\\
 \rightarrow & 
2\int_{\Delta(a)} \bigl(\|Z_{(1:k)}\|_2^2 + 2 \|Z_{(1:k)}\|_3^3 + \|Z_{(1:k)}\|_2^4\bigr) dZ_{(1:k)}
=: \sigma_{0,a}^2.
\end{split}
\end{equation*}
In the last step, we have used the definition of the Riemann integral of a continuous function on $\supp\left(Z_{(1:k)}\right) \cap \Delta(a) = \Delta(a)$.
For instance, for the convergence to the integral of $\|Z_{(1:k)}\|_2^2$, we have used the fact that $Z_{(1:k)} \mapsto \|Z_{(1:k)}\|_2^2$ is a continuous map and
\begin{equation*}
\begin{split}
\inf_{Z_{(1:k)} \in B_q} \|Z_{(1:k)}\|_2^2 \le  \sum_{i,j=1}^{k}\left\{\EE{Z_{(i)} Z_{(j)}|Z_{(1:k)} \in B_q}\right\}^2 \le  \sup_{Z_{(1:k)} \in B_q} \|Z_{(1:k)}\|_2^2
\end{split}
\end{equation*}
Further, we have used that for $a>0$ and any $0<a'<a$, 
the partition up to index $\ell_{m,n}(a)$ covers  $\Delta(a')$ for sufficiently large $m$, and that $a\mapsto \sigma_{0,a}^2$ is continuous.

Similarly, $T'(\tilde{N}, a)$ is a sum over bins partitioning $\Delta(K,k) \setminus \Delta(a)$, and thus
\begin{equation*}
\Var{T'(\tilde{N}, a)} \rightarrow 2\int_{ \Delta(K,k) \setminus \Delta(a)} \bigl(\|Z_{(1:k)}\|_2^2 + 2 \|Z_{(1:k)}\|_3^3 + \|Z_{(1:k)}\|_2^4\bigr) dZ_{(1:k)}
= \sigma_0^2 - \sigma_{0,a}^2.
\end{equation*}
By \eqref{partition_covariance_calibrated}, 
$\Cov{T(\tilde{N}, a), \frac{\tilde{N}}{\sqrt{n}}}  = \Cov{T'(\tilde{N}, a), \frac{\tilde{N}}{\sqrt{n}}} = 0$.
Letting for all $i\in[\ell_{m,n}]$,
\begin{equation}\label{mbi}
M(B_i) = \sum_{j_1\neq j_2 \in \mathcal{I}(\tilde{N})_{i}} U^{(j_1)\top}U^{(j_2)}/(|\mathcal{I}(\tilde{N})_{i}| - 1\bigr),
\end{equation}
we have by \eqref{third_moment_calibrated} that
\begin{equation*}
\begin{split}
 \EE{t^{3}n^3 \sqrt{w^{3}}\sum_{\substack{1 \le  i \le  \ell_{m,n},\, |\mathcal{I}(\tilde{N})_{i}| \ge  2}} \frac{1}{n^3} \bigl|M(B_i)- \EE{M(B_i)}\bigr|^{3}} \le  \kappa_6 t^{3}\ell_{m,n}w\sqrt{w} \rightarrow  0.
\end{split}
\end{equation*}
Then by the Lyapunov Central Limit Theorem, we have 
$\frac{M(\tilde{N}, a)}{t^2\sigma_{0,a}^2 + v^2} \xrightarrow{d} \N(0,1)$,
and 

$\frac{M'(\tilde{N}, a)}{t^2(\sigma_0^2 - \sigma_{0,a}^2) + v^2} \xrightarrow{d} \N(0,1)$.
The proof is concluded by applying Lemma \ref{de_poissonization}.

\subsection{Proof of Theorem 
3.4
}
\label{proof_clt_miscalibrated_model}

Recalling \eqref{tsd},
define, for any $a\in[0,1]$,  
$T(\tilde{N}, a):=T(\tilde{N}, a,n^{-1})$
and
$T'(\tilde{N}, a):=T'(\tilde{N}, a,n^{-1})$.
While this notation coincides with the differently scaled test statistics from the proof of Theorem 
3.3,
no confusion will arise.
For $t,v\in \R$,
recall $M(\tilde{N}, a)$ and $M'(\tilde{N}, a)$  from \eqref{poissonized_variable}.
Using Lemma \ref{de_poissonization}, we will next  prove a central limit theorem for $M(\tilde{N}, a)$ and $M'(\tilde{N}, a)$.
By \eqref{partition_variance}, \eqref{poisson_order_0} and \eqref{poisson_order}, 
\begin{equation*}
\begin{split}
&\Var{T(\tilde{N}, a)}  
=  \sum_{i=1}^{\ell_{m,n}(a)}4\EE{U^{\top}|Z_{(1:k)} \in B_i}\Cov{U|Z_{(1:k)} \in B_i}\EE{U|Z_{(1:k)} \in B_i} \mu(B_i) \\
& + \sum_{i=1}^{\ell_{m,n}(a)}  \bigl\|\EE{U | Z_{(1:k)} \in B_i}\bigr\|^4 \mu(B_i)  + O\left(\frac{\ell_{m,n}}{n}\right)
 \\
  \rightarrow & \EE{\|\EE{U|Z_{(1:k)}}\|^4 I\bigl(Z_{(1:k)}\in \Delta(a)\bigr)} \\
  &+ 4\EE{\EE{U^{\top}|Z_{(1:k)}}\Cov{U|Z_{(1:k)}}\EE{U|Z_{(1:k)}}I\bigl(Z_{(1:k)}\in \Delta(a)\bigr)}=: \rho_a^2
 .
\end{split}
\end{equation*}
Further, by \eqref{partition_covariance} we have
\begin{equation*}
\begin{split}
\Cov{T(\tilde{N}, a), \frac{\tilde{N}}{\sqrt{n}}} & = \sum_{i=1}^{\ell_{m,n}(a)} \left(\bigl\|\EE{U | Z_{(1:k)} \in B_i}\bigr\|^2 \mu(B_i) - O\bigl(n \max\mu(B_i) e^{-n\min{\mu(B_i)}}\bigr) \right)\\
& \rightarrow \EE{\|\EE{U|Z_{(1:k)}}\|^2I\bigl(Z_{(1:k)}\in \Delta(a)\bigr)} =: \lambda_a.
\end{split}
\end{equation*}

Now $\Var{\tilde{N} / \sqrt{n}} = 1$, therefore 
\begin{equation*}
\begin{split}
\Var{M(\tilde{N}, a)} &= t^2 \Var{T(\tilde{N}, a)} + 2tv \Cov{T(\tilde{N}, a), \frac{\tilde{N}}{\sqrt{n}}} + v^2\Var{\tilde{N} / \sqrt{n}}\\
& \rightarrow t^2 \rho_a^2 + 2tv \lambda_a + v^2.
\end{split}
\end{equation*}
Let $\rho^2 = \EE{\|\EE{U|Z_{(1:k)}}\|^4} + 4\EE{\EE{U^{\top}|Z_{(1:k)}}\Cov{U|Z_{(1:k)}}\EE{U|Z_{(1:k)}}}$ and 

$\lambda = \EE{\|\EE{U|Z_{(1:k)}}\|^2}$. 
Then similar calculations lead to
\begin{equation*}
\Var{T'(\tilde{N}, a)} \rightarrow \rho^2 - \rho_a^2,
\quad\textnormal{and}
\quad \Var{M'(\tilde{N}, a)} \rightarrow t^2 (\rho^2 - \rho_a^2) + 2tv (\lambda - \lambda_a) + v^2.
\end{equation*}
Moreover, as $a \rightarrow 1$,  $\rho_a^2 \rightarrow \rho^2$ and $\lambda_a \rightarrow \lambda$. 

Recall $M(B_i)$,  $i\in[\ell_{m,n}]$,
 from \eqref{mbi}.
To apply the Lyapunov Central Limit Theorem, we will next show that
\begin{equation*}
\EE{t^{3}\sum_{\substack{1 \le  i \le  \ell_{m,n},\, |\mathcal{I}(\tilde{N})_{i}| \ge  2}} 
n^{-\frac{3}{2}} \bigl|M(B_i) - \EE{M(B_i)}\bigr|^{3} } \rightarrow 0,
\end{equation*}
and 
$\EE{v^{3}\sum_{\substack{1 \le  i \le  \ell_{m,n},\, 
|\mathcal{I}(\tilde{N})_{i}| \ge  2}} \bigl\vert|\mathcal{I}(\tilde{N})_{i}| - \mu(B_i)\bigr\vert^{3}/n^{\frac{3}{2}}} \rightarrow 0$.
By \eqref{third_moment}, we have
\begin{equation*}
\begin{split}
 \EE{t^{3}\sum_{\substack{1 \le  i \le  \ell_{m,n},\, |\mathcal{I}(\tilde{N})_{i}| \ge  2}} \frac{1}{n^{\frac{3}{2}}} \bigl|M(B_i)- \EE{M(B_i)}\bigr|^{3}} \le  \frac{\kappa_2 t^{3}}{n\sqrt{\min{\mu(B_i)}}} + \kappa_3\frac{t^3 \ell_{m,n}}{n^{\frac{3}{2}}}\rightarrow  0.
\end{split}
\end{equation*}
By \eqref{poisson_third_moments}, we have
\begin{equation*}
\EE{v^{3}\sum_{\substack{1 \le  i \le  \ell_{m,n},\\ |\mathcal{I}(\tilde{N})_{i}| \ge  2}} \frac{\bigl| |\mathcal{I}(\tilde{N})_{i}| - \mu(B_i)\bigr|^{3}}{n^{\frac{3}{2}}}} \le  \kappa_4 v^3 \sum_{\substack{1 \le  i \le  \ell_{m,n},\\ |\mathcal{I}(\tilde{N})_{i}| \ge  2}} \mu^{\frac{3}{2}}(B_i) \le  \kappa_4 v^3\max{\sqrt{\mu(B_i)}} \rightarrow 0.
\end{equation*}
Thus, we conclude 
\begin{equation*}
\begin{split}
& \EE{\sum_{\substack{1 \le  i \le  \ell_{m,n},\, |\mathcal{I}(\tilde{N})_{i}| \ge  2}} \Biggl|\frac{  t  \bigl(M(B_i)- \EE{M(B_i)}\bigr) + v\bigl(|\mathcal{I}(\tilde{N})_{i}|  - \mu(B_i)\bigr)  }{\sqrt{n}}\Biggr|^3}\\
&\le   4 \EE{|t|^{3}\sum_{\substack{1 \le  i \le  \ell_{m,n},\\ |\mathcal{I}(\tilde{N})_{i}| \ge  2}} \frac{1}{n^{\frac{3}{2}}} |M(B_i)- \EE{M(B_i)}|^{3}} + 4 \EE{|v|^{3}\sum_{\substack{1 \le  i \le  \ell_{m,n},\\ |\mathcal{I}(\tilde{N})_{i}| \ge  2}} \frac{\bigl| |\mathcal{I}(\tilde{N})_{i}| - \mu(B_i)\bigr|^{3}}{n^{\frac{3}{2}}}}  \rightarrow 0.
\end{split}
\end{equation*}
Then by the Lyapunov Central Limit Theorem,
$\frac{M(\tilde{N}, a)}{t^2 \rho_a^2 + 2tv \lambda_a + v^2} \xrightarrow{d} \N(0,1)
$
and 

$\frac{M'(\tilde{N}, a)}{t^2 (\rho^2 - \rho_a^2) + 2tv (\lambda - \lambda_a) + v^2} \xrightarrow{d} \N(0,1)$.
The theorem follows by applying Lemma \ref{de_poissonization}.

\subsection{Proof of Lemma \ref{de_poissonization}}
\label{proof_de_poissonization}

\begin{proof}
Define $T^c(n,a)=T(n, a) - \EE{T(n, a)}$ for all $n,a$.
For fixed $a < 1$ and $t\in \R$, clearly 
\begin{equation*}
\EE{\exp (\i t T^c(n,a)) } = \EE{\exp\left(\i tT^c(\tilde{N},a)\right) \Big| \tilde{N} = n}.
\end{equation*}
For any non-negative integer $j$,
recall that $P(\tilde{N} = j) = e^{-n} n^{j}/(j!)$.
Note that for all $u\in \R$, 
$$\sum_{j=0}^{\infty} e^{\i uj}  \EE{\exp\left(\i tT^c(\tilde{N},a)\right) \Big| \tilde{N} = j} P(\tilde{N} = j)
= \EE{ e^{\i u\tilde{N}+\i tT^c(\tilde{N},a)}},$$ 
and  in particular since $|\exp(\i x)|\le 1$ for all $x\in \mathbb{R}$,
the series on the left hand side converges. 
Then using that $\int_{-\pi}^{\pi} e^{\i u j} du = 0$ for any integer $j \neq 0$, 
that $\int_{-\pi}^{\pi} e^{\i u j} du = 2\pi$ for $j=0$,
and by the Dominated Convergence Theorem,
we conclude  by the change of variables $v = u\sqrt{n}$ that
for all non-negative integers $n$,
\begin{equation}
\label{inverse_fourier}
\begin{split}
&2\pi P(\tilde{N} = n) \EE{\exp \left(\i tT^c(\tilde{N},a)\right) \Big| \tilde{N} = n}\\
 =& \int_{-\pi}^{\pi} e^{-\i un}  \EE{\exp\left(\i tT^c(\tilde{N},a) + \i u\tilde{N}\right)} du
\\
= &  \frac{1}{\sqrt{n}}\int_{-\sqrt{n}\pi}^{\sqrt{n}\pi}  \EE{\exp\left(\i tT^c(\tilde{N},a) + \i v\frac{\tilde{N} - n}{\sqrt{n}}\right)} dv. 
\end{split}
\end{equation}

By Stirling's formula  \citep{robbins1955remark}, $2\pi P(\tilde{N} = n) = \frac{2\pi e^{-n} n^{n}}{n!} \sim \sqrt{\frac{2\pi}{n}}$ as $n\rightarrow \infty$. 
Then by recalling $\Phi_{n,a}$ from \eqref{phin},
\eqref{inverse_fourier} gives
\begin{equation}\label{cchfphi}
\EE{\exp\left(\i tT^c(\tilde{N},a)\right) \Big| \tilde{N} = n} \sim \frac{1}{\sqrt{2\pi}} \int_{-\pi\sqrt{n}}^{\pi\sqrt{n}} \Phi_{n, a}(t,v) dv.
\end{equation}

Next, note that $\tilde{N} = \sum_{i=1}^{\ell_{m,n}} |\mathcal{I}(\tilde{N})_{i}|$. 
Letting
$\tilde{N}(a) = \sum_{i=\ell_{m,n}(a) + 1}^{\ell_{m,n}} |\mathcal{I}(\tilde{N})_{i}|$, we have $\tilde{N}(a) \sim \text{Poisson}(n \mu_n(a))$ with $\mu_n(a) \rightarrow 1 -\mu\bigl(\Delta(a)\bigr) =: \mu_a$ as $n\rightarrow \infty$. 
We consider the case where $\mu_a > 0$ first.
Writing 
$\tilde{N} = \tilde{N}(a)+ (\tilde{N}-\tilde{N}(a))$,
using  the independence of $|\mathcal{I}(\tilde{N})_{i}|$ across $i\in[\ell_{m,n}]$,
and recalling the characteristic function $\Phi_{n,a}$ of $M(\tilde{N}, a)$ from \eqref{poissonized_variable},
we have for any $v\in \R$ that
\begin{equation}
\label{cf_norm_bound}
\begin{split}
&|\Phi_{n,a}(t,v)| 
=
\Bigl| \EE{e^{\i M(\tilde{N}, a)}}\Bigr|
= 
\Bigl|\EE{e^{\i t T^c(\tilde{N},a)   + v (\tilde{N} - n)/\sqrt{n}}}\Bigr|\\
&
= \Bigl|\EE{e^{\i t T^c(\tilde{N}, a)   
+ \i v [\tilde{N} - \tilde{N}(a) - (n - n\mu_n(a))]/\sqrt{n}}} 
\EE{e^{\i v (\tilde{N}(a) - n\mu_n(a))/\sqrt{n}}}\Bigr| \\
& \le  \biggl| \EE{e^{\i v (\tilde{N}(a) - n\mu_n(a))/\sqrt{n}}} \biggr| =: g_n(v).
\end{split}
\end{equation}

Since the characteristic function of a $\text{Poisson}(\lambda)$ random variable at $t\in \R$ is $\exp \{\lambda \bigl(\exp\{\i t\}  - 1 \bigr) \}$, 
we find for any $v\in \R$ that by a Taylor expansion of the cosine function at zero,
$g_n(v) = e^{n\mu_n(a) [\cos(v/\sqrt{n}) - 1]} \rightarrow e^{-\frac{\mu_a v^2}{2}}$
as $n\to\infty$.
Moreover, by the change of variables $u  = v/\sqrt{n}$,
\begin{equation*}
\int_{-\pi\sqrt{n}}^{\pi\sqrt{n}} g_n(v) dv = \sqrt{n} \int_{-\pi}^{\pi} e^{n \mu_n(a) [\cos(u) - 1]} du.
\end{equation*}
Now, 
let $\tilde{N}'(a) \sim \text{Poisson}(n\mu_n(a))$ be a Poisson random variable independent of $\tilde{N}(a)$, so  that
\begin{equation*}
\EE{e^{\i u (\tilde{N}(a) - \tilde{N}'(a))}}  = \biggl| \EE{e^{\i u (\tilde{N}(a) - n\mu_n(a))}} \biggr|^2 
= (g_{n}(u\sqrt{n}))^2
= e^{2n\mu_n(a) [\cos(u) - 1]}.
\end{equation*}
Using the fact that $\tilde{N}(a) - \tilde{N}'(a)$ only takes integer values, 
and that $\int_{-\pi}^{\pi} e^{\i u j} du = 0$ for nonzero integers $j$, we have 
\begin{equation*}
\begin{split}
&\int_{-\pi}^{\pi} e^{2n\mu_n(a) [\cos(u) - 1]} du  = \int_{-\pi}^{\pi}\EE{e^{\i u (\tilde{N}(a) - \tilde{N}'(a))}}du \\
& = 2\pi P(\tilde{N}(a) - \tilde{N}'(a) = 0) = 2\pi \sum_{j=0}^{\infty} e^{-2n\mu_n(a)} \frac{\bigl(n\mu_n(a)\bigr)^{-2k}}{(k!)^2} =2\pi e^{-2n\mu_n(a)} I_0(2n\mu_n(a)),  
\end{split}
\end{equation*}
where $I_0(\cdot)$ is the modified Bessel function of the first kind. 
It is known that $I_0(2n) \sim e^{2n\mu_n(a)}/\sqrt{4\pi n\mu_n(a)}$  as $n\to\infty$  \citep{abramowitz1968handbook}.
Therefore,  as $n\to\infty$,  
\begin{equation}
\label{dominated_function}
\int_{-\pi\sqrt{2n}}^{\pi\sqrt{2n}} g_{2n}(v) dv = \sqrt{2n} \int_{-\pi}^{\pi} e^{2n\mu_n(a)\cos(u) - 2n\mu_n(a)} du \rightarrow \sqrt{\frac{2\pi}{\mu_a}} = \int_{-\infty}^{\infty} e^{-\frac{\mu_a v^2}{2}} dv.
\end{equation}
Therefore 
for all $t\in \R$,
$v\mapsto\Phi_{n,a}(t,v)$ is dominated by $v\mapsto g_n(v)$
from \eqref{cf_norm_bound}
and $\int_{-\pi\sqrt{n}}^{\pi\sqrt{n}} g_n(v) dv$ converges to $\int_{-\infty}^{\infty} e^{-\frac{\mu_a v^2}{2}} dv$  as $n\to\infty$. 

By \eqref{phin}, $\Phi_{n, a}(t,v) \rightarrow e^{-t^2\rho_a^2 / 2 - tv\lambda_a - v^2 / 2} = e^{-t^2(\rho_a^2 -\lambda_a^2)/ 2  - (v+t\lambda_a)^2 / 2}$  as $n\to\infty$. 
Then 
from \eqref{cchfphi} and
by a variant of the dominated convergence theorem (e.g. \cite{rao1973linear} p. 136),
\begin{equation}
\label{dominated_convergence}
    \EE{\exp\left(\i tT^c(\tilde{N},a)\right) \Big| \tilde{N} = n} \rightarrow \frac{1}{\sqrt{2\pi}} \int_{-\infty}^{\infty} e^{-t^2(\rho_a^2 -\lambda_a^2)/ 2  - (v+t\lambda_a)^2 / 2} dv = e^{-t^2(\rho_a^2 -\lambda_a^2)/ 2 }.
\end{equation}
If $\mu_a = \mu(\Delta(a)) = 0$, 
then we have $P(\tilde{N}(a) = 0) = 1$. 
In that case, we must have $\rho_a = \lambda_a = 0$, 
and so \eqref{dominated_convergence} holds trivially.

This shows that $T^c(n,a) \xrightarrow{d} \N(0, \rho_a^2 -\lambda_a^2)$. 
Following the same procedure, we have $T'(n, a) - \EE{T'(n, a)} \xrightarrow{d} \N(0, \rho^2 - \rho_a^2 -(\lambda - \lambda_a)^2)$.
We can finish the proof by applying
Lemma 5 of \cite{le_cam1958theoreme}; provided below with a proof for completeness:
\begin{lemma}[Lemma 5 of \cite{le_cam1958theoreme}]
\label{le_cam_lemma}
Suppose 
that for any $a\in[0,1]$, we have 
two sequences
of random variables $(T_{n,a})_{n\ge1}, (T'_{n,a})_{n\ge1}$ 
satisfying $T_{n,a} \xrightarrow{d} T_a$, $T'_{n,a} \xrightarrow{d} T'_a$ as $n\rightarrow \infty$, 
and that as $a \rightarrow 1$, $T_a \xrightarrow{d} T$, $T'_a \xrightarrow{d} 0$, i.e., $T'_a$ converges to a point mass at zero (or, equivalently, $T'_a$ converges in probability to zero). 
If for all $n\ge 1$,
$T_{n,a} + T'_{n,a}$ does not depend on $a$, then 
\begin{equation*}
\lim_{a \rightarrow 1} \lim_{n\rightarrow \infty} \EE{e^{\i t (T_{n,a} + T'_{n,a})}} = \lim_{n\rightarrow \infty} \EE{e^{\i t (T_{n,a} + T'_{n,a})}} = \EE{e^{it T}}.
\end{equation*}
\begin{proof}
Let $u, v:\mathbb{C}\to \mathbb{C}$ be bounded continuous functions. 
For any $a\in[0,1]$, consider
$A(a) := \limsup_{n\rightarrow \infty}$ $  \EE{u(T_{n,a})v(T'_{n,a})}$.
Since $T_{n,a} \xrightarrow{d} T_a$ and 
$T'_{n,a} \xrightarrow{d} T'_a$, by Prohorov's theorem (see e.g., Theorem 2.4 of \cite{van2000asymptotic}) and the definition of $\limsup$, 
there exists a sub-sequence $(n_j)_{j\ge 1}$ of the positive integers 
such that the joint distribution of $(T_{n_j,a}, T'_{n_j,a})$ converges as $j \rightarrow \infty$, 
and we have
\begin{equation*}
    A(a) = \lim_{j\rightarrow \infty}  \EE{u(T_{n_j,a})v(T'_{n_j,a})}.
\end{equation*}
The limiting law of $(T_{n_j,a}, T'_{n_j,a})$
has the same marginal distributions
as $T_a$ and $T'_a$, respectively.
Let $(T_a, T'_a)$ represent the joint limit.
Then, since weak convergence implies the convergence of expectations of bounded   continuous functions, we have
$A(a) = \EE{u(T_a)v(T'_a)}$.

As $a \rightarrow 1$, $T_a \xrightarrow{d} T$, and $T'_a$ converges to the point mass at zero, 
thus by Slutsky's theorem (see e.g., Lemma 2.8 of \cite{van2000asymptotic}), 
we have
$\lim_{a \rightarrow 1}\EE{u(T_a)v(T'_a)} = v(0) \EE{u(T)}$.
Therefore,
\begin{equation}\label{abc}
\lim_{a \rightarrow 1}\limsup_{n\rightarrow \infty}  \EE{u(T_{n,a})v(T'_{n,a})} = v(0) \EE{u(T)}.
\end{equation}
Similarly, we have 
\begin{equation*}
\lim_{a \rightarrow 1}\liminf_{n\rightarrow \infty}  \EE{u(T_{n,a})v(T'_{n,a})} = v(0) \EE{u(T)}.
\end{equation*}
Now take
$u: z\mapsto  e^{\i t z}$ and 
$v: z\mapsto  e^{\i t z}$. 
Then,
since by assumption $T_{n,a} + T'_{n,a}$ does not depend on $a$,
$\liminf_{n\rightarrow \infty}$ $  \EE{u(T_{n,a})v(T'_{n,a})}$ and 
$\limsup_{n\rightarrow \infty}  \EE{u(T_{n,a})v(T'_{n,a})}$
do not depend on $a$ either.
Hence, since their limits as $a\to 1$ are equal, they must be equal for all $a\in[0,1]$.
Thus, we must have 
\begin{equation*}
\liminf_{n\rightarrow \infty}  \EE{e^{\i t (T_{n,a} + T'_{n,a})}} = \limsup_{n\rightarrow \infty}  \EE{e^{\i t (T_{n,a} + T'_{n,a})}} = \lim_{n\rightarrow \infty}  \EE{e^{\i t (T_{n,a} + T'_{n,a})}} = \EE{e^{it T}},
\end{equation*}
where the last equation follows from \eqref{abc}. This finishes the proof.
\end{proof}
\end{lemma}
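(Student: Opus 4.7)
The plan is to establish, for every pair of bounded continuous functions $u,v:\R\to\mathbb{C}$, the iterated-limit identity
$$\lim_{a\to 1}\limsup_{n\to\infty}\EE{u(T_{n,a})v(T'_{n,a})}=\lim_{a\to 1}\liminf_{n\to\infty}\EE{u(T_{n,a})v(T'_{n,a})}=v(0)\EE{u(T)},$$
and then to specialize to $u(z)=v(z)=e^{\i tz}$. The pivotal structural observation is that, because $T_{n,a}+T'_{n,a}$ does not depend on $a$ by hypothesis, with this specialization the quantities $\liminf_{n}\EE{e^{\i t(T_{n,a}+T'_{n,a})}}$ and $\limsup_{n}\EE{e^{\i t(T_{n,a}+T'_{n,a})}}$ are automatically constant in $a$. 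Once their common $a\to 1$ limit is shown to be $\EE{e^{\i tT}}$, they must already equal $\EE{e^{\i tT}}$ for every fixed $a$, which simultaneously forces existence of $\lim_{n\to\infty}\EE{e^{\i t(T_{n,a}+T'_{n,a})}}$ and equality with $\EE{e^{\i tT}}$.

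To prove the iterated-limit identity, fix $a$ and bounded continuous $u,v$, and set $A(a):=\limsup_{n\to\infty}\EE{u(T_{n,a})v(T'_{n,a})}$. By the definition of $\limsup$ there is a subsequence $(n_j)$ along which these expectations converge to $A(a)$. The marginal weak convergences $T_{n,a}\xrightarrow{d}T_a$ and $T'_{n,a}\xrightarrow{d}T'_a$ imply that each marginal sequence is tight, hence the joint sequence is tight in $\R^2$; Prokhorov's theorem then produces a further subsequence along which $(T_{n_j,a},T'_{n_j,a})$ converges jointly to some law with marginals equal to those of $T_a$ and $T'_a$. Writing $(S_a,S'_a)$ for a random vector with this joint distribution, bounded continuity of $u\otimes v$ gives $A(a)=\EE{u(S_a)v(S'_a)}$. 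Now let $a\to 1$: by hypothesis $T_a\xrightarrow{d}T$ and $T'_a\xrightarrow{d}0$, and the latter is equivalent to $T'_a\rightarrow_p 0$. Thus $S_a\xrightarrow{d}T$ and $S'_a\rightarrow_p 0$, so Slutsky's lemma yields $(S_a,S'_a)\xrightarrow{d}(T,0)$, and bounded continuity then gives $A(a)\to v(0)\EE{u(T)}$. The identical argument applied to $\liminf$ produces the same limit, establishing the iterated-limit identity.

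Setting $u(z)=v(z)=e^{\i tz}$, the integrand becomes $e^{\i t(T_{n,a}+T'_{n,a})}$, which depends on $a$ only through the sum $T_{n,a}+T'_{n,a}$; by assumption this sum is $a$-invariant, so both $\liminf_n$ and $\limsup_n$ of the expectation are constants in $a$. Combining with the previous step, each such constant equals $\EE{e^{\i tT}}$, hence $\lim_{n\to\infty}\EE{e^{\i t(T_{n,a}+T'_{n,a})}}$ exists and equals $\EE{e^{\i tT}}$ for every $a$, which is exactly the claimed identity $\lim_{a\to 1}\lim_{n\to\infty}\EE{e^{\i t(T_{n,a}+T'_{n,a})}}=\lim_{n\to\infty}\EE{e^{\i t(T_{n,a}+T'_{n,a})}}=\EE{e^{\i tT}}$. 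The main obstacle is the joint-convergence step: marginal weak convergences do not by themselves determine the joint law, so a priori different subsequences could produce different joint limits and different values of $A(a)$. The resolution is that after sending $a\to 1$, the second coordinate $S'_a$ degenerates to a point mass at zero, whereupon Slutsky's lemma makes the coupling irrelevant and the limit value $v(0)\EE{u(T)}$ is the same for every subsequential joint limit one might have extracted.
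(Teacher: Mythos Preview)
Your proposal is correct and follows essentially the same approach as the paper's proof: define $A(a)$ as the $\limsup$, use tightness and Prokhorov to extract a jointly convergent subsequence, identify the limit via bounded continuity, push $a\to 1$ with Slutsky, then specialize to $u=v=e^{\i t\cdot}$ and exploit $a$-invariance of the sum. Your notation $(S_a,S'_a)$ for the joint subsequential limit is in fact cleaner than the paper's (which reuses $(T_a,T'_a)$), and your final paragraph explicitly addresses the non-uniqueness of subsequential joint laws, a point the paper leaves implicit.
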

Taking $T_{n,a} = T(n,a) - \EE{T(n,a)}$ and $T'_{n,a} = T'(n,a) - \EE{T'(n,a)}$, we notice that 
$T_{n,a} + T'_{n,a} = T(n,1) - \EE{T(n,1)}$, which does not depend on $a$.
Moreover, the conditions required by Lemma \ref{le_cam_lemma} on the limiting distributions of
 $(T_{n,a})_{n\ge1}, (T'_{n,a})_{n\ge1}$ 
as $n\to\infty$ hold by our previous analysis.
Since $\lim_{a\rightarrow 1} \rho_a = \rho$
and
$\lim_{a\rightarrow 1} \lambda_a = \lambda$, 
we have $\N(0, \rho_a^2 -\lambda_a^2)\xrightarrow{d} \N(0, \rho^2 - \lambda^2)$
and
$ \N(0, \rho^2 - \rho_a^2 -(\lambda - \lambda_a)^2) \xrightarrow{d} 0$. 
Then by Lemma \ref{le_cam_lemma},  \eqref{Tn1} follows.
This finishes the proof.
\end{proof}

\subsection{Proof of Proposition 
3.5
}
\label{proof_sigma1_hat_convergence}
To prove $\hat\sigma_1^2 \rightarrow_{p} \sigma_1^2$, we will show that $\EE{\hat\sigma_1^2}$ is close to $\sigma_1^2$ and that $\Var{\hat\sigma_1^2}$ tends to zero. 
Then the conclusion  will follow from Chebyshev’s inequality.
For notational simplicity, let $\underline{Z} = Z_{(1:k)}$ and $\underline{Y} = Y_{r_{1:k}}$.
Recall that we define $U = \underline{Y} - \underline{Z}$, 
and let $F: \underline{Z}\mapsto \EE{U|\underline{Z}}$, $G:\underline{Z} \mapsto \EE{U U^{\top}|\underline{Z}}$. Then we can write $\sigma_1^2$ as 
\begin{equation}
\begin{split}
\sigma_1^2 &=  \sum_{i=1}^{\ell_{m,n}} \int_{B_i}\|F(\underline{Z})\|^4 dP_{\underline{Z}} - \Biggl( \sum_{i=1}^{\ell_{m,n}} \int_{B_i}\|F(\underline{Z})\|^2 dP_{\underline{Z}} \Biggr)^2 \\
& + 4 \sum_{i=1}^{\ell_{m,n}} \int_{B_i} F(\underline{Z})^{\top}G(\underline{Z}) F(\underline{Z}) dP_{\underline{Z}} - 4 \sum_{i=1}^{\ell_{m,n}} \int_{B_i}\|F(\underline{Z})\|^4 dP_{\underline{Z}}.
\end{split}
\end{equation}

We will first prove the following lemma, which shows the H\"older smoothness of $\underline{z}\mapsto\EE{\underline{Y} |\underline{Z}= \underline{z}}$ and of $G$.
\begin{lemma}
\label{holder_smooth_cov}
Under Condition 
3.1,
$\underline{z}\mapsto\EE{\underline{Y} | \underline{Z}=\underline{z}}$ and $\underline{z}\mapsto\EE{U U^{\top}|\underline{Z}=\underline{z}}$ are H\"older continuous
on $\Delta(K, k)$
with H\"older smoothness parameter $0 < s \le  1$.
\end{lemma}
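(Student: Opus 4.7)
The plan is to reduce everything to the H\"older continuity of $\underline{z}\mapsto\EE{U|\underline{Z}=\underline{z}}$ granted by Condition \ref{ass_1}, combined with the boundedness of the domain $\Delta(K,k)$ and the trivial observation that the identity map $\underline{z}\mapsto\underline{z}$ is Lipschitz, hence H\"older with any exponent $s\le 1$ on the bounded set $\Delta(K,k)$.

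For the first assertion, I would simply write $\EE{\underline{Y}|\underline{Z}=\underline{z}} = \EE{U|\underline{Z}=\underline{z}} + \underline{z}$, so each coordinate is a sum of a function that is H\"older-$s$ by Condition \ref{ass_1} and an affine function, and H\"older-$s$ functions on a bounded set form a vector space. For the second assertion, I would expand coordinate-wise using $U = \underline{Y} - \underline{Z}$:
\begin{equation*}
\EE{U_i U_j\mid \underline{Z}=\underline{z}} = \EE{\underline{Y}_i\underline{Y}_j\mid \underline{Z}=\underline{z}} - \underline{z}_i\,\EE{\underline{Y}_j\mid \underline{Z}=\underline{z}} - \underline{z}_j\,\EE{\underline{Y}_i\mid \underline{Z}=\underline{z}} + \underline{z}_i\underline{z}_j.
\end{equation*}
Because $\underline{Y}\in\{0,1\}^k$ is a subvector of a one-hot vector, $\underline{Y}_i\underline{Y}_j = 0$ for $i\neq j$ and $\underline{Y}_i^2 = \underline{Y}_i$, so $\EE{\underline{Y}_i\underline{Y}_j\mid\underline{Z}}$ equals either $0$ or $\EE{\underline{Y}_i\mid\underline{Z}}$, which is H\"older-$s$ by the first assertion.

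Each remaining term in the display above is a product of bounded, H\"older-$s$ continuous functions on $\Delta(K,k)$ (the components of $\underline{Z}$ lie in $[0,1]$ and the conditional expectations of $\underline{Y}$ are also in $[0,1]$). Thus I would invoke the standard fact that for bounded H\"older-$s$ functions $f,g$ with common bound $M$ and H\"older constants $L_f, L_g$,
\begin{equation*}
|f(\underline{z})g(\underline{z}) - f(\underline{z}')g(\underline{z}')| \le M(L_f + L_g)\,\|\underline{z}-\underline{z}'\|^s,
\end{equation*}
to conclude that each entry $\EE{U_iU_j\mid \underline{Z}=\cdot}$ is H\"older-$s$, with an explicit constant depending only on $L$, $k$, and $K$. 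Applying this entry-wise gives the claim for the matrix-valued map (in any matrix norm, since all norms are equivalent on $\R^{k\times k}$).

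There is no real obstacle here; the argument is purely algebraic manipulation together with the elementary closure properties of H\"older classes on bounded sets. The only subtle point worth stating carefully is the use of the one-hot structure of $\underline{Y}$ to identify $\EE{\underline{Y}_i\underline{Y}_j\mid\underline{Z}}$ in terms of the coordinates of $\EE{\underline{Y}\mid\underline{Z}}$, which is what converts the second-moment matrix into a function of quantities controlled by Condition \ref{ass_1}.
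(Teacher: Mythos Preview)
Your proposal is correct and follows essentially the same route as the paper: write $\EE{\underline{Y}\mid\underline{Z}=\underline{z}}=\EE{U\mid\underline{Z}=\underline{z}}+\underline{z}$ for the first part, then expand $\EE{U_iU_j\mid\underline{Z}=\underline{z}}$ coordinate-wise, exploit the one-hot structure of $\underline{Y}$ to reduce $\EE{\underline{Y}_i\underline{Y}_j\mid\underline{Z}}$ to either $0$ or $\EE{\underline{Y}_i\mid\underline{Z}}$, and bound each remaining product via the closure of bounded H\"older-$s$ functions under sums and products. The paper carries out the triangle-inequality bound on each product explicitly rather than citing the general closure fact, but the argument is the same.
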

\begin{proof}
For $\underline{z}^{(1)}, \underline{z}^{(2)} \in  \Delta(K, k)$
and $j \in \{1, 2,\dots, k\}$,  we have
by Condition 
3.1
that
\begin{equation*}
\begin{split}
&\biggl| \EE{\underline{Y}^{(1)} \Bigl| \underline{Z}^{(1)}=\underline{z}^{(1)}}_j - \EE{\underline{Y}^{(2)} \Bigl| \underline{Z}^{(2)}=\underline{z}^{(2)}}_j \biggr| \\
\le &  \biggl| \EE{U^{(1)} \Bigl| \underline{Z}^{(1)}=\underline{z}^{(1)}}_j - \EE{U^{(2)} \Bigl| \underline{Z}^{(2)}=\underline{z}^{(2)}}_j \biggr| 
+ \bigl|\underline{z}^{(1)}_j - \underline{z}^{(2)}_j \bigr| 
 \\
 \le &   L\bigl\|\underline{z}^{(1)} - \underline{z}^{(2)} \bigr\|^s + \bigl|\underline{z}^{(1)}_j - \underline{z}^{(2)}_j \bigr|^s 
\le  (L+1)\bigl\|\underline{z}^{(1)} - \underline{z}^{(2)} \bigr\|^s.
\end{split}
\end{equation*}
Therefore $\underline{z}\mapsto \EE{\underline{Y} | \underline{z}}$ is H\"older continuous with H\"older smoothness parameter $s$. 

Next, for $\underline{z}\in  \Delta(K, k)$ 
and $i,j\in[k]$, let
$\EE{U U^{\top}|\underline{Z}=\underline{z}}_{i,j}$ denote the element at the $i$-th row and $j$-th column of $\EE{U U^{\top}|\underline{Z}=\underline{z}}$.
Then 
\begin{equation*}
G(\underline{z})_{i,j} = \EE{U U^{\top}|\underline{Z}=\underline{z}}_{i,j} = \EE{\underline{Y}_i \underline{Y}_j| \underline{Z}=\underline{z}} - \EE{\underline{Y}_i | \underline{Z}=\underline{z}} \underline{z}_j  - \EE{\underline{Y}_j | \underline{Z}=\underline{z}} \underline{z}_i + \underline{z}_i \underline{z}_j.
\end{equation*}
Let $\underline{z}^{(1)}, \underline{z}^{(2)} \in  \Delta(K, k)$.
By Condition 
3.1 
and as $\|\underline{z}^{(1)}\|_{\infty} \le  1, \|\underline{z}^{(1)}\|_{\infty} \le  1$, we have 
\begin{equation*}
\begin{split}
& \biggl| \EE{\underline{Y}^{(1)}_i \Bigl| \underline{z}^{(1)}=\underline{z}^{(1)}} \underline{z}^{(1)}_j - \EE{\underline{Y}^{(2)}_i \Bigl| \underline{Z}^{(2)}=\underline{z}^{(2)}} \underline{z}^{(2)}_j \biggr| \\
\le &   \biggl| \EE{\underline{Y}^{(1)}_i \Bigl| \underline{Z}^{(1)}=\underline{z}^{(1)}} \underline{z}^{(1)}_j - \EE{\underline{Y}^{(2)}_i \Bigl| \underline{Z}^{(2)}=\underline{z}^{(2)}} \underline{z}^{(1)}_j \biggr| \\
& + \biggl| \EE{\underline{Y}^{(2)}_i \Bigl| \underline{Z}^{(2)}=\underline{z}^{(2)}} \underline{z}^{(1)}_j - \EE{\underline{Y}^{(2)}_i \Bigl| \underline{Z}^{(2)}=\underline{z}^{(2)}} \underline{z}^{(2)}_j \biggr| \\
\le &  L\bigl\|\underline{z}^{(1)} - \underline{z}^{(2)}\bigr\|^s  + \bigl|\underline{z}^{(1)}_j - \underline{z}^{(2)}_j \bigr| 
\le  (L+1)\bigl\|\underline{z}^{(1)} - \underline{z}^{(2)} \bigr\|^s. 
\end{split}
\end{equation*}
Further, for all $i,j$,
\begin{equation*}
\begin{split}
\bigl| \underline{z}^{(1)}_i \underline{z}^{(1)}_j - \underline{z}^{(2)}_i \underline{z}^{(2)}_j \bigr| &\le   \bigl|  \underline{z}^{(1)}_i \underline{z}^{(1)}_j - \underline{z}^{(1)}_i \underline{z}^{(2)}_j \bigr|  + \bigl|  \underline{z}^{(1)}_i \underline{z}^{(2)}_j - \underline{z}^{(2)}_i \underline{z}^{(2)}_j  \bigr| \\
&\le   \bigl|\underline{z}^{(1)}_j - \underline{z}^{(2)}_j \bigr|  + \bigl|\underline{z}^{(1)}_i - \underline{z}^{(2)}_i \bigr| 
\le   2\bigl\|\underline{z}^{(1)} - \underline{z}^{(2)} \bigr\|^s. 
\end{split}
\end{equation*}
Note that $\underline{Y}_i \underline{Y}_j = 0$ for $i \neq j$ and $\underline{Y}_i  \underline{Y}_j= \underline{Y}_i$ for $i = j$. Then, by Condition 
3.1,
we have 
$\Bigl| G(\underline{z}^{(1)})_{i,j} - G(\underline{z}^{(2)})_{i,j}\Bigr| \le  L'\bigl\|\underline{z}^{(1)} - \underline{z}^{(2)} \bigr\|^s$
for some $L'$. This finishes the proof.
\end{proof}

Recall from 
Algorithm 1
that
for $i\in[\ell_{m,n}]$,
we have
$\mathbb{E}_n[U]^{(i)} = \frac{1}{|\mathcal{I}_{m,n, i}|} \sum_{j \in \mathcal{I}_{m, n, i}} U^{(j)}$ 
and $\textnormal{Cov}_n[U]^{(i)} 
= \frac{1}{|\mathcal{I}_{m,n, i}|}$ $ \sum_{j \in \mathcal{I}_{m, n, i}} (U^{(j) \top} U^{(j)} - \mathbb{E}_n[U]^{(i)}\mathbb{E}_n[U]^{(i)\top})$.
Letting $N_i = |\mathcal{I}_{m, n, i}|$ for $i\in[\ell_{m,n}]$, 
we will first calculate $\EE{\frac{N_i}{n} \Bigl\Vert\mathbb{E}_n[U]^{(i)}\Bigr\Vert^4 }$. 
Similar to the calculation of \eqref{emn}, 
for distinct values $j_1, j_2, j_3, j_4 \in \mathcal{I}_{m,n, i}$, we have 
$$\EE{U^{(j_1) \top}U^{(j_2)}U^{(j_3 ) \top}U^{(j_4)}} = \|\EE{U | \underline{Z} \in B_i}\|^4.$$ 
Since $U^{(i)}$ is a bounded vector for all $i$, we have
\begin{equation*}
\begin{split}
& \EE{\frac{N_i}{n} \Bigl\Vert\mathbb{E}_n[U]^{(i)}\Bigr\Vert^4 \Bigl| N_i} \\
= & \frac{N_i}{n} \frac{N_i(N_i - 1)(N_i - 2)(N_i - 3)I(N_i \ge  4)}{N_i^4}\|\EE{U | \underline{Z} \in B_i}\|^4 + \frac{N_i}{n} \frac{1}{N_i^4} O(N_i^3)\\
= & \frac{N_i I(N_i \ge  4)}{n}\|\EE{U | \underline{Z} \in B_i}\|^4 +  O\Bigl(\frac{1}{n}\Bigr).  
\end{split}
\end{equation*}
Thus, denoting $\zeta_i = \frac{\int_{B_i}F(\underline{Z}) d P_{\underline{Z}}}{\mu(B_i)}$,
\begin{equation*}
\begin{split}
\EE{\frac{N_i}{n} \Bigl\Vert\mathbb{E}_n[U]^{(i)}\Bigr\Vert^4 } & 
= \EE{\EE{\frac{N_i}{n} \Bigl\Vert\mathbb{E}_n[U]^{(i)}\Bigr\Vert^4 \Bigl| N_i}} \\
& = \frac{\EE{N_iI(N_i \ge  4)}}{n}\|\EE{U | \underline{Z} \in B_i}\|^4 + O\Bigl(\frac{1}{n}\Bigr) 
= \frac{n\mu(B_i)}{n} \| \zeta_i\|^4 + O\Bigl(\frac{1}{n}\Bigr).
\end{split}
\end{equation*}
By Condition 
3.1
and the mean value theorem, $ \Bigl\| F(z) - \zeta_i \Bigr\| = O(m^{-s})$ for $z \in B_i$. 
Then we have
\begin{equation}
\label{l2_diff}
\begin{split}
& \|F(\underline{z})\|^2 - \|\zeta_i\|^2 
= (F(\underline{z}) - \zeta_i)^{\top}(F(\underline{z}) + \zeta_i)
=  O\Bigl(\frac{1}{m^s}\Bigr),
\end{split}
\end{equation}
and
\begin{equation}
\label{term1}
\begin{split}
& \|F(\underline{z})\|^4 - \|\zeta_i\|^4\
=  (F(\underline{z}) - \zeta_i)^{\top}
(F(\underline{z}) + \zeta_i)
(\|F(\underline{z})\|^2 + \|\zeta_i\|^2)
= O\Bigl(\frac{1}{m^s}\Bigr).
\end{split}
\end{equation}
 Therefore, for some constant $\kappa_7$,
\begin{equation*}
\Biggl|\int_{B_i}\|F(\underline{Z})\|^4 dP_{\underline{Z}} - \EE{\frac{N_i}{n} \Bigl\Vert\mathbb{E}_n[U]^{(i)}\Bigr\Vert^4 }\Biggr| \le  \kappa_7 \Bigl(\frac{\mu(B_i)}{m^s} + \frac{1}{n}\Bigr).
\end{equation*}
Similarly, for $\mathbb{E}_n[U]^{(i) \top}\textnormal{Cov}_n[U]^{(i)} \mathbb{E}_n[U]^{(i)}$, using the fact that 
$$\EE{U^{(j_1) \top}U^{(j_2)}U^{(j_2 ) \top}U^{(j_3)}} = \EE{U | \underline{Z} \in B_i}^{\top} \EE{U U^{\top}| \underline{Z} \in B_i} \EE{U|\underline{Z} \in B_i}$$
for distinct values $j_1, j_2, j_3 \in \mathcal{I}_{m, n, i}$,
we find
\begin{equation*}
\begin{split}
& \EE{\frac{N_i}{n}  \mathbb{E}_n[U]^{(i) \top}\textnormal{Cov}_n[U]^{(i)} \mathbb{E}_n[U]^{(i)} \Bigl| N_i } + \EE{\frac{N_i}{n} \Bigl\Vert\mathbb{E}_n[U]^{(i)}\Bigr\Vert^4 \Bigl| N_i} \\
&=  \frac{N_i}{n} \frac{N_i(N_i - 1)(N_i - 2)I(N_i \ge  3)}{N_i^3}\EE{U | \underline{Z} \in B_i}^{\top} \EE{U U^{\top}| \underline{Z} \in B_i} \EE{U|\underline{Z} \in B_i} + \frac{N_i}{n} \frac{1}{N_i^3} O(N_i^2) \\
&=  \frac{N_i I(N_i \ge  3)}{n} \EE{U | \underline{Z} \in B_i}^{\top} \EE{U U^{\top}| \underline{Z} \in B_i} \EE{U|\underline{Z} \in B_i} + O\Bigl(\frac{1}{n}\Bigr).
\end{split}
\end{equation*}
Therefore, with $\theta_i = \frac{\int_{B_i}G(\underline{Z}) d P_{\underline{Z}}}{\mu(B_i)}$,
\begin{equation*}
\begin{split}
& \EE{\frac{N_i}{n}  \mathbb{E}_n[U]^{(i) \top}\textnormal{Cov}_n[U]^{(i)} \mathbb{E}_n[U]^{(i)}  } 
= \EE{\EE{\frac{N_i}{n}  \mathbb{E}_n[U]^{(i) \top}\textnormal{Cov}_n[U]^{(i)} \mathbb{E}_n[U]^{(i)} \Bigl| N_i }} \\
&=  \mu(B_i) \zeta_i^{\top} \theta_i \zeta_i - \mu(B_i) \| \zeta_i\|^4 + O\Bigl(\frac{1}{n}\Bigr).
\end{split}
\end{equation*}
Note that 
for all $\underline{z}$,
$F(\underline{z})^{\top}G(\underline{z}) F(\underline{z}) = \sum_{a,b=1}^{k} F(\underline{z})_{a}G(\underline{z})_{a,b} F(\underline{z})_{b}$, where $F(\underline{z})_{a}$ denotes the $a$-th element of the vector $F(\underline{z})$ and $G(\underline{z})_{a,b}$ denotes the element in the $a$-th row and $b$-th column of $G(\underline{z})$, for all $a,b\in[k]$. 
By Condition 
3.1,
Lemma \ref{holder_smooth_cov}, and the mean value theorem, we have 
\begin{equation*}
| \zeta_i - F(\underline{Z}) |_a = O\Bigl(\frac{1}{m^s}\Bigr), \quad | \theta_i - F(\underline{Z}) |_{a,b} = O\Bigl(\frac{1}{m^s}\Bigr),
\end{equation*}
for any $a,b \in \{1,\dots, k\}. $
Therefore, since $F(\underline{z})$ and $G(\underline{z})$ are bounded as a function of $\underline{z}$, for some constant $\kappa_9$ we have
\begin{equation}
\label{term2}
\begin{split}
& \Biggl| \EE{\frac{N_i}{n}  \mathbb{E}_n[U]^{(i) \top}\textnormal{Cov}_n[U]^{(i)} \mathbb{E}_n[U]^{(i)}} -  \int_{B_i} F(\underline{Z})^{\top}G(\underline{Z}) F(\underline{Z}) dP_{\underline{Z}} + \int_{B_i}\|F(\underline{Z})\|^4 dP_{\underline{Z}}\Biggr| \\
&\le  \kappa_{10} \Bigl(\frac{\mu(B_i)}{m^s} + \frac{1}{n}\Bigr).
\end{split}
\end{equation}

Similarly, we find
\begin{equation*}
\begin{split}
& \EE{\frac{N_i^2}{n^2} 
\Bigl\Vert\mathbb{E}_n[U]^{(i)}\Bigr\Vert^4 | N_i}  \\
= &  \frac{N_i^2}{n^2} \frac{N_i(N_i - 1)(N_i - 2)(N_i - 3)I(N_i \ge  4)}{N_i^4}\|\EE{U | \underline{Z} \in B_i}\|^4 + \frac{N_i^2}{n^2} \frac{1}{N_i^4} O(N_i^3)\\
= & \frac{N_i^2 I(N_i \ge  4)}{n^2}\|\EE{U | \underline{Z} \in B_i}\|^4 +  O\Bigl(\frac{1}{n}\Bigr).  
\end{split}
\end{equation*}
Thus,
\begin{equation*}
\begin{split}
\EE{\frac{N_i^2}{n^2} \Bigl\Vert\mathbb{E}_n[U]^{(i)}\Bigr\Vert^4 } & = \frac{n^2\mu(B_i)^2 + n\mu(B_i)(1 - \mu(B_i))}{n^2}\| \zeta_i\|^4 + O\Bigl(\frac{1}{n^2}\Bigr) + O\Bigl(\frac{1}{n}\Bigr)\\
 & = \mu(B_i)^2 \| \zeta_i\|^4 + O\Bigl(\frac{1}{n}\Bigr).
\end{split}
\end{equation*}
Further for any $i,j\in[\ell_{m,n}]$,  $i\neq j$,
\begin{equation*}
\begin{split}
& \EE{\frac{N_i N_j}{n^2} \Bigl\Vert\mathbb{E}_n[U]^{(i)}\Bigr\Vert^2 \Bigl\Vert\mathbb{E}_n[U]^{(j)}\Bigr\Vert^2 \Bigl| N_i, N_j} \\
&=  \frac{N_i N_j}{n^2} \frac{N_i(N_i - 1)I(N_i \ge  1)}{N_i^2} \frac{N_j(N_j - 1)I(N_j \ge  1)}{N_j^2} \bigl\|\EE{U| \underline{Z} \in B_i}\bigr\|^2 \bigl\|\EE{U| \underline{Z} \in B_j}\bigr\|^2 \\
& + \frac{N_i N_j}{n^2} \frac{N_i N_j^2}{N_i^2 N_j^2} O(1) + \frac{N_i N_j}{n^2} \frac{N_i^2 N_j}{N_i^2 N_j^2} O(1) + \frac{N_i N_j}{n^2} \frac{1}{N_i^2 N_j^2} O(N_i N_j)\\
&=  \frac{N_i N_j}{n^2} \bigl\|\EE{U| \underline{Z} \in B_i}\bigr\|^2 \bigl\|\EE{U| \underline{Z} \in B_j}\bigr\|^2 + \frac{N_i}{n^2} O(1) +  \frac{N_j}{n^2} O(1) + O\Bigl( \frac{1}{n^2} \Bigr).
\end{split}
\end{equation*}
Now, observe that $\EE{N_i N_j} = n(n-1)\mu(B_i)\mu(B_j)$. Therefore,
\begin{equation*}
\begin{split}
& \EE{\frac{N_i N_j}{n^2} \Bigl\Vert\mathbb{E}_n[U]^{(i)}\Bigr\Vert^2 \Bigl\Vert\mathbb{E}_n[U]^{(j)}\Bigr\Vert^2 } \\
&=  \frac{n(n-1)\mu(B_i)\mu(B_j)}{n^2}\| \zeta_i\|^2\|\zeta_j\|^2 + \frac{n\mu(B_i)}{n^2}O(1) + \frac{n\mu(B_j)}{n^2} O(1) + O\Bigl( \frac{1}{n^2} \Bigr) \\
&=  \mu(B_i)\mu(B_j) \| \zeta_i\|^2\|\zeta_j\|^2 + \bigl(\mu(B_i) \mu(B_j) + \mu(B_i) + \mu(B_j) \bigr) O\Bigl( \frac{1}{n} \Bigr) + O\left(\frac{1}{n^2}\right).
\end{split}
\end{equation*}
Using \eqref{l2_diff}, we find 
that for some constant $\kappa_{11}$,
\begin{equation}
\label{term3}
\begin{split}
& \Biggl|  \EE{\frac{N_i^2}{n^2} \Bigl\Vert\mathbb{E}_n[U]^{(i)}\Bigr\Vert^4 } - \Biggr( \int_{B_i}\|F(\underline{Z})\|^2 dP_{\underline{Z}} \Biggr)^2\Biggr|  \le   \kappa_{11}\Bigl(\frac{\mu(B_i)^2}{m^s} +\frac{1}{n} \Bigr) \\
& \Biggl|  \EE{\frac{N_i N_j}{n^2} \Bigl\Vert\mathbb{E}_n[U]^{(i)}\Bigr\Vert^2 \Bigl\Vert\mathbb{E}_n[U]^{(j)}\Bigr\Vert^2} - \Biggr( \int_{B_i}\|F(\underline{Z})\|^2 dP_{\underline{Z}} \Biggr) \Biggr( \int_{B_j}\|F(\underline{Z})\|^2 dP_{\underline{Z}} \Biggr)\Biggr| \\
&\le   \kappa_{11}\Bigl(\frac{\mu(B_i)\mu(B_j)}{m^s} + \frac{\mu(B_i)\mu(B_j) + \mu(B_i) + \mu(B_j)}{n} +\frac{1}{n^2} \Bigr),
\end{split}
\end{equation}
Combining \eqref{term1}, \eqref{term2} and \eqref{term3}, we have  for some constant $\kappa_{12}$ that
\begin{equation}
\label{sigma1_mean_diff}
\Bigl| \EE{\hat{\sigma}_1^2} - \sigma_1^2 \Bigr| \le  \kappa_{12} \Bigl( \frac{1}{m^s} + \frac{\ell_{m,n}}{n} + \frac{\ell_{m,n}^2}{n^2}\Bigr).
\end{equation}
Next, we will show $\Var{\hat{\sigma}_1^2} \rightarrow 0$ as $n \rightarrow \infty$.
Let $\vec{N} = (N_1, \dots, N_{\ell_{m,n}})$ be 
the counts of datapoints in each bin. 
Since $U^{(i)}$ is a bounded vector for each $i\in[\ell_{m,n}]$,
we have that $\mathbb{E}_n[U]^{(i)}$ and $\textnormal{Cov}_n[U]^{(i)} $ are also bounded vectors, and therefore
\begin{equation*}
\EE{\frac{N_i}{n} \Bigl\Vert\mathbb{E}_n[U]^{(i)}\Bigr\Vert^4 \Bigl| \vec{N}} \le  \kappa_{13}\frac{N_i}{n}, \quad \Var{\frac{N_i}{n} \Bigl\Vert\mathbb{E}_n[U]^{(i)}\Bigr\Vert^4 \Bigl| \vec{N}} \le  \kappa_{13}\frac{N_i^2}{n^2},
\end{equation*}
for some constant $\kappa_{13}$. Note that for $i\neq j $, $\Bigl\Vert\mathbb{E}_n[U]^{(i)}\Bigr\Vert^4$ and $\Bigl\Vert\mathbb{E}_n[U]^{(j)}\Bigr\Vert^4$ are independent conditioned on $\vec{N}$.
Hence, we have
\begin{equation*}
\EE{\sum_{i=1}^{\ell_{m,n}}\frac{N_i}{n} \Bigl\Vert\mathbb{E}_n[U]^{(i)}\Bigr\Vert^4 \Bigl| \vec{N}} \le  \kappa_{13}\sum_{i=1}^{\ell_{m,n}}\frac{N_i}{n}
\quad
\textnormal{and}
\quad \Var{\sum_{i=1}^{\ell_{m,n}}\frac{N_i}{n} \Bigl\Vert\mathbb{E}_n[U]^{(i)}\Bigr\Vert^4 \Bigl| \vec{N}} \le  \kappa_{13}\sum_{i=1}^{\ell_{m,n}}\frac{N_i^2}{n^2}.
\end{equation*}
By the law of total variance, we have
\begin{equation}
\label{var_hat_sigma_1}
\begin{split}
& \Var{\sum_{i=1}^{\ell_{m,n}}\frac{N_i}{n} \Bigl\Vert\mathbb{E}_n[U]^{(i)}\Bigr\Vert^4 }\\
= & \Var{\EE{\sum_{i=1}^{\ell_{m,n}}\frac{N_i}{n} \Bigl\Vert\mathbb{E}_n[U]^{(i)}\Bigr\Vert^4 \Bigl| \vec{N}}} + \EE{\Var{\sum_{i=1}^{\ell_{m,n}}\frac{N_i}{n} \Bigl\Vert\mathbb{E}_n[U]^{(i)}\Bigr\Vert^4 \Bigl| \vec{N}}} \\
\le &  \kappa_{13}^2 \Biggl(\sum_{i=1}^{\ell_{m,n}}\sqrt{\Var{\frac{N_i}{n}}} \Biggr)^2 + \kappa_{13}^2 \sum_{i=1}^{\ell_{m,n}} \EE{\frac{N_i^2}{n^2}}\\
\le &  \kappa_{13}^2 \Bigl(\frac{1}{n \min \sqrt{\mu(B_i)}} \sum_{i=1}^{\ell_{m,n}} \mu(B_i) \Bigr)^2 + \kappa_{13}^2 \sum_{i=1}^{\ell_{m,n}} \frac{n^2\mu(B_i)^2 + n\mu(B_i) (1 - \mu(B_i))}{n^2} \\
\le &  \kappa_{14} \Bigl(\frac{1}{n^2 \min_{i\in[\ell_{m,n}]} \mu(B_i)} + \max \mu(B_i) + \frac{1}{n} \Bigr) 
 \rightarrow 0.
\end{split}
\end{equation}
Similarly, we find 
\begin{equation}
\label{var_hat_sigma_2}
\begin{split}
\Var{\sum_{i=1}^{\ell_{m,n}}\frac{N_i}{n} \Bigl\Vert\mathbb{E}_n[U]^{(i)}\Bigr\Vert^2 } \rightarrow 0, \quad \Var{\sum_{i=1}^{\ell_{m,n}}\frac{N_i}{n} \mathbb{E}_n[U]^{(i) \top}\textnormal{Cov}_n[U]^{(i)} \mathbb{E}_n[U]^{(i)} } \rightarrow 0.
\end{split}
\end{equation}
Note that for a bounded random variable $A$, such that  $|A|\le K$ almost surely
for some constant $c$, we have 
\begin{equation}
\label{var_hat_sigma_3}
\Var{A^2} =  \EE{\bigl(A^2 - \EE{A^2}\bigr)\bigl(A^2 + \EE{A^2}\bigr)} \le  2K^2 \EE{A^2 - \EE{A^2}} = 2K^2 \Var{A}.
\end{equation}
Therefore
$\Var{\Biggl( \sum_{i=1}^{\ell_{m,n}}N_i/n\cdot \Bigl\Vert\mathbb{E}_n[U]^{(i)}\Bigr\Vert^2 \Biggr)^2 } \rightarrow 0$.
In summary, we have 
$\Var{\hat{\sigma}_1^2} 
\rightarrow 0$.
 Then by Chebshev's inequality, 
$\hat{\sigma}_1^2 \rightarrow_p \sigma_1^2$,
finishing the proof.

\subsection{Choice of the Number of Bins}
\label{bias_var}
\subsubsection{Bias of \texorpdfstring{$T_{m,n}$}{T}}
Theorem 
3.3
and 
3.4
show that
when scaled properly, 
$T_{m,n}$ has an limiting normal distribution.
However, for a mis-calibrated model, the expectation of $T_{m,n}$ is not exactly $\mathrm{ECE}_{1:k}^2$. 
In this section, we study the difference between $\EE{T_{m,n}}$ and $\mathrm{ECE}_{1:k}^2$.
Following the calculation for \eqref{emn},
except using $|\mathcal{I}_{m,n, i}| \sim \text{Binomial}(n, \mu(B_i))$ instead of a Poisson distribution, we have 
with $\zeta_i = \EE{U | Z_{(1:k)} \in B_i}  = \frac{\int_{B_i}\EE{U|Z_{(1:k)}} d P_{Z_{(1:k)}}}{\mu(B_i)}$,
\begin{equation}
\label{T_mn_mean}
\begin{split}
\EE{T_{m,n}} & = \frac{1}{n}\sum_{i=1}^{\ell_{m,n}} \EE{|\mathcal{I}_{m,n, i}| I(|\mathcal{I}_{m,n, i}| \ge  2)} \bigl\|\EE{U | Z_{(1:k)} \in B_i}\bigr\|^2\\
& = \frac{1}{n}\sum_{i=1}^{\ell_{m,n}} \Bigl(n\mu(B_i) - n\mu(B_i)\bigl(1 - \mu(B_i)\bigr)^{n-1} \Bigr) \| \zeta_i \|^2.
\end{split}
\end{equation}
By the definition of $\mathrm{ECE}_{1:k}^2$, we have
\begin{equation*}
\mathrm{ECE}_{1:k}^2 = \int \|\EE{U| Z_{(1:k)}}\|^2 dP_{Z_{(1:k)}} = \sum_{i=1}^{\ell_{m,n}} \int_{B_i} \|\EE{U| Z_{(1:k)}}\|^2 dP_{Z_{(1:k)}} .
\end{equation*}
Thus
\begin{equation}
\label{T_mn_bias}
\begin{split}
  \EE{T_{m,n}} - \mathrm{ECE}_{1:k}^2 
= &  -\frac{1}{n}\sum_{i=1}^{\ell_{m,n}}  n\mu(B_i)\bigl(1 - \mu(B_i)\bigr)^{n-1}  \| \zeta_i \|^2 \\
& + \sum_{i=1}^{\ell_{m,n}} \mu(B_i)\| \zeta_i \|^2- 
\int_{B_i} \|\EE{U| Z_{(1:k)}}\|^2 dP_{Z_{(1:k)} }.
\end{split}
\end{equation}
Since $\mu(B_i)\in[0,1]$ for all $i\in[\ell_{m,n}]$ 
and $n\ge 1$,
we have $(1 - \mu(B_i))^{n-1}  \le  \exp\bigl\{-(n-1)\mu(B_i)\bigr\} \le  \exp\bigl\{-(n-1)\min_{i\in[\ell_{m,n}]} \mu(B_i)\bigr\}$.
Further, $|\EE{U|Z_{(1:k)}}|_j \le  1$ for $j = 1,\dots, k$. Thus, we have 
\begin{equation*}
 \bigl| \EE{T_{m,n}} - \mathrm{ECE}_{1:k}^2  \bigr|
 \le   k \exp\bigl\{-(n-1)\min_{i\in[\ell_{m,n}]} \mu(B_i)\bigr\} + \sum_{i=1}^{\ell_{m,n}} \bigintsss_{B_i} \Bigl\|\EE{U|Z_{(1:k)}} -  \zeta_i \Bigr\|^2 dP_{Z_{(1:k)}}.
\end{equation*}
By Condition 
3.1,
we have 
\begin{equation*}
\begin{split}
|\EE{U|Z_{(1:k)}=z_{(1:k)}}_j -  [\zeta_i]_j| & 
\le  \max_{z_{(1:k)} \in B_i} \EE{U|Z_{(1:k)}=z_{(1:k)}}_j - \min_{z_{(1:k)} \in B_i} \EE{U|Z_{(1:k)}=z_{(1:k)}}_j \\
& \le  L \sup_{z^{(1)}_{(1:k)} \in B_i, z^{(2)}_{(1:k)} \in B_i} \bigl\|z^{(1)}_{(1:k)}  - z^{(2)}_{(1:k)} \bigr\|^s 
\le  \frac{L\kappa_{15}}{m^s} ,
\end{split}
\end{equation*}
for some $\kappa_{15}$ that depends only on $k$. Therefore,
\begin{equation}
\label{estimator_bias}
\begin{split}
\bigl| \EE{T_{m,n}} - \mathrm{ECE}_{1:k}^2  \bigr|
 &\le  k\exp\bigl\{-(n-1)\min_{i\in[\ell_{m,n}]} \mu(B_i)\bigr\} + \sum_{i=1}^{\ell_{m,n}} \frac{k L^2 \kappa_{15}^2}{m^{2s}}{\mu(B_i)} \\
 & \le  \kappa_{16}\Bigl(\exp\Bigl\{-\frac{c n}{m^{k}}\Bigr\} + \frac{1}{m^{2s}}\Bigr),  
\end{split}
\end{equation}
for some $\kappa_{16}$ that depends only on $k$ 
and $c$ that depends only on the distribution of $Z_{(1:k)}$. 
Combining \eqref{estimator_bias} with Theorem 
3.3
and 
3.4,
we find that $T_{m,n} \rightarrow_{p} \mathrm{ECE}_{1:k}^2$.

\begin{rmk}
 The order $O({m^{-2s}})$ 
 in the upper bound of the bias is optimal. 
 Consider a setting case where $K = 2$, $k = 1$, $P_Z$ is uniform on $\Delta_1$, and $\EE{Y_{r_1}|Z_{(1)}} = 0$. Then $Z_{(1)}$ follows a uniform distribution on $[1/2, 1]$ and $\EE{U|Z_{(1)}} = -Z_{(1)}$. 
 We partition the support of $Z_{(1)}$ 
 into $B_i = [1/2 + (i-1)/(2m), 1/2 + i/(2m)]$
 for $i = 1,\dots, \ell_{m,n} = m$.
 Then we have for all $i\in[m]$ that
\begin{equation*}
\frac{\int_{B_i}\EE{U|Z_{(1:k)}} d P_Z}{\mu(B_i)} = \frac{\int_{\frac{1}{2} + \frac{i-1}{2m}}^{\frac{1}{2} + \frac{i}{2m}} (-2x) dx}{\frac{2}{2m}} = -\frac{1}{2} - \frac{2i - 1}{4m}.
\end{equation*}
Therefore,
\begin{equation*}
\sum_{i=1}^{\ell_{m,n}} \bigintsss_{B_i} \Biggl\|\EE{U|Z_{(1:k)}} -  \frac{\int_{B_i}\EE{U|Z_{(1:k)}} d P_Z}{\mu(B_i)} \Biggr\|^2 dP_Z = m\int^{\frac{1}{4m}}_{-\frac{1}{4m}} 2x^2 dx = \frac{1}{48m^2}.
\end{equation*}   
This shows that the order of the bias bound is optimal.
\end{rmk}
\begin{rmk}
\label{small_bias}
Compared to the debiased estimator proposed by \cite{lee2023t}, 
\begin{equation*}
    T_{m,n}^{\rm{T-Cal}} = \frac1n\sum_{\substack{1 \le  i \le  \ell_{m,n},\, |\mathcal{I}_{m,n, i}| \ge  1}} \frac{1}{|\mathcal{I}_{m, n, i}|} \sum_{a\neq b \in \mathcal{I}_{m, n, i}} U^{(a)\top}U^{(b)},
\end{equation*}
our estimator $T_{m,n}$ replaces the factor $1/(n|\mathcal{I}_{m, n, i}|)$ 
by $1/[n\bigl(|\mathcal{I}_{m, n, i}| - 1\bigr)]$ for each bin. Following the calculation for \eqref{T_mn_mean} and \eqref{T_mn_bias}, we have 
\begin{equation*}
\begin{split}
\EE{T_{m,n}^{\rm{T-Cal}}} & = \frac{1}{n}\sum_{i=1}^{l_{m,n}} \EE{(|\mathcal{I}_{m,n, i}| - 1)I(|\mathcal{I}_{m,n, i}| \geq 1)} \bigl\|\EE{U | Z_{(1:k)} \in B_i}\bigr\|^2\\
& = \frac{1}{n}\sum_{i=1}^{l_{m,n}} \bigl(n\mu(B_i) - 1 + (1 - \mu(B_i))^n \bigr) \| \zeta_i \|^2,
\end{split}
\end{equation*}
and 
\begin{equation*}
\begin{split}
  \EE{T_{m,n}^{\rm{T-Cal}}} - \mathrm{ECE}_{1:k}^2 
= & -\frac{1}{n}\sum_{i=1}^{l_{m,n}} \bigl( 1 - (1 - \mu(B_i))^n \bigr) \| \zeta_i \|^2  \\
& + \sum_{i=1}^{\ell_{m,n}} \mu(B_i)\| \zeta_i \|^2- 
\int_{B_i} \|\EE{U| Z_{(1:k)}}\|^2 dP_{Z_{(1:k)} }.
\end{split}
\end{equation*}
Compared to the bias of $T_{m,n}$ \eqref{T_mn_bias}, the factor $n\mu(B_i)\bigl(1 - \mu(B_i)\bigr)^{n-1}$ in \eqref{T_mn_bias} is replaced by $1 - (1 - \mu(B_i))^n = O(1)$. The bias of $T_{m,n}^{\rm{T-Cal}}$ is larger.

\end{rmk}

\subsubsection{Variance of \texorpdfstring{$T_{m,n}$}{T}}

Let $N_i = |\mathcal{I}_{m, n, i}| \sim \text{Binomial}(n, \mu(B_i))$, $\vec{N} = (N_1, \dots, N_{\ell_{m,n}})$,  $\omega_{1,i} =  \Var{{U^{(1) \top}}U^{(2)} | Z^{(1)}_{(1:k)}, Z^{(2)}_{(1:k)} \in B_i}$, $\omega_{2,i} = \Cov{{U^{(1) \top}}U^{(2)}, {U^{(1) \top}}U^{(3)} |Z^{(1)}_{(1:k)}, Z^{(2)}_{(1:k)}, Z^{(3)}_{(1:k)} \in B_i }$, $\omega_{3,i} = \| \EE{U | Z_{(1:k)} \in B_i}\|^2$, Following the calculation for \eqref{conditional_mean} \eqref{conditional_var}, we have 
\begin{equation*}
\begin{split}
    & \Var{T_{m,n}} = \EE{\Var{T_{m,n}| \vec{N}}} + \Var{\EE{T_{m,n}| \vec{N}}} \\
    & = \sum_{i=1}^{\ell_{m,n}} \EE{\frac{2 N_i I(N_i\ge  2)}{n^2(N_i-1)}} \omega_{1,i} + \sum_{i=1}^{\ell_{m,n}} \EE{\frac{4N_i(N_i-2)I(N_i\ge  2)}{n^2(N_i-1)} } \omega_{2,i} + \Var{\sum_{i=1}^{\ell_{m,n}} \frac{N_i I(N_i\ge  2)}{n} \omega_{3,i}}. \\
\end{split}
\end{equation*}
 Using the fact that $\EE{N_i} = n\mu(B_i)$, $\Var{N_i} = n\mu(B_i) - n\mu(B_i)^2 $, $\Cov{N_i, N_j} = -n\mu(B_i)\mu(B_j)$, $\ell_{m,n} = O(m^{k})$ and
 $P(N_i \le  1) = \exp\{-n \log(\mu(B_i)^{-1})\} + n(1 - \mu(B_i))\exp\{-(n-1)\log(\mu(B_i)^{-1})\}= o(n^{-2})$, we have 
 \begin{equation}
 \Var{T_{m,n}} \le  \kappa_{17} \left(\frac{l_{m,n}}{n^2} + \sum_{i=1}^{\ell_{m,n}}\frac{n\mu(B_i)}{n^2} + \sum_{i=1}^{\ell_{m,n}} \frac{n\mu(B_i)\mu(B_j)}{n^2}  + 
 \frac{1}{n^2} \frac{l_{m,n}^2}{n^2}\right) \le  \kappa_{18}\left(\frac{m^k}{n^2} + \frac{1}{n}  \right),
 \end{equation}
for some constants $\kappa_{17}, \kappa_{18}$.

\subsection{Proof of Theorem 
3.7
}
\label{proof_KI_mean}
If the model is top-1-to-k calibrated, $\EE{T_{m,n}} = \mathrm{ECE}_{1:k}^2 = 0$. Then by the definition of $C_{m,n}$ and Theorem 
3.3, 
we have 
\begin{equation*}
P(\EE{T_{m,n}} \in C_{m,n}) = P\biggl(T_{m,n}^{+} <   z_{\alpha} \sigma_0/(n\sqrt{w})\biggr) = P\biggl(\frac{n\sqrt{w} T_{m,n}}{\sigma_0} <   z_{\alpha} \biggr) \rightarrow 1-\alpha. 
\end{equation*}
If $\mathrm{ECE}_{1:k}^2 \neq 0$, let $A_1, A_2, A_3$ be the events defined as
\begin{equation*}
\begin{split}
A_1&=\{ \EE{T_{m,n}} \in  [T_{m,n} - z_{\alpha/2} \hat\sigma_1/\sqrt{n} ,\quad T_{m,n} + z_{\alpha/2} \hat\sigma_1/\sqrt{n}]\}, \\
A_2&=\{ \EE{T_{m,n}} \in [T_{m,n} / 2, \quad T_{m,n} + z_{\alpha/2} \hat\sigma_1/\sqrt{n}]\}, \\
A_3&=\{ \EE{T_{m,n}} \in [\max\{0, T_{m,n}^{+} - z_{\alpha}\hat\sigma_1/\sqrt{n}\}, \quad T_{m,n}^{+} + z_{\alpha/2} \hat\sigma_1/\sqrt{n}] \setminus \{0\} \}.
\end{split}
\end{equation*}
Further, let $D_1, D_2, D_3$ be the events defined as
\begin{equation*}
\begin{split}
D_1 &=\{T_{m,n} / 2 \le  T_{m,n} - z_{\alpha/2} \hat\sigma_1/\sqrt{n}\},\quad
D_2 =\{T_{m,n} - z_{\alpha/2} \hat\sigma_1/\sqrt{n} < T_{m,n} / 2 \le  T_{m,n} - z_{\alpha} \hat\sigma_1/\sqrt{n}\},\\
D_3 &=\{ T_{m,n}^{+} - z_{\alpha} \hat\sigma_1/\sqrt{n} < T_{m,n}^{+} / 2\}.
\end{split}
\end{equation*}
Then 
\begin{equation*}
P(\EE{T_{m,n}} \in C_{m,n}) = P(A_1\cap D_1) + P(A_2\cap D_2) + P(A_3\cap D_3).
\end{equation*}
Since $T_{m,n} \rightarrow_{p} \mathrm{ECE}_{1:k}^2 > 0$, we have $P(D_1) \rightarrow 1, P(D_2) \rightarrow 0$, and $ P(D_3) \rightarrow 0$. Therefore
\begin{equation*}
0 \le  P(A_1 \cap D_1^c) \le  P(D_1^c) \rightarrow 0, \, 0 \le  P(A_2 \cap D_2) \le  P(D_2) \rightarrow 0, 
\mathrm{and}
\, 0 \le  P(A_3 \cap D_3) \le  P(D_3) \rightarrow 0.
\end{equation*}
By Theorem 
3.4, 
we have 
$P(A_1) = 
P\biggl(-z_{\alpha / 2} \le  
{\sqrt{n}(T_{m,n} - \EE{T_{m,n}})}/{\hat{\sigma}_1} \le  z_{\alpha / 2}\biggr) \rightarrow 1-\alpha$.
Then 
$P(A_1 \cap D_1) = P(A_1) - P(A_1 \cap D_1^c) \rightarrow 1-\alpha$.
Therefore 
$P(\EE{T_{m,n}} \in C_{m,n}) \rightarrow 1-\alpha$, finishing the proof.

\subsection{Proof of Theorem 
3.9
}
\label{proof_KI_ece}
By Condition 
3.8
and the bound \eqref{estimator_bias} on the bias, we have 
${\sqrt{n}\bigl| \EE{T_{m,n}} - \mathrm{ECE}_{1:k}^2  \bigr|}/{\hat{\sigma}_1} \rightarrow 0$.
Therefore, 
by Theorem 
3.4,
$P\biggl(-z_{\alpha / 2} \le  {\sqrt{n}(T_{m,n} - \mathrm{ECE}_{1:k}^2 )}/{\hat{\sigma}_1} \le  z_{\alpha / 2}\biggr) \rightarrow 1-\alpha$.
Then following the same argument as in Section \ref{proof_KI_mean}, we find
$P(\mathrm{ECE}_{1:k}^2 \in C_{m,n}) \rightarrow 1-\alpha$, finishing the proof.

\subsection{{Finite Sample Analysis}}
\label{fini}

Theorem 3.9 ensures the asymptotic exact coverage rate of our proposed interval. In this section, we further provide an analysis of the finite sample coverage rate of the confidence interval based on the Poissonized estimator $\tilde{T}(\tilde{N})$ \eqref{poisson_estimator}. 

\begin{theorem}[Finite sample coverage bounds]
\label{thm:finite_sample}
Let $\tilde{T}(\tilde{N})$ be the Poissonized estimator defined in \eqref{poisson_estimator}, $\tilde{k} = \min(k, K-1)$ and assume
 that $Z_{(1:k)}$ has a strictly positive density on its support. 
\begin{enumerate}
    \item[\textbf{(i)}] \textbf{Calibrated model.} Suppose $\mathrm{ECE}_{1:k}^2 = 0$ and that Condition 3.1 holds. 
    Then, the coverage of our confidence interval satisfies
    \begin{equation}\label{fsc}
    \left|
    P(\mathrm{ECE}_{1:k}^2 \in C_{m}) 
    - (1-\alpha)\right| \leq \kappa_{21} \left(\frac{1}{m^{\min(\tilde{k}/2,1)}} + \frac{m^{\tilde{k}}}{n} \right),
    \end{equation}
    where $\kappa_{21}$ is a constant and $\sigma_0^2$ is as defined in Theorem 3.3 in the main text.

    \item[\textbf{(ii)}] \textbf{Miscalibrated model.} Suppose $\mathrm{ECE}_{1:k}^2 > 0$  and that Condition 3.1 holds.
    Then, for any $\delta \ge 0$, the coverage error of our confidence interval
    satisfies 
    with probability at least $1 - \delta$ that
\begin{equation}\label{fsu}
    \left|
    P(\mathrm{ECE}_{1:k}^2 \in C_{m}) 
    - (1-\alpha)\right| 
    \leq \kappa_{27} \left(\frac{1}{\sqrt{\delta}} \left(\frac{m^{\tilde{k}/2}}{n} +\frac{1}{m^{\tilde{k}/2}} + \frac{1}{\sqrt{n}}\right) + \frac{m^{2\tilde{k}}}{n^2} +\frac{m^{\tilde{k}}}{n} + \frac{\sqrt{n}}{m^{s}} \right),
\end{equation}
    where $\kappa_{27}$ is a constant and $s$ is the H\"older smoothness parameter from Condition 3.1.
\end{enumerate}
\end{theorem}

Notably, 
the upper bound from \eqref{fsc} has a different form than the MSE upper bound
we discussed in Remark 2.1.
which was of the form
$$\EE{(T_{m} - \mathrm{ECE}_{1:k}^2)^2}=
O\left( m^{\min(k, K-1)} n^{-2}
+ n^{-1} + m^{-2s} \right).$$ 
This makes sense, as the two bounds are for qualitatively different convergence rates.
Moreover, in \eqref{fsu}, we can choose $\delta = 1/n^a$ for some small $a>0$ to obtain a high-probability bound that converges to zero with an appropriate choice of $m$.

\subsubsection{Calibrated Model}
We first consider the case of a calibrated model with $\mathrm{ECE}_{1:k}^2 = 0$. Our goal is to derive an upper bound on the difference between the coverage rate $P\bigl(\tilde T(\tilde{N}) \leq z_{\alpha} \sigma_0/(n\sqrt{w})\bigr)$ and the target level $\alpha$.
    
Recall that we let the number of data points be a Poisson random variable $\tilde{N} \sim \text{Poisson}(n)$ and let $w = \text{Vol}(B_i) $ be the volume of any partition element. 

Defining $\mathcal{I}(\tilde{N})_{i} = \left\{j: Z^{(j)}_{(1:k)} \in B_i, 1\le  j \le  \tilde{N} \right\}$, 
we then have $\bigl|\mathcal{I}(\tilde{N})_{i}\bigr| \sim \text{Poisson}(n\mu(B_i)),$
and $|\mathcal{I}(\tilde{N})_{i}|, i=1,\dots, \ell_{m}$ are mutually independent. The Poissonized estimator is given by 
\begin{equation}
\label{poisson_estimator}
     \tilde T(\tilde{N}) = \sum_{\substack{1 \le  i \le  \ell_{m},\, |\mathcal{I}(\tilde{N})_{i}| \ge  2}} \frac{1}{n\bigl(|\mathcal{I}(\tilde{N})_{i}| - 1\bigr)}  \Biggl[  \sum_{j_1\neq j_2 \in \mathcal{I}(\tilde{N})_{i}} U^{(j_1)\top}U^{(j_2)} \Biggr],
    \end{equation}
which is a sum of independent random variables. 

For all $i\in[\ell_{m}]$, let\footnote{Set $M(B_i) = 0$ if $|\mathcal{I}(\tilde{N})_{i}| \leq 2$.}
\begin{equation*}
M(B_i) = \sum_{j_1\neq j_2 \in \mathcal{I}(\tilde{N})_{i}} U^{(j_1)\top}U^{(j_2)}/(|\mathcal{I}(\tilde{N})_{i}| - 1\bigr),
\end{equation*}
and define the rescaled estimator 
\begin{equation*}
T(\tilde{N}) = n\sqrt{w} \tilde T(\tilde{N}) = \sum_{\substack{1 \le  i \le  \ell_{m}
}
}  \sqrt{w} M(B_i).
\end{equation*}
Following calculations similar to those in \Cref{proof_clt_calibrated_model}, and using \eqref{partition_variance_calibrated} and \eqref{poisson_order_-1},
$\Var{T(\tilde{N})} $ equals
\begin{equation*}
\begin{split}
& \sum_{q=1}^{\ell_{m}} w \Var{M(B_q)} 
=  O\left(\frac{w\ell_{m}}{n\min_{i\in[\ell_{m}]}{\mu(B_i)}}\right) + 2w\sum_{q=1}^{\ell_{m}} \sum_{i,j=1}^{k}\left\{\EE{Z_{(i)} Z_{(j)}|Z_{(1:k)} \in B_q}\right\}^2\\
& +2w \sum_{q=1}^{\ell_{m}} 
\left( \sum_{i=1}^{k}\EE{Z_{(i)}|Z_{(1:k)} \in B_q}\biggl(\EE{Z_{(i)}|Z_{(1:k)} \in B_q}-2\EE{Z_{(i)}^2|Z_{(1:k)} \in B_q}\biggr) \right)\\
= & O\left(\frac{w\ell_{m}}{n\min_{i\in[\ell_{m}]}{\mu(B_i)}}\right) +
2\int_{\Delta(K,k)} \bigl(\|Z_{(1:k)}\|_2^2 + 2 \|Z_{(1:k)}\|_3^3 + \|Z_{(1:k)}\|_2^4\bigr) dZ_{(1:k)} + O \left(\frac{w \ell_{m}}{m} \right) \\
= & \sigma_0^2 + O\left(w\ell_{m} \left( \frac{1}{n\min_{i\in[\ell_{m}]}{\mu(B_i)}} + \frac{1}{m}\right)\right).
\end{split}
\end{equation*}

By \eqref{third_moment_calibrated}, we have 
\begin{equation*}
w^{3/2} \sum_{i=1}^{\ell_{m}} \EE{\left|M(B_i) - \EE{M(B_i)} \right|^3} = O\bigl(w^{3/2} \ell_{m}\bigr).
\end{equation*}

Then by the Berry-Esseen theorem (see e.g., Lemma 8.14 of \cite{dasgupta2011probability}), for any $t \in \mathbb{R}$, we have 
\begin{equation*}
    \left| P\left(\frac{T(\tilde{N})}{\sqrt{\Var{T(\tilde{N})}}} \leq t \right) - \Phi(t) \right| \leq \kappa_{19} \frac{w^{3/2} \sum_{i=1}^{\ell_{m}} \EE{\left|M(B_i) - \EE{M(B_i)} \right|^3}}{\Var{T(\tilde{N})}^{3/2}},
\end{equation*}
where $\Phi$ is the CDF of the standard normal distribution and $\kappa_{19}$ is a constant. 
Therefore,
\begin{equation*}
\begin{split}
& \left|P\bigl(\tilde T(\tilde{N}) \leq z_{\alpha} \sigma_0/(n\sqrt{w})\bigr) - \alpha \right| = \left|P\bigl( T(\tilde{N}) \leq z_{\alpha} \sigma_0\bigr) - \Phi(z_\alpha)\right| \\
\leq & \left| P\left( \frac{T(\tilde{N})}{\sqrt{\Var{T(\tilde{N})}}} \leq z_{\alpha}\frac{\sigma_0}{\sqrt{\Var{T(\tilde{N})}}}  \right) - \Phi\left(z_{\alpha}\frac{\sigma_0}{\sqrt{\Var{T(\tilde{N})}}} \right) \right| \\
& + \left|\Phi\left(z_{\alpha}\frac{\sigma_0}{\sqrt{\Var{T(\tilde{N})}}} \right) - \Phi(z_{\alpha}) \right| \\
\leq & \kappa_{20} w\ell_{m} \left( \sqrt{w} +  \frac{1}{n\min_{i\in[\ell_{m}]}{\mu(B_i)}} + \frac{1}{m}\right),
\end{split}
\end{equation*}
where $\kappa_{20}$ is some constant. 

If $Z_{(1:k)}$ has a strictly positive density, 
then 
under our binning scheme,
we have 
$w = O(1 / m^{\tilde{k}}), \ell_{m} = O(m^{\tilde{k}})$, 
and
$1 / \min_{i\in[\ell_{m}]}{\mu(B_i)} = O(m^{\tilde{k}})$, where $\tilde{k} = \min(k, K-1)$. Then the bound becomes
\begin{equation*}
    \left|P\bigl(\tilde T(\tilde{N}) \leq z_{\alpha} \sigma_0/(n\sqrt{w})\bigr) - \alpha \right| \leq \kappa_{21} \left(\frac{1}{m^{\tilde{k}/2}} + \frac{m^{\tilde{k}}}{n} + \frac{1}{m}\right),
\end{equation*}
for some constant $\kappa_{21}$.

\subsubsection{Miscalibrated Model}
We next consider the miscalibrated model with $\mathrm{ECE}_{1:k}^2 > 0$. 
Our first goal is to derive an upper bound on the difference between the coverage probability $P\bigl(\mathrm{ECE}_{1:k}^2 \leq \tilde T(\tilde{N}) + z_{\alpha} \sigma_1/\sqrt{n}\bigr)$. 
and the target level $\alpha$. 
Then, 
in 
Section \ref{miscal-est} below, we provide a result for the case of an estimated variance.

Similar to the calibrated case, we define the rescaled estimator:
\begin{equation*}
T(\tilde{N}) = \sqrt{n}\tilde T(\tilde{N}) = \sum_{\substack{1 \le  i \le  \ell_{m},\, |\mathcal{I}(\tilde{N})_{i}| \ge  2}} \frac{1}{\sqrt{n}} M(B_i).
\end{equation*}
While this notation coincides with the differently scaled statistic for the calibrated model, no confusion will arise.
Following calculations similar  to those in 
\Cref{proof_clt_miscalibrated_model}, and using \eqref{partition_variance}, \eqref{poisson_order_0}, \eqref{poisson_order}, as well as
H\"older smoothness (Condition 3.1), we 
find
\begin{equation*}
\begin{split}
&\Var{T(\tilde{N})} 
=  \sum_{i=1}^{\ell_{m}}4\EE{U^{\top}|Z_{(1:k)} \in B_i}\Cov{U|Z_{(1:k)} \in B_i}\EE{U|Z_{(1:k)} \in B_i} \mu(B_i) \\
& + \sum_{i=1}^{\ell_{m}}  \bigl\|\EE{U | Z_{(1:k)} \in B_i}\bigr\|^4 \mu(B_i)  + O\left(\frac{\ell_{m}}{n}\right)
= \sigma_1^2 + O\left(\frac{\ell_{m}}{n}\right) + O\left(\frac{1}{m^s}\right) .
\end{split}
\end{equation*}

By \eqref{emn}, we have 
that $\EE{T(\tilde{N})}$ equals
\begin{equation*}
\sqrt{n}\sum_{i=1}^{\ell_{m}} \bigl\|\EE{U | Z_{(1:k)} \in B_i}\bigr\|^2 \mu(B_i) (1 - e^{-n\mu(B_i)}) = \sqrt{n}\mathrm{ECE}_{1:k}^2 + O\left(\frac{\sqrt{n}}{m^s} + \sqrt{n}e^{-n\min_{i\in[\ell_{m}]}\mu(B_i)}\right).
\end{equation*}

By \eqref{third_moment}, we have
\begin{equation*}
 \EE{\sum_{\substack{1 \le  i \le  \ell_{m},\, |\mathcal{I}(\tilde{N})_{i}| \ge  2}} \frac{1}{n^{\frac{3}{2}}} \bigl|M(B_i)- \EE{M(B_i)}\bigr|^{3}}  = O \left(\frac{1}{n\sqrt{\min_{i\in[\ell_{m}]}{\mu(B_i)}}} + \frac{ \ell_{m}}{n^{\frac{3}{2}}} \right).
\end{equation*}

By the Berry-Esseen theorem, for any $t \in \mathbb{R}$, we have 
\begin{equation*}
    \left| P\left(\frac{T(\tilde{N}) - \EE{T(\tilde{N})}}{\sqrt{\Var{T(\tilde{N})}}} \leq t \right) - \Phi(t) \right| \leq \kappa_{22} \frac{ \sum_{q=1}^{\ell_{m}} \EE{\left|M(B_q) - \EE{M(B_q)} \right|^3}}{n^{3/2}\Var{T(\tilde{N})}^{3/2}},
\end{equation*}
where $\kappa_{22}$ is some constant. Let
\begin{equation*}
t = \frac{\sigma_1}{\sqrt{\Var{T(\tilde{N})}}} \left(-z_{\alpha} + \frac{\sqrt{n}\mathrm{ECE}_{1:k}^2 - \EE{T(\tilde{N})}}{\sigma_1}\right)  = -z_{\alpha} + O\left(\frac{\ell_{m}}{n} +  \frac{\sqrt{n}}{m^s} + \sqrt{n}e^{-n\min\mu(B_i)}\right).
\end{equation*}
Then, we have
\begin{equation*}
\begin{split}
& \left| P\bigl(\mathrm{ECE}_{1:k}^2 \leq \tilde T(\tilde{N}) + z_{\alpha} \sigma_1/\sqrt{n}\bigr) - \alpha  \right| = \left| P\left(\frac{T(\tilde{N}) - \EE{T(\tilde{N})}}{\sqrt{\Var{T(\tilde{N})}} } \geq  t\right) - \Phi(z_{\alpha})\right| \\
 \leq & \left| P\left(\frac{T(\tilde{N}) - \EE{T(\tilde{N})}}{\sqrt{\Var{T(\tilde{N})}} } \geq  t\right) - (1 - \Phi(t)) \right| + \left| 1 - \Phi(t) - (1 - \Phi(-z_{\alpha}))\right|, 
\end{split}
\end{equation*}
which is further upper bounded by
 \begin{equation*}
\begin{split}
& \kappa_{23}  \left(\frac{1}{n\sqrt{\min{\mu(B_i)}}} + \frac{\ell_{m}}{n}  + \frac{\sqrt{n}}{m^s} + \sqrt{n}e^{-n\min\mu(B_i)} \right),
\end{split}
\end{equation*}
where $\kappa_{23}$ is some constant. 

If $Z_{(1:k)}$ has a strictly positive density, under our binning scheme, $\ell_{m} = O(m^{\tilde{k}})$
and 
$\min_{i\in[\ell_{m}]}{\mu(B_i)} = O(m^{-\tilde{k}})$. Then we conclude
\begin{equation*}
    \left| P\bigl(\mathrm{ECE}_{1:k}^2 \leq \tilde T(\tilde{N}) + z_{\alpha} \sigma_1/\sqrt{n}\bigr) - \alpha  \right| \leq \kappa_{24} \left(\frac{m^{\tilde{k}}}{n} + \frac{\sqrt{n}}{m^{s}} \right).
\end{equation*}

\subsubsection{Miscalibrated Model with Estimated Variance}
\label{miscal-est}

In practice, $\sigma_1^2$ is typically unknown
and we construct the confidence interval using the estimated variance $\hat{\sigma}_1^2$. We next study the difference between the coverage probability $P\bigl(\mathrm{ECE}_{1:k}^2 \leq \tilde T(\tilde{N}) + z_{\alpha} \hat{\sigma}_1/\sqrt{n}\bigr)$ and the target level $\alpha$.
By \eqref{sigma1_mean_diff}, we have
\[
\EE{\hat{\sigma}_1^2} = \sigma_1^2 + O\Bigl( \frac{1}{m^s} + \frac{\ell_{m}}{n} + \frac{\ell_{m}^2}{n^2}\Bigr).
\]

By \eqref{var_hat_sigma_1}, \eqref{var_hat_sigma_2} and \eqref{var_hat_sigma_3}, we have 
\[
\Var{\hat{\sigma}_1^2} = O\left(\frac{1}{n^2 \min_{i\in[\ell_{m}]} \mu(B_i)} + \max_{i\in[\ell_{m}]} \mu(B_i) + \frac{1}{n} \right).
\]
By Chebshev's inequality, for any $\delta > 0$, we have 
\(
P\left(\left|\hat{\sigma}_1^2 - \EE{\hat{\sigma}_1^2}  \right| > \frac{\sqrt{\Var{\hat{\sigma}_1^2}}}{\sqrt{\delta}}\right) \leq \delta.
\)
Recall that $\Var{T(\tilde{N})} 
= \sigma_1^2 + O(\ell_{m}/n) + O(1/m^s)$.
Then for any $\delta \in (0, 1)$, with probability at least $1-\delta$, we have
\begin{equation*}
\begin{split}
\left| \frac{\hat{\sigma}_1^2}{\Var{T(\tilde{N})}} - 1 \right| &=  \left|\frac{\hat{\sigma}_1^2 - \EE{\hat{\sigma}_1^2} + \EE{\hat{\sigma}_1^2} - {\Var{T(\tilde{N})}}}{\Var{T(\tilde{N})}} \right| \\
&\leq \kappa_{25} \left(\frac{1}{\sqrt{\delta}} \left(\frac{1}{n\sqrt{\min_{i\in[\ell_{m}]} \mu(B_i)}} +\sqrt{\max_{i\in[\ell_{m}]} \mu(B_i)} + \frac{1}{\sqrt{n}}\right) + \frac{1}{m^s} + \frac{\ell_{m}}{n} + \frac{\ell_{m}^2}{n^2}\right), 
\end{split}
\end{equation*}
for some constant $\kappa_{25}$.

Let
\begin{equation*}
t = \frac{\hat{\sigma}_1}{\sqrt{\Var{T(\tilde{N})}}} \left(-z_{\alpha} + \frac{\sqrt{n}\mathrm{ECE}_{1:k}^2 - \EE{T(\tilde{N})}}{\hat{\sigma}_1}\right).
\end{equation*}
We then obtain
\begin{equation*}
\begin{split}
& \left| P\bigl(\mathrm{ECE}_{1:k}^2 \leq \tilde T(\tilde{N}) + z_{\alpha} \hat{\sigma}_1/\sqrt{n}\bigr) - \alpha  \right| = \left| P\left(\frac{T(\tilde{N}) - \EE{T(\tilde{N})}}{\sqrt{\Var{T(\tilde{N})}} } >  t\right) - \Phi(z_{\alpha})\right| \\
 \leq & \left| P\left(\frac{T(\tilde{N}) - \EE{T(\tilde{N})}}{\sqrt{\Var{T(\tilde{N})}} } >  t\right) - (1 - \Phi(t)) \right| + \left| 1 - \Phi(t) - (1 - \Phi(-z_{\alpha}))\right|.
\end{split}
\end{equation*}
Therefore, for any $\delta \in (0,1)$, with probability at least $1-\delta$,

\begin{equation*}
\begin{split}
    & \left| P\bigl(\mathrm{ECE}_{1:k}^2 \leq \tilde T(\tilde{N}) + z_{\alpha} \hat{\sigma}_1/(\sqrt{n})\bigr) - \alpha  \right| \\
    \leq &  \kappa_{26}  \left(\frac{1}{\sqrt{\delta}} \left(\frac{1}{n\sqrt{\min_{i\in[\ell_{m}]} \mu(B_i)}} +\sqrt{\max_{i\in[\ell_{m}]} \mu(B_i)} + \frac{1}{\sqrt{n}}\right)\right. \\
    &\left. \qquad+ \frac{\ell_{m}^2}{n^2} + \frac{\ell_{m}}{n} + \frac{1}{m^s} + \frac{\sqrt{n}}{m^s} + \sqrt{n}e^{-n\min_{i\in[\ell_{m}]} \mu(B_i)} \right).
\end{split}
\end{equation*}
where $\kappa_{26}$ is some constant. 

If $Z_{(1:k)}$ has a strictly positive density, under our binning scheme, with probability at least $1 - \delta$, we have
\begin{equation*}
    \left| P\bigl(\mathrm{ECE}_{1:k}^2 \leq \tilde T(\tilde{N}) + z_{\alpha} \sigma_1/(\sqrt{n})\bigr) - \alpha  \right| 
    \leq  \kappa_{27} \left(\frac{1}{\sqrt{\delta}} \left(\frac{m^{\tilde{k}/2}}{n} +\frac{1}{m^{\tilde{k}/2}} + \frac{1}{\sqrt{n}}\right) + \frac{m^{2\tilde{k}}}{n^2} +\frac{m^{\tilde{k}}}{n} + \frac{\sqrt{n}}{m^{s}} \right),
\end{equation*}
for some constant $\kappa_{27}$.


\section{Additional Experiments}
\label{addtional_experiments}
\subsection{Simulated Data}
The length of the confidence intervals of different methods for the simulated data experiments in the main text is given in \Cref{simulation_interval_length_1000}.

\begin{figure}
    \centering
    \includegraphics[scale=0.25]{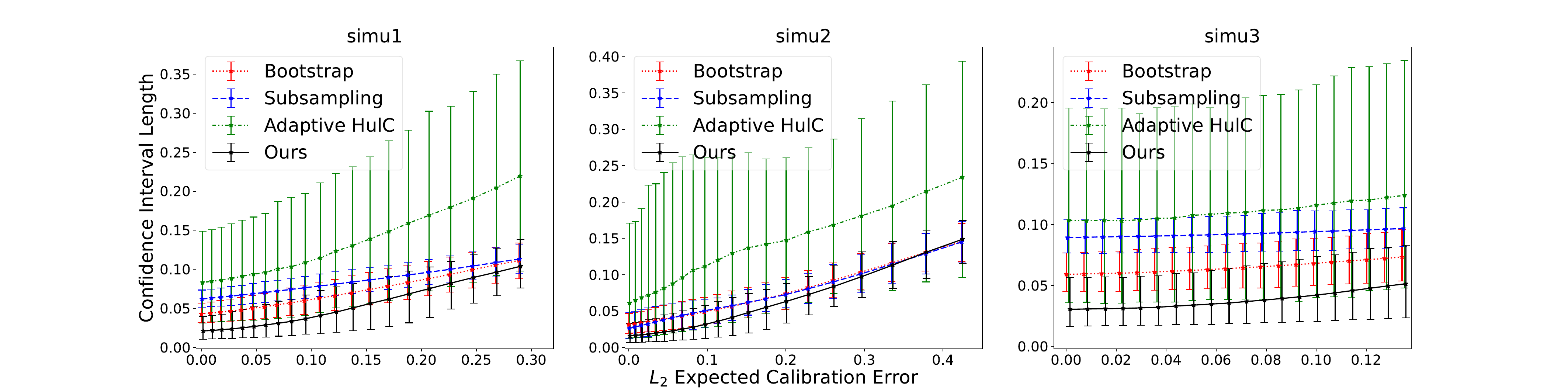}
    \caption{
    Confidence Interval Length vs. Calibration Error: Over 1000 datasets, we compute the average length of confidence intervals, with error bars representing the 5th and 95th percentiles.
    }
    \label{simulation_interval_length_1000}
\end{figure}

We also provide additional simulation experiments to validate the finite sample performance of our methods. 
We follow the same setting as in the main text, but with a different sample size $n = 1000$. For Settings 1 and 2, we consider the number of bins $mK = 50$, and for Setting 3, we set $mK = 20$. 
We report the coverage rate in \Cref{simulation_coverage_n_100}, the length of confidence intervals, the ratio of the average interval lengths of other methods relative to our method in \Cref{simulation_interval_length_n_100}.
We also consider the hypothesis testing problem with the null hypothesis $H_0: \mathrm{ECE}_{1:k}^2 = 0$, and compare our method with the T-Cal method \citep{lee2023t} in Settings 1 and 2. T-Cal is designed for testing full calibration, so we consider full calibration in these settings. The threshold for the T-Cal method is obtained through 1000 Monte Carlo simulations, as suggested in Section 4.1 of \cite{lee2023t}. For all other methods, we reject the null hypothesis if the confidence interval does not contain zero. The power of each method is reported in \Cref{simulation_type2_error_n_100}.
Similar to the simulation example in the main text, our method still generates valid confidence intervals. Compared to Adaptive HulC \citep{kuchibhotla2024hulc} and Subsampling methods, our method generates much shorter intervals and exhibits greater power.

\begin{figure}
    \centering
    \includegraphics[scale=0.25]{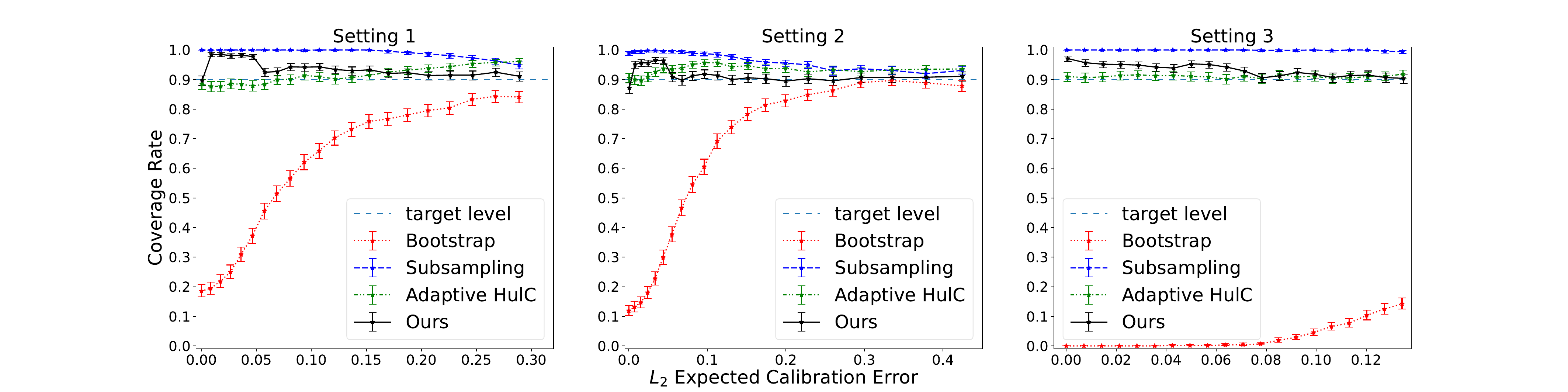}
    \caption{Coverage Rate vs. Calibration Error. For each setting and each value of $\beta$, we compute the coverage rate of the confidence intervals over 1000 datasets. The error bars for coverage rates are generated using the Clopper–Pearson method \citep{clopper1934use}. }
    \label{simulation_coverage_n_100}
\end{figure}

\begin{figure}
    \centering
    \includegraphics[scale=0.25]{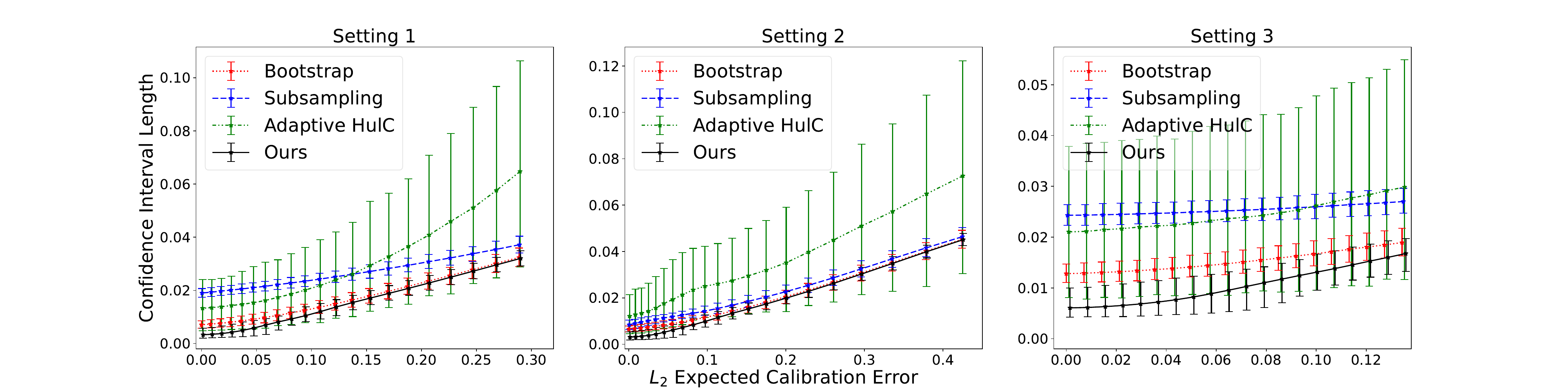}
    \includegraphics[scale=0.25]{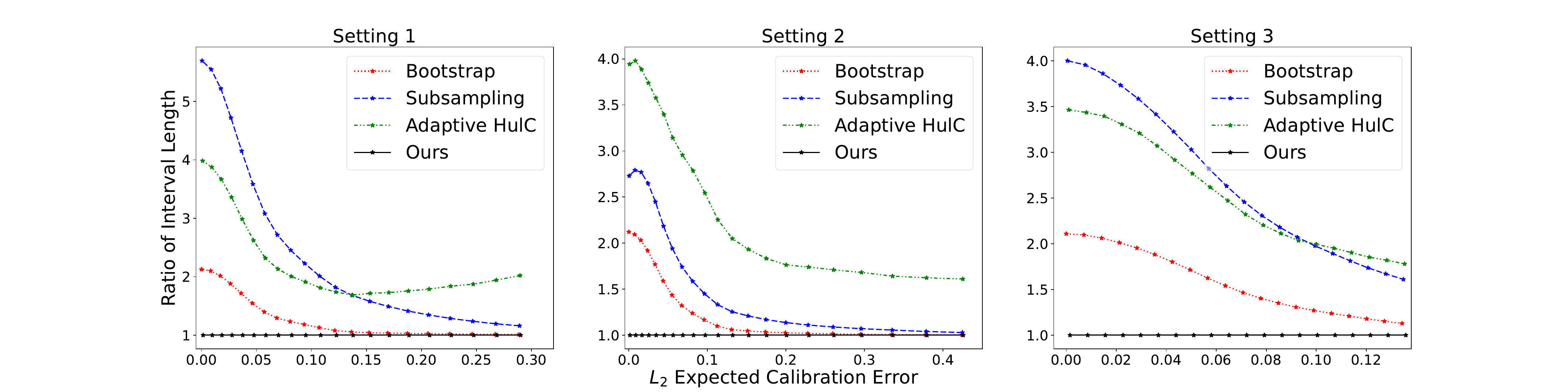}
    \caption{Confidence Interval Length vs. Calibration Error: Over 1000 datasets, we compute the average length of confidence intervals, with error bars representing the 5th and 95th percentiles. In the second row, we report the ratio of the average confidence interval lengths of various methods relative to our method.}
    \label{simulation_interval_length_n_100}
\end{figure}

\begin{figure}
    \centering
    \includegraphics[scale=0.25]{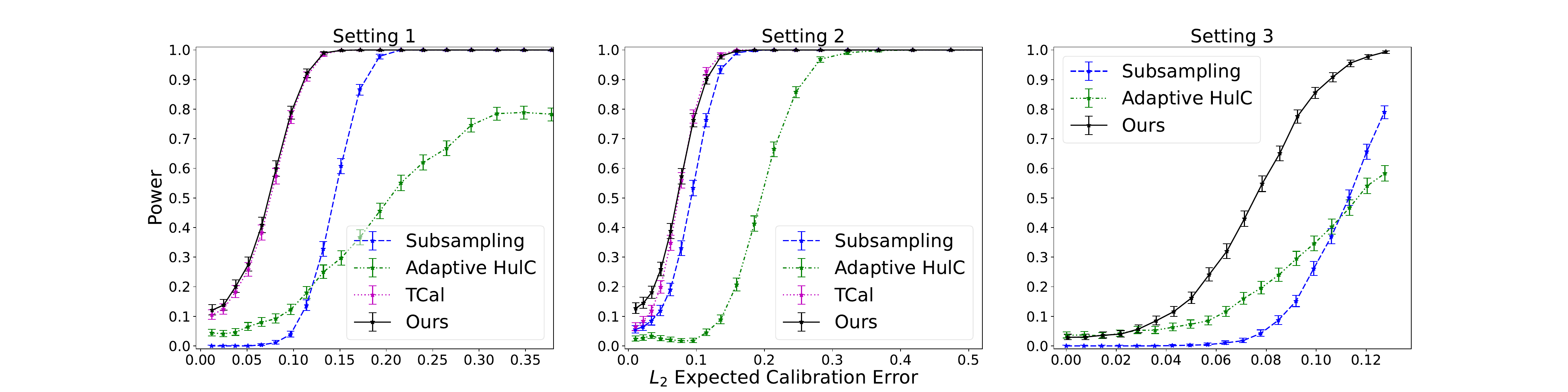}
    \caption{Power vs. Calibration Error.  For each setting and each value of $\beta$, we compute the power (percentage of null hypothesis rejections) over 1000 datasets. The error bars for power are generated using the Clopper–Pearson method.}
    \label{simulation_type2_error_n_100}
\end{figure}

{ We also conduct a sensitivity analysis of the binning parameter $m$ using our simulated data.
Figures~\ref{fig:coverage_bin_tuning} and \ref{fig:interval_length_bin_tuning} report the coverage rates and interval lengths using simulated data for different choices of the number of bins per dimension $mK$. With the exception of the case where the number of bins is 10 in Setting 2, our method consistently achieves coverage rates at or above the target level and produces confidence intervals that are shorter than those obtained from resampling-based methods, including subsampling and adaptive HulC. The result shows that our method is fairly robust to the choice of binning.}

\begin{figure}
    \centering
    \includegraphics[width=1.0\linewidth]{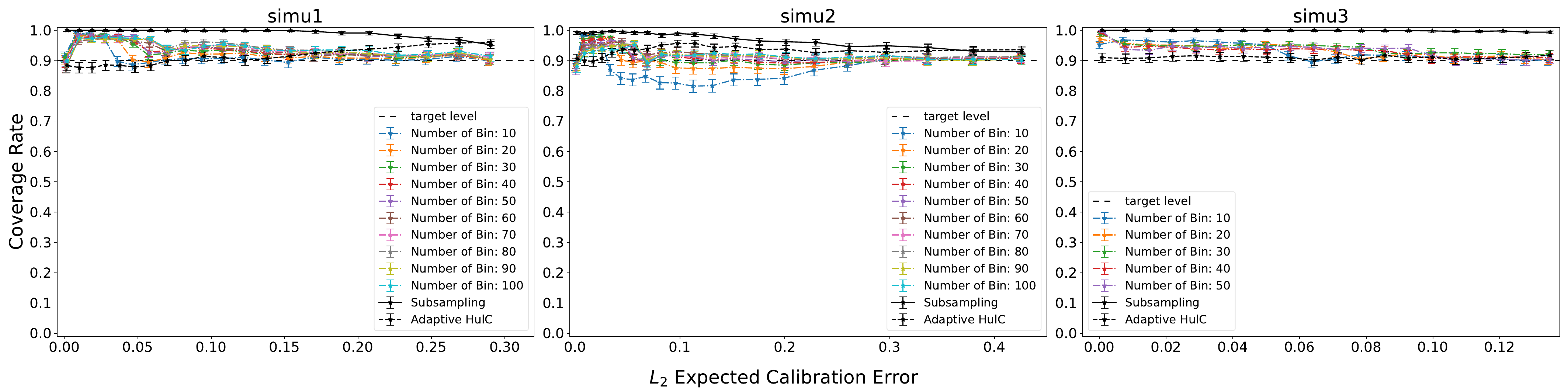}
    \caption{Coverage Rate with different binning parameters. The ``Number of Bin" $mK$ specifies the number of partitions per dimension. }
    \label{fig:coverage_bin_tuning}
\end{figure}
\begin{figure}
    \centering
    \includegraphics[width=1.0\linewidth]{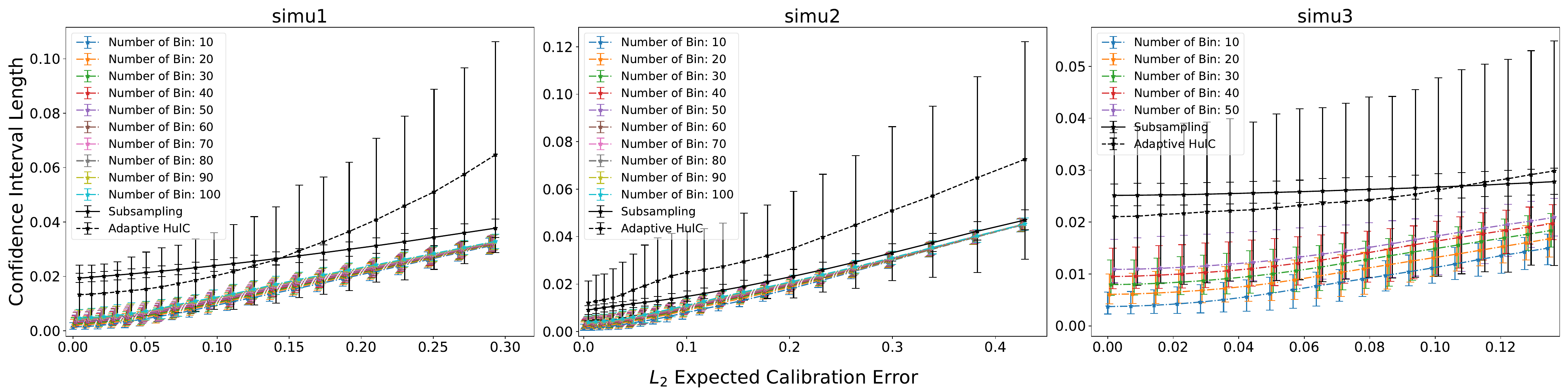}
    \caption{Interval Length with different binning parameters. The ``Number of Bin" $mK$ specifies the number of partitions per dimension. }
    \label{fig:interval_length_bin_tuning}
\end{figure}

\subsection{CIFAR}

We evaluate the top-1 $\ell_2$ ECE for models trained on the CIFAR10 dataset, all other settings are the same as those in Section 4.2.1 in the main text. Similar to the T-Cal method \citep{lee2023t}, our method can test if a model is significantly mis-calibrated by checking if the confidence interval covers zero.

\begin{table}
\centering
    \begin{tabular}{lcccc}
    \toprule
\multirow{2}{*}{} & \multicolumn{2}{c}{DenseNet40} &   \multicolumn{2}{c}{ResNet110} \\
                  & $\sqrt{T_{m,n}}$        & CI & $\sqrt{T_{m,n}}$        & CI    \\
\midrule
No Calibration      &    7.54\% & [6.69\%, 8.30\%] & 7.89\% & [7.03\%, 8.67\%]  \\                  
Temperature Scaling       &  1.05\%  & [0.00\%, 2.34\%] &  1.83\%  & (0.00\%, 2.86\%]   \\
Matrix Scaling & 1.23\% & [0.00\%, 2.36\%] & 2.04\% & [0.62\%, 3.00\%]\\
Focal Loss & 1.19\%  & [0.00\%, 2.15\%] & 1.13\%  & [0.00\%, 2.37\%]  \\
MMCE & 2.50\%  & [1.38\%, 3.44\%] & 2.41\%  & [1.20\%, 3.39\%] \\
\bottomrule          
\end{tabular}
\caption{The values of $\sqrt{T_{m,n}}$ and confidence intervals for $\mathrm{ECE}_{1:k}$ of models trained on CIFAR10. 
}
\label{tab:cifar10}
\end{table}

{Our current theoretical results assume that the evaluation set consists of i.i.d. samples independent of the data used to train the classifier. Under this assumption, the predicted probabilities and labels $(Z^{(i)}, Y^{(i)})$ are i.i.d. samples, which allows us to derive the asymptotic distribution of the $\ell_2$ ECE estimator and construct valid confidence intervals. For recent large-scale models, the training set can include almost all the seen data. When the evaluation set overlaps with the training set, this independence assumption is violated, and the confidence interval guarantees in the paper no longer directly apply. Note that a model assigning probability 1 to the correct label is perfectly calibrated with 0 $\ell_2$-ECE. When a model overfits, it can achieve 100\% accuracy on the training set, with predicted probabilities for the correct labels close to 1. In this case, using the training data for evaluation introduces a bias toward zero in the estimated calibration error.}

{
We provide an empirical investigation of our method when different proportions of the training data are used for evaluation. We consider a ResNet-110 model trained on the CIFAR-10 dataset as an illustrative example, representing a setting in which the model overfits the training data. Note that a model assigning probability 1 to the correct label is perfectly calibrated with 0 $\ell_2$-ECE. When a model overfits, it can achieve 100\% accuracy on the training set, with predicted probabilities for the correct labels close to 1. In this case, using the training data for evaluation introduces a bias toward zero in the estimated calibration error. We mix training data with validation data to compute the $\ell_2$-ECE estimator. Figure~\ref{fig:ece_vs_training_data} plots the estimator as a function of the proportion of training data used (e.g., when the proportion is 0.5, we use 5,000 training samples and 5,000 validation samples). As expected, incorporating training data into the evaluation leads to an underestimation of the ECE. In the extreme case, when only training data are used, the estimator is very close to zero. These results highlight that, in the presence of overfitting, using an independent calibration set is crucial for accurately assessing model calibration.
}

        \begin{figure}
            \centering
            \includegraphics[width=0.5\linewidth]{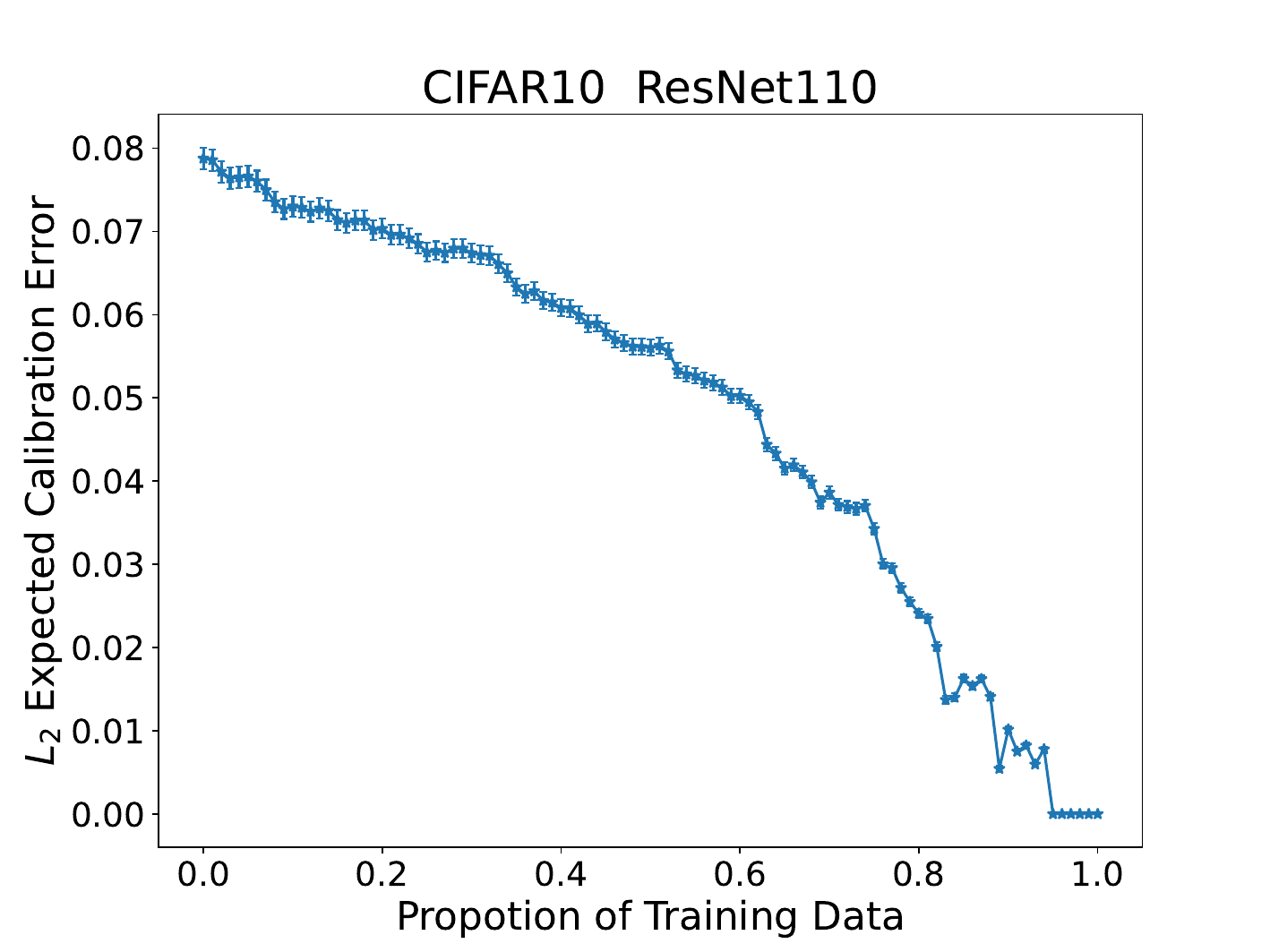}
            \caption{$\ell_2$ ECE estimator for ResNet110 model on CIFAR10 when different proportions of training data are used in evaluation}
            \label{fig:ece_vs_training_data}
        \end{figure}

\subsection{Large Language Models with Verbalized Confidence}
{
We conducted an experiment applying our method to evaluate the top-1 expected calibration error of Large Language Models (LLMs). Specifically, we follow the vanilla verbalized-confidence prompting strategy of \cite{xiong2023can}, treat the model's verbalized confidence as its top-1 predicted probability, and then apply our method to construct confidence intervals.}

{
The verbalized confidence setting in LLMs is not identical to the standard classification based calibration framework. First, the confidence is reported only for the predicted answer, rather than as a full probability vector over all classes. Second, except in multiple-choice settings, the number of candidate classes is not explicitly specified. Moreover, the verbalized confidence for the predicted answer may not be the model's largest class probability in the usual classification setting. For example, in a multiple-choice problem with four options, the model may sometimes output a verbalized confidence close to zero when the question is incomplete or highly ambiguous. Such a value cannot be interpreted as the largest probability among four choices. Therefore, only the top-1 calibration error (if we treat the verbalized confidence as top-1 predicted probability) is naturally defined in this framework. 
} 

{
We consider the Business Ethics subset of MMLU \citep{hendrycks2020measuring}, which is a multiple-choice benchmark used in \cite{xiong2023can}, and evaluate four open-source LLMs: Llama-2-7b-chat \citep{touvron2023llama}, Llama-3-8B-Instruct \citep{grattafiori2024llama}, Qwen-3-8B, and Qwen-3-14B \citep{yang2025qwen3}. The results are reported in Figure~\ref{fig:llm_mmlu}.
This dataset contains only 100 test questions, so accounting for the randomness in the data is especially important. Under our framework, the last three models are not significantly miscalibrated. In contrast, the commonly used ECE estimator \citep{guo2017calibration} computed with 20 bins yields values of \(37.26\%\), \(13.92\%\), \(9.02\%\), and \(7.20\%\), respectively, which might be interpreted as clear evidence of miscalibration. This experiment highlights the practical value of our approach in modern LLM applications. When high-quality evaluation datasets are limited, it is critical to account for statistical uncertainty in model evaluation.
}

    \begin{figure}
        \centering
        \includegraphics[width=0.5\linewidth]{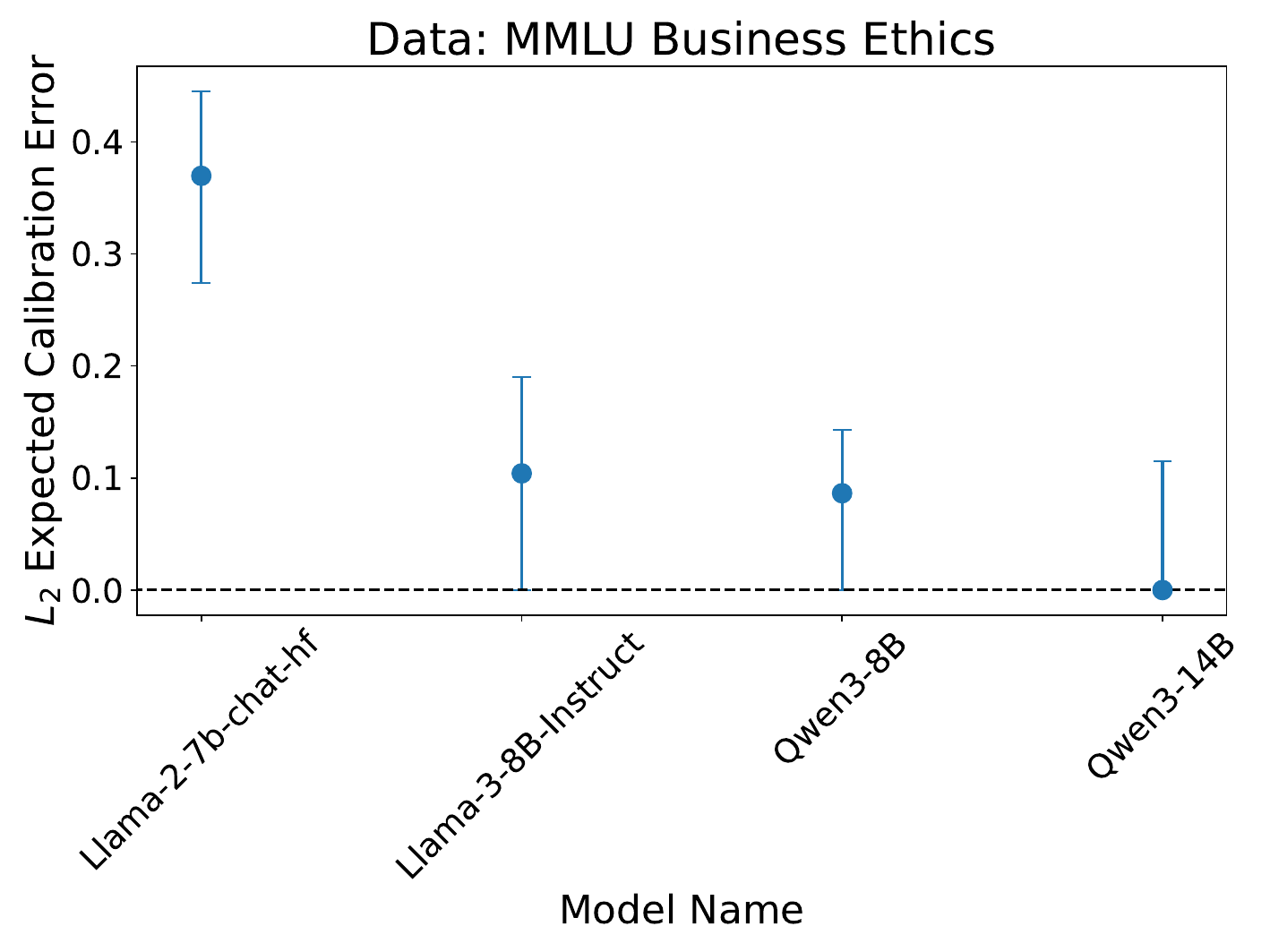}
        \caption{Confidence intervals of top-1 $\ell_2$-ECE of 4 different LLMs on Business Ethics data set of MMLUN. umber of bins $mK$ is set to be $20$.}
        \label{fig:llm_mmlu}
    \end{figure}


\end{document}